\documentclass[accepted]{uai2024} 
                        


\usepackage[american]{babel}

\usepackage{natbib} 
    \bibliographystyle{authordate3}
 \usepackage{mathtools}

\usepackage{booktabs} 
\usepackage{tikz} 
\usepackage{cleveref}
\usepackage{pifont}
\usepackage{subcaption}
\usepackage{amssymb}
\usepackage{amsthm}
\newtheorem{theorem}{Theorem}
\newtheorem{definition}{Definition}
\newtheorem{remark}{Remark}
\Crefname{lemma}{Lemma}{Lemmas}
\Crefname{opup}{Operator Update}{Operator Updates}

\crefname{equation}{Eq.}{Eqs.}
\crefname{figure}{Fig.}{Figs.}
\crefname{theorem}{Th.}{Ths.}
\crefname{algocf}{Algorithm}{Algorithms}


\usepackage{array}
\newcolumntype{L}[1]{>{\raggedright\let\newline\\\arraybackslash\hspace{0pt}}m{#1}}
\newcolumntype{C}[1]{>{\centering\let\newline\\\arraybackslash\hspace{0pt}}m{#1}}
\newcolumntype{R}[1]{>{\raggedleft\let\newline\\\arraybackslash\hspace{0pt}}m{#1}}

\usepackage[ruled,vlined]{algorithm2e}

\SetCommentSty{mycommfont}
\SetKwInput{KwDefine}{Define}
\SetKwInput{KwOptional}{Optional data}
\SetKwInput{KwRequire}{Require}
\SetKwInput{KwEnsure}{Ensure}

\usepackage{etoolbox}
\usepackage{thm-restate}

\newcommand*\squeezespaces[1]{
  \thickmuskip=\scalemuskip{\thickmuskip}{#1}%
  \medmuskip=\scalemuskip{\medmuskip}{#1}%
  \thinmuskip=\scalemuskip{\thinmuskip}{#1}%
  \nulldelimiterspace=#1\nulldelimiterspace
  \scriptspace=#1\scriptspace
}
\newcommand*\scalemuskip[2]{%
  \muexpr #1*\numexpr\dimexpr#2pt\relax\relax/65536\relax
} 

\makeatletter
\patchcmd{\@algocf@start}
  {-1.5em}
  {0pt}
  {}{}
\makeatother

\newcommand{\Pa}{\operatorname{Pa}}
\newcommand{\Ch}{\operatorname{Ch}} \newcommand{\Ne}{\operatorname{Ne}}
\newcommand{\Ad}{\operatorname{Ad}} \DeclareMathOperator*{\argmax}{arg\,max}

\newcommand{\indep}{\perp \!\!\! \perp}

\newcommand{\R}{\mathbb{R}}  
\newcommand{\curlystack}[1]{ \left\{\begin{matrix}#1\end{matrix}\right.}

\DeclareRobustCommand{\parhead}[1]{\noindent \textbf{#1} }

\newcommand{\data}{\bm{D}}
\usepackage[textsize=smaller]{todonotes}

\title{Extremely Greedy Equivalence Search}

\author[1]{\href{mailto:<aon2108@columbia.edu>?Subject=Your UAI 2024 paper}{Achille
Nazaret}{}}
\author[1,2]{David Blei}
\affil[1]{%
    Department of Computer Science\\
    Columbia University\\
    New York\\
    USA } 
\affil[2]{%
    Department of Statistics \\
    Columbia University\\
    New York\\
    USA } 
\begin{document}
\maketitle

\begin{abstract}

The goal of causal discovery is to learn a directed acyclic graph from data. 
One of the most well-known methods for this problem is Greedy Equivalence
Search (GES). 
GES searches for the graph by incrementally and greedily adding or removing edges
to maximize a model selection criterion. 
It has strong theoretical guarantees on infinite data but can fail in practice on
finite data. 
In this paper, we first identify some of the causes of GES's failure, finding that it 
can get blocked in local optima, especially in denser graphs. 
We then propose eXtremely Greedy Equivalent Search (XGES), which involves a new
heuristic to improve the search strategy of GES while retaining its theoretical
guarantees. 
In particular, XGES favors deleting edges early in the search over inserting
edges, which reduces the possibility of the search ending in local optima. 
A further contribution of this work is an efficient algorithmic
formulation of XGES (and GES). 
We benchmark XGES on simulated datasets with known ground truth. 
We find that XGES consistently outperforms GES in recovering the correct graphs, and 
it is 10 times faster.
XGES implementations in Python and C++ are available at \href{https://github.com/ANazaret/XGES}{https://github.com/ANazaret/XGES}.
\end{abstract}

\newcommand{\bm}[1]{\mathbf{#1}}

\newcommand{\cone}{\raisebox{.5pt}{\textcircled{\raisebox{-.9pt} {1}}}}
\newcommand{\ctwo}{\raisebox{.5pt}{\textcircled{\raisebox{-.9pt} {2}}}}


\section{Introduction}\label{sec:introduction} 

In the problem of causal discovery, we observe a multivariate data set
$x^{1:n}$, where each $x^i = (x^i_{1}, ..., x^i_{d})$. Our goal is to
learn a $d$-node directed graphical model for $p(x^i_{1}, \ldots, x^i_{d})$,
i.e., a factorization of the joint distribution. In practice, causal
discovery learns an equivalence class of graphs, called a
\textit{Markov equivalence class}, where each graph in the class
implies the same set of conditional independence statements. 
The goal is to find the class whose set of
independence statements exactly holds in the data.

The challenge to causal discovery is that the space of graphs on $d$
nodes is prohibitively large. To this end, researchers have explored a
number of ideas, including developing efficient tests for conditional 
independence \citep{spirtes2000causation,zhang2011kernel}, restricting
the space of graphs to a smaller class \citep{buhlmann2014cam,fang2023low}
, or searching efficiently the space of graphs.
One of the most theoretically
sound methods is \textit{greedy equivalence search} (GES) 
\citep{chickering2002optimal}. GES posits a proper scoring function 
for the graph (relative to the data) and then greedily optimizes it 
by inserting and deleting edges.

In the limit of large data, GES enjoys theoretical guarantees of
reaching the true graph. However, with finite data, GES can fail to
find the solution. 
In particular, its performance decreases for graphs with non-trivial
number of edges, e.g. more than two parents per node. And so we cannot apply GES to the
kinds of large-scale problems that we regularly encounter in machine
learning. 
To this end, researchers have proposed computationally efficient 
approximations \citep{ramsey2017million} and continuous relaxations with gradient-based 
optimization \citep{zheng2018dags,brouillard2020differentiable}. 
These methods can handle more variables and denser graphs, but they do not enjoy the same guarantees.

In this paper, we improve on GES in two ways. First, we empirically
examine the failure modes of GES and then use this analysis to propose
better heuristics to explore the space of DAGs. Second, we develop
superefficient algorithms for implementing the low-level graph
operations that GES requires. Put together, these innovations describe
extreme GES (XGES), a new algorithm for causal discovery.

XGES is more reliable and scalable than GES, and without sacrificing
its important theoretical guarantees. While GES's performance degrades as 
the density of edges increases, XGES's performance remains stable.
We study XGES on a battery of simulations. We find that XGES outperforms GES and its variants in 
all scenarios, achieving significantly better accuracy and faster runtimes.
 

\parhead{Related Work.}
Causal discovery encompasses a wide range of methods \citep{glymour2019review}.
Here, we focus on score-based methods, which posit a proper scoring rule, and proceed
to find the sets of graphs that maximize it. The Greedy Equivalence Search (GES) algorithm maximizes it using a
greedy search strategy \citep{chickering2002optimal}. Extensions and variants of GES include OPS \citep{chickering2002optimal}, GIES
\citep{hauser2012characterization}, GDS-EEV \citep{peters2014identifiability},  and ARGES \citep{nandy2018high}.

Fast-GES (fGES) is a more efficient implementation of GES\citep{ramsey2017million}. But
we find that it does not reproduce exactly GES's search strategy and hurts performance
(see \Cref{sec:experiments}). Selective GES (SGES) guarantees a polynomial
worst-case complexity but has limited speed improvement in practice
\citep{chickering2015selective}.

Other works improve GES by randomly perturbing the search
\citep{alonso2018use,liu2023improving}. However, they are computationally
expensive. 

More recently, \textit{differentiable causal discovery} methods have been proposed to
maximize the score using gradient-based methods
\citep{zheng2018dags,brouillard2020differentiable,nazaret2023stable}. These methods can model causal
relations using neural networks, but unlike GES and XGES (proposed here), their optimization procedures have no
theoretical guarantees of converging to the true graph. 


\section{Causal Discovery and GES}\label{sec:background}  

We first review the causal discovery problem and the necessary details of the greedy
equivalence search (GES) method.

\subsection{Causal Graphical Models }
Causal discovery aims to identify cause-and-effect relationships between random
variables $\{X_1, ..., X_d\}$. We reason about causal relationships using causal
graphical models (CGM). A CGM has two components:
\begin{enumerate}
\item a directed acyclic graph (DAG), $G^* = (V,E)$, where a node $j \in V$ represents
variable $X_j$ and an edge $(j, k) \in E$ denotes a direct causal link from $X_j$ to
$X_k$,
\item conditional distributions $p(X_j  \mid X_{\Pa_j^{G^*}})$, defining the
distribution of $X_j$ given its causal parents $X_{\Pa^{G^*}_j}$.
\end{enumerate}
The joint distribution of the variables $X_1,..., X_d$ writes:
\begin{equation}\label{eq:factor_joint}
p^*(X) = \prod\limits_{j \in V} p^*(X_j \mid X_{\Pa^{G^*}_j}).
\end{equation}
The goal of causal discovery is to recover the graph $G^*$ from the joint distribution
$p^*$ or from samples drawn from $p^*$. 

However, multiple CGMs with different graphs can generate the same $p^*\!.$ Two
important concepts address this difficulty: faithfulness and Markov equivalence
\citep{spirtes2000causation}.

\parhead{Faithfulness.} In \Cref{eq:factor_joint}, $G^*$ induces a factorization of
$p^*$, which, in turn, induces independencies between variables: each $X_j$ is
independent of its non-descendants given its parents $X_{\Pa^{G^*}_j}$
\citep{pearl1988probabilistic}. Reciprocally, we say that a distribution $p^*$ and a
graph $G^*$ are \textit{faithful} if all the independencies in $p^*$ are exactly those
implied by $G^*$ and no more. \looseness=-1

For example, $H=(\{1,2\}, \{1\rightarrow 2\})$ and $q=q(X_1)q(X_2)$ form a
valid CGM. But the independence $X_1 \indep X_2$ present in $q$ is not suggested by $H$.
Rather, $q$ is faithful to $H'=(\{1,2\}, \varnothing)$, which has no superfluous edges
like $1 \rightarrow 2$.

Limiting the search to faithful graphs reduces the possible CGMs that could have
generated $p^*$. But this is not enough.

\parhead{Markov Equivalence.} 
Two distinct graphs can both be faithful to $p^*$ if they induce the same set of
independencies on $p^*$. For example, $A\rightarrow B \rightarrow C$ and $A \leftarrow B
\leftarrow C$ impose the same set of independencies,  $\{A \indep C \mid B\}$. 

Graphs inducing the same independencies are called \textit{Markov equivalent}, they form
\textit{Markov equivalence classes} (MEC).
Since $G^*$ is identifiable only up to Markov equivalence, the task of causal discovery
becomes finding the MEC of $G^*$. 

\begin{remark}
    With more assumptions (e.g. about the form of $p^*$) or special data (e.g.,
    interventions), other causal discovery methods focus on identifying the possible
    $G^*$ beyond Markov equivalence. See \citet{glymour2019review} for an excellent
    review. We focus on Markov equivalence classes.
\end{remark}

\subsection{Score-based Causal Discovery}
In this work, we assume to have $n$ iid samples, denoted $\data = \{(x_1^{i}, ...,
x_d^{i})\}_{i=1}^n$, from a distribution $p^*$ that is faithful to some $G^*$. We aim to
recover the MEC of $G^*$ from $\data$.

Greedy Equivalence Search (GES) searches for the MEC of $G^*$ among all the possible
MECs. It does so by searching for the MEC whose DAGs maximize a specific score function.
It is a particular case of score-based causal discovery.

{Score-based methods} assign a score $S(G; \data)$ to every possible DAG $G$ given the
data $\data$. The score function $S$ is designed to be maximized by the true graph
$G^*$. This turns causal discovery into an optimization problem. 
\begin{equation}
    G^* = \argmax_{G} S(G; \data)
\end{equation}
Some scores have properties that are important for GES.

\begin{definition}
    A score $S$ is \emph{score equivalent} if it assigns the same score to all the
    graphs in the same MEC.
\end{definition}

A score equivalent $S$ enables defining the score of a MEC as the score of any of its
constituent graphs. 

\begin{definition}[Local Consistency, \citep{chickering2002optimal}]
Let $\data$ contain $n$ iid samples from some $p^*$. Let $G$ be any DAG and $G'$ be a
different DAG obtained by adding the edge $i \rightarrow j$ to $G$. A score $S$ is
\emph{locally consistent} if both hold:
\begin{enumerate}
    \item $X_i \not\hspace*{-0.5mm}\indep_{p^*} X_j \mid X_{\Pa_j^G} \Rightarrow S(G';
    \data)>S(G; \data)$,
\item $X_i \indep_{p^*} X_j \mid X_{\Pa_j^G} \Rightarrow S(G'; \data)<S(G; \data)$.
\end{enumerate}
The independence statements are with respect to $p^*$.

\end{definition}

A locally consistent score increases when we add an edge that captures a dependency not
yet represented in the graph. It decreases when we add an edge that does not capture any
new dependency.

\subsection{Greedy Equivalence Search}
\citet{chickering2002optimal} introduced Greedy Equivalence Search (GES). It is a
score-based method that is guaranteed to return the MEC of $G^*$ whenever the score is
locally consistent. The main characteristic of GES is to navigate the space of MECs.

GES begins with the MEC of the empty DAG and iteratively modifies it to improve the
score. At each step, only a few modifications are allowed. These
modifications are of three types: insertions, deletions, and reversals. 
\begin{figure}
    \centering
    \includegraphics[width=\linewidth]{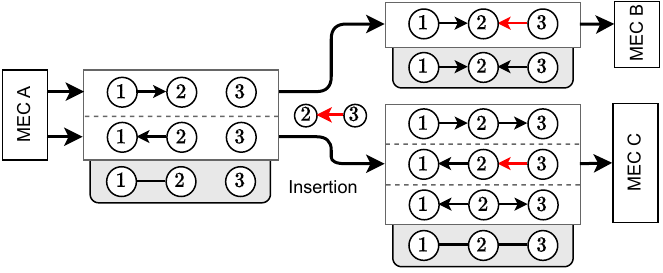}
    \caption{Illustration of insertions from a MEC A to MECs B or C: (i) choose a DAG in
    A, (ii) insert the edge $2\leftarrow3$ to obtain another DAG (iii) consider its MEC.
    Each MEC has all its DAGs on a white plate, and its canonical PDAG on a gray
    plate (see \Cref{sec:implementation} for a definition).}
    \label{fig:mec}
\end{figure}

\parhead{MEC modifications.}An \textit{insertion} on MEC $M$ selects a DAG $G$ in $M$,
adds an edge $x \rightarrow y$ to $G$ to obtain a different DAG $G'$ and replaces $M$
with the MEC of $G'$ (see \Cref{fig:mec}).

A \textit{deletion} on MEC $M$ selects a DAG $G$ in $M$, removes an edge
$x \rightarrow y$ from $G$ to obtain a different DAG $G'$ and replaces $M$ with the MEC
of $G'$.

A \textit{reversal} on MEC $M$ selects a DAG $G$ in $M$, reverses an edge
$x \rightarrow y$ to $x \leftarrow y$ in $G$ to obtain a different DAG $G'$ and replaces
$M$ with the MEC of $G'$.

GES applies these modifications in three separate phases.

\parhead{Phase 1: Insert.} First, GES finds all possible insertions, 
applies the one leading to the largest score increase, and repeats until no insertion
increases the score.

\parhead{Phase 2: Delete.} Then, GES finds all possible deletions, applies the one
leading to the largest score increase, and repeats until no deletion increases the
score.

The MEC obtained at the end of phase 2 is exactly the MEC of $G^*$ if the score is
locally consistent \citep{chickering2002optimal}

\parhead{Phase 3: Reverse.} In theory, phases 1 and 2 are sufficient to recover the
MEC of $G^*$. Yet, \citet{hauser2012characterization} showed that adding a third
phase with reversals can improve the search in practice with finite data (where the score might not be
locally consistent). In this phase, GES finds all possible reversals for the MEC,
applies the one that increases the score most, and repeats until none do.

The pseudocode of GES is given in \Cref{alg:ges-vanilla}.

\begin{algorithm}[t]
    \caption{Greedy Equivalence Search (GES)}\label{alg:ges-vanilla}  
    \DontPrintSemicolon
    
    \KwIn{Data $\data \in \R^{n\times d}$, score function $S$}  
    \KwDefine{$\delta_{\data,M}(O) = S(\text{Apply}(O,M) ; \data) - S(M; \data)$}  
    \KwOut{MEC of $G^*$}  
    $M \leftarrow \{([d], \varnothing)\}$ \tcp*{Empty graph's MEC}  
    $\mathcal{I} \leftarrow$ get all insertions valid for $M$ \;  
    \While{$|\mathcal{I}| > 0$}{  
        $O^* \leftarrow \argmax_{I \in \mathcal{I}}\{\delta_{\data,M}(I)\}$ \tcp*{Get
        best insertion}  
        \lIf{$\delta_{\data,M}(O^*) \leq 0$}{\textbf{break}}  
        $M \leftarrow$ Apply($O^*, M$) \tcp*{Apply best insertion}  
        $\mathcal{I} \leftarrow$ get all insertions valid for $M$ \;  
    }  
    $\mathcal{D} \leftarrow$ get all deletions valid for $M$ \;  
    \While{$|\mathcal{D}| > 0$}{  
        $O^* \leftarrow \argmax_{D \in \mathcal{D}} \{\delta_{\data,M}(D)\}$ \tcp*{Get
        best deletion}  
        \lIf{$\delta_{\data,M}(O^*) \leq 0$}{\textbf{break}}  
        $M \leftarrow$ Apply($O^*, M$) \tcp*{Apply best deletion}  
        $\mathcal{D} \leftarrow$ get all deletions valid for $M$ \;  
    }  
    \tcc{(Optional) 3rd phase like above but with reversals}  
    \Return $M$ \;  
\end{algorithm}

\parhead{Correctness.} GES's correctness relies on two properties:
    (i) the greedy scheme will reach the global maximum of any locally consistent
    score \citep[Lemma 10]{chickering2002optimal}, and 
    (ii) the true graph $G^*$ and its MEC are the unique global maximizers of any
    locally consistent score \citep[Proposition 8]{chickering2002optimal}.

With these two properties, GES is guaranteed to recover the MEC of $G^*$ when the score
is locally consistent.
\label{sec:greedy_search}

\parhead{The BIC score.} A score commonly used by GES is the Bayesian Information Criterion
(BIC) \citep{schwarz1978estimating}. Given a model class
$\mathcal{M}_G$ for each $G$, and our data $\data$ with its $n$ samples, the BIC defines a score:
\begin{equation}
    S(G; \data) = \log p_{\hat\theta}(\data ; G) - \frac{\alpha}{2} \log n \cdot |\hat\theta|,
\end{equation}
where
$\alpha$ is a hyperparameter, $\hat \theta$ is the likelihood maximizer over
$\mathcal{M}_G$, and the number of parameters $|\hat\theta|$ is usually the number of edges in $G$. The BIC is a model selection criterion trading off log-likelihood and model complexity. Models with higher BIC are preferred. The original BIC has $\alpha=1$.

\parhead{Gaussian Linear Models. } A common model class used for continuous data with the BIC is the class of Gaussian linear models, where given a graph $G$, each variable is Gaussian with a linear
conditional mean and a specific variance:
\begin{equation}\textstyle
    p_{\theta}(x_j \mid x_{\Pa_j^G}) \sim \mathcal{N}\Bigl(\sum_{k \in \Pa_j^G} 
    \theta_{jk} x_k + \theta_{j0}, \theta_{j(d+1)}^2\Bigr).
\end{equation}
For Gaussian linear models, the BIC is score equivalent. It can also be locally
consistent under some conditions.

\begin{theorem}[Local Consistency of BIC \citep{haughton1988choice,chickering2002optimal}]
    For $\alpha>0$, the BIC for Gaussian linear models is locally consistent once $n$
    is large enough.
    \label{th:local_consistency}
\end{theorem}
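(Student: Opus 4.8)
The plan is to exploit the fact that the BIC decomposes as a sum of local terms, one per node, each depending only on a node and its parents; for Gaussian linear models the maximum-likelihood fit decouples across nodes, so both the log-likelihood and the parameter count are additive over $V$. Since $G'$ differs from $G$ only by the edge $i \to j$, every local term except the one at node $j$ is identical in $S(G';\data)$ and $S(G;\data)$. Writing $P = \Pa_j^G$ and $P' = P \cup \{i\}$, I would first reduce the whole comparison to a single univariate-regression problem: the difference $S(G';\data) - S(G;\data)$ equals the change in the maximized conditional log-likelihood of $X_j$ when the regressor set grows from $P$ to $P'$, minus the extra penalty $\frac{\alpha}{2}\log n$ for the one additional coefficient.

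Second, I would make this explicit using the closed form of the Gaussian MLE. For a least-squares regression of $X_j$ on a set $S$, the maximized log-likelihood is $-\frac{n}{2}\log(2\pi\hat\sigma_S^2) - \frac{n}{2}$, where $\hat\sigma_S^2$ is the empirical residual variance, so the constant and the $2\pi$ terms cancel in the difference and
\begin{equation*}
  S(G';\data) - S(G;\data) = \frac{n}{2}\log\frac{\hat\sigma_P^2}{\hat\sigma_{P'}^2} - \frac{\alpha}{2}\log n.
\end{equation*}
The whole theorem then rests on comparing the growth rate of the log-variance-ratio term against the logarithmic penalty in the two cases of the definition.

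Third comes the case analysis. When $X_i \not\indep_{p^*} X_j \mid X_P$, the extra regressor genuinely reduces the population residual variance, so by the law of large numbers $\hat\sigma_P^2 / \hat\sigma_{P'}^2 \to \sigma_P^2 / \sigma_{P'}^2 > 1$; the first term therefore grows like $\Theta(n)$ and dominates the $\Theta(\log n)$ penalty, giving a strictly positive difference for large $n$. When $X_i \indep_{p^*} X_j \mid X_P$, Gaussianity implies the corresponding population partial correlation vanishes, hence the population coefficient on $X_i$ is zero and $\sigma_P^2 = \sigma_{P'}^2$, so the leading-order term cancels. Here I would invoke the standard asymptotics of the Gaussian likelihood-ratio statistic for nested regressions, $n\log(\hat\sigma_P^2/\hat\sigma_{P'}^2) \xrightarrow{d} \chi^2_1$, so that the first term is only $O_p(1)$ while the penalty $\frac{\alpha}{2}\log n \to \infty$, forcing the difference to be eventually negative.

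The main obstacle is the independence case, because there the first-order effect vanishes and one must control the \emph{second-order} fluctuations of the empirical residual variances rather than their limits. Establishing $n\log(\hat\sigma_P^2/\hat\sigma_{P'}^2) = O_p(1)$ requires either a Taylor expansion of the maximized log-likelihood around the true parameter combined with the central limit theorem for the empirical covariance matrix, or a direct appeal to Wilks' theorem. I would lean on the classical result of \citet{haughton1988choice} for this bound, and note that the same argument explains why every $\alpha>0$ suffices: the penalty only needs to outgrow a bounded-in-probability quantity, which $\frac{\alpha}{2}\log n$ does for any positive $\alpha$, while still being dominated by the $\Theta(n)$ gain in the dependence case.
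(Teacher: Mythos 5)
The paper does not prove this theorem; it is quoted as a known result of \citet{haughton1988choice} and \citet{chickering2002optimal}, so there is no in-paper argument to compare against. Your sketch is the standard proof underlying that citation and is sound: the reduction to a single local score at node $j$, the explicit form $\frac{n}{2}\log(\hat\sigma_P^2/\hat\sigma_{P'}^2)-\frac{\alpha}{2}\log n$, the $\Theta(n)$ versus $\Theta(\log n)$ comparison in the dependence case, and the $\chi^2_1$ (Wilks) bound in the independence case are exactly the ingredients used in those references. The only caveat is that your argument, like the cited one, establishes the two inequalities with probability tending to one rather than almost surely for all large $n$, which is the intended (and standard) reading of ``once $n$ is large enough.''
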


Hence, for Gaussian linear models, GES is guaranteed to recover the MEC of $G^*$ with
infinite data. 
In practice, however, data is finite and GES can return incorrect MECs. 

\parhead{Example of Failure.} In \Cref{fig:simulation_main}, we report the performance
of GES (orange) on simulated data, along with its variants and the proposed XGES. We
simulate CGMs $(G^*, p^*)$ for $d \in\{25,50\}$ variables, $\rho d \in \{2d,3d,4d\}$
edges ($\rho$ is an edge density parameter) and draw $n=10,000$ samples from $p^*$. The
simulation is detailed in \Cref{sec:experiments:evaluation_setup}. We compare the
methods' results to the true graph $G^*$ using the structural Hamming distance for MECs
(SHD), which counts the number of different edges between graphs of two MECs (see
\Cref{sec:experiments:evaluation_setup}). \Cref{fig:simulation_main} shows that GES can fail (SHD $> 0$),
especially in denser graphs.

In addition, we confirm that including the third phase of reversals in GES improves its
performance (GES-r, in green).

\begin{figure}[t]
    \centering
    \includegraphics[width=1\linewidth]{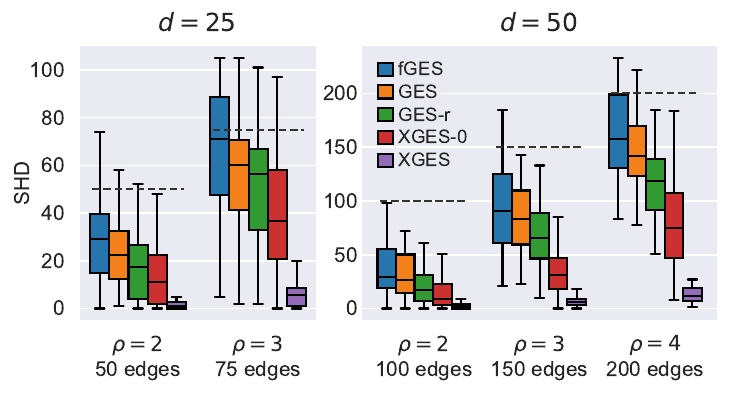}
    \caption{Performance comparison of GES and XGES variants, measured with SHD for
    different edge densities $\rho$. XGES heuristics outperform GES and its variants in
    all scenarios. The dashed lines indicate the number of edges of the true graph. Each boxplot
    is computed over 30 seeds.\looseness=-1}
    \label{fig:simulation_main}
\end{figure}


\section{Extremely Greedy Equivalence Search}\label{sec:xges}

In this section, we first investigate why GES can fail and then
propose simple solutions to mitigate failure.

\subsection{Scenarios of GES failure}
\label{sec:ges_failure}
As reviewed in \Cref{sec:greedy_search}, GES's correctness relies on two conditions: (i)
the ability of its greedy search to find the score's global maximizer, and (ii) the true
graph's MEC being the score's global maximizer (and the only one).

These two properties might not hold if the data cannot render $S$
locally consistent. We investigate GES failure.
Is the issue that GES cannot reach the global maximizer with greedy search? If so,
changing the search heuristic could help bypass local maxima. Or is it that GES
effectively finds the global maximizer, but this maximizer is not the true graph?
In this case, designing a new score might help. To check these hypotheses, we conduct
empirical experiments.
\begin{figure}
    \centering
    \includegraphics[width=\linewidth]{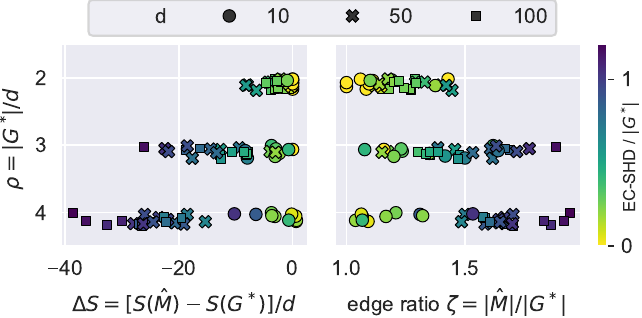}
    \caption{
    Empirical study of GES failure, on 90 simulated datasets with varying variables $d$ and graph densities $\rho$. (left) Differences in BIC
    between GES and ground-truth are negative. GES does not find the score's
    global maximum. (right) Ratios of GES-edges to true edges exceed 1. GES returns many more edges than the true graph.
}
    \label{fig:ges_fail}
\end{figure}

\parhead{Greedy Optimization Fails.}
We apply GES on the data simulated for \Cref{fig:simulation_main}, this time including
$d \in \{10,50,100\}$ variables. We obtain a MEC $\hat M$ that we compare to $G^*$ with:
\begin{equation}
    \Delta S = \frac{S(\hat M; \data) - S(G^*; \data)}{d}.
\end{equation}
A negative $\Delta S$ indicates that GES failed to identify a global maximizer and that
$G^*$ still scores higher than $\hat M$ (note that it does not imply $G^*$ is the global maximizer). A positive $\Delta S$ demonstrates
that $G^*$ is no longer the global maximizer and that GES legitimately identified a
MEC with high score.

In \Cref{fig:ges_fail} (left), we consistently find $\Delta < 0$ across different
numbers of variables $d$ and edge densities $\rho$. This is evidence that GES fails to
reach the global maximum and that the true graph $G^*$ still has a better score than
$\hat M$.

We measure next the ratio $\zeta = |\hat M| /|G^*|$ between the number of edges found by
GES and the number of true edges. \Cref{fig:ges_fail} (right) shows that GES
over-inserts edges i.e. $\zeta > 1$, especially with dense graphs. The idea behind GES
is to over-insert edges in the first phase and then delete them in the second phase.
However, we hypothesize that over-inserting may lead GES into local maxima before the
second phase can correct it.

Following these two observations, we focus on ensuring that GES reaches the global
maximizer. To do so, we design novel search heuristics aimed at preventing
over-insertion.

\subsection{The Heuristic XGES-0}\label{sec:xges-0}

GES considers insertions, deletions, and optionally reversals in three separate phases.
Rather, we consider all operations simultaneously, where deletions, insertions, and
reversals can interleave in any order. And when both insertions and deletions can 
increase the score, we prioritize deletions. 

\parhead{Heuristic XGES-0.} At each step, identify all the valid insertions,
deletions, and reversals. If some deletes would increase the score, apply the best one.
Otherwise, if some reversals would increase the score, apply the best one. Otherwise,
apply the best insert. Repeat until no deletions, reversals, or insertions can increase
the score.

\begin{algorithm}[t]
    \DontPrintSemicolon  
    \KwIn{Data $\data \in \mathbb{R}^{n\times d}$, score $S$.}
    \KwDefine{$\delta_{\data,M}(O) = S(\text{Apply}(O,M) ; \data) - S(M; \data)$}  
    \KwOut{MEC of $G^*$}  
    $M \leftarrow \{([d], \varnothing)\}$\;  
    $\mathcal{I},\mathcal{D},\mathcal{R} \leftarrow$ all insertions, deletions, reversals
    valid for $M$\;  
    \While{$|\mathcal{I}| + |\mathcal{D}| + |\mathcal{R}| > 0$}{  
        \uIf{$|\mathcal{D}| > 0$ \textnormal{\textbf{and}} $\max_{D \in \mathcal{D}}
        \{\delta_{\data,M}(D)\} \geq 0 $}{  
            $O^* \leftarrow \argmax_{D \in \mathcal{D}} \{\delta_{\data,M}(D)\}$ \;  
        }  
        \uElseIf{$|\mathcal{R}| > 0$ \textnormal{\textbf{and}} $\max_{R \in \mathcal{R}}
        \{\delta_{\data,M}(R)\} > 0 $}{  
            $O^* \leftarrow \argmax_{R \in \mathcal{R}} \{\delta_{\data,M}(R)\}$\;  
        }  
        \uElseIf{$|\mathcal{I}| > 0$ \textnormal{\textbf{and}} $\max_{I \in \mathcal{I}}
        \{\delta_{\data,M}(I)\} > 0 $}{  
            $O^* \leftarrow \argmax_{I \in \mathcal{I}} \{\delta_{\data,M}(I)\}$\;  
        }  
        \uElse{  
            \textbf{break} \tcp*{No more operations available}  
        }  
        $M \leftarrow$ Apply($O^*, M$) \;
        $\mathcal{I},\!\mathcal{D},\!\mathcal{R} \leftarrow\!$ all insertions, deletions, reversals
        valid for \!$M$\;  
    }  
    \Return $M$\;  
    \caption{XGES-0.   
    }  
    \label{alg:xges-0}  
\end{algorithm}

We call this heuristic XGES-0 for eXtremely Greedy Equivalence Search and we detail it in
\Cref{alg:xges-0}. XGES-0 retains the same theoretical correctness as GES.

\begin{restatable}[]{theorem}{thXgesZero}
    For any locally consistent score $S$, the MEC $\hat M$ returned by XGES-0 contains
    the true graph $G^*$.
    \label{thm:xges0}
\end{restatable}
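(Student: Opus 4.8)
The plan is to reduce the claim to a characterization of the \emph{terminal} MEC produced by XGES-0 and then reuse the two structural facts that underlie the correctness of GES \citep{chickering2002optimal}. The essential observation is that GES's guarantee does not really depend on its three-phase ordering: it depends only on the configuration at which the search has become a local optimum simultaneously for insertions and for deletions. XGES-0 stops at exactly such a configuration, so the same structural lemmas apply regardless of the fact that deletions, insertions, and reversals were interleaved in arbitrary order.

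First I would characterize the terminal state. XGES-0 exits the while loop only when none of the three branches fires. Because the deletion branch triggers whenever $\max_{D\in\mathcal{D}}\delta_{\data,M}(D)\ge 0$, at termination every valid deletion $D$ satisfies $\delta_{\data,\hat M}(D)<0$; because the insertion and reversal branches require a strictly positive gain, at termination every valid insertion $I$ satisfies $\delta_{\data,\hat M}(I)\le 0$. Hence $\hat M$ is a local optimum with respect to \emph{both} single-edge insertions and single-edge deletions. The reversals only ever contribute strictly score-increasing moves along the way, plus the harmless terminal condition that no reversal helps, so they play no role in the argument below.

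Next I would invoke insert-optimality. Since $S$ is locally consistent and no valid insertion increases the score, the DAG of $\hat M$ must be an independence map of $p^*$: if it were not, some pair $(i,j)$ would satisfy $X_i \not\indep_{p^*} X_j \mid X_{\Pa_j^{G}}$ for a DAG $G\in\hat M$ in which $i\to j$ can be added, and by the first local-consistency condition that insertion would strictly increase the score, contradicting termination. This is precisely the step Chickering uses for the insert phase, and it is a statement about a local optimum, not about how that optimum was reached. I would then invoke delete-optimality at the \emph{same} terminal state: given that the DAG of $\hat M$ is already an I-map of $p^*$, Chickering's argument (via his proof of Meek's conjecture) produces, whenever $\hat M \neq \mathrm{MEC}(G^*)$, a sequence of covered-edge reversals and edge deletions from a DAG of $\hat M$ to a DAG of $\mathrm{MEC}(G^*)$ whose first deletion removes a superfluous edge; by the second local-consistency condition that deletion strictly increases the score and corresponds to a valid delete operator on $\hat M$, contradicting the terminal condition on deletions. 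Therefore $G^*\in\hat M$, i.e., $\hat M = \mathrm{MEC}(G^*)$.

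The crux, and the step I expect to be the main obstacle, is making precise that these two structural lemmas \emph{compose at a single terminal configuration}. In the original GES proof the I-map property is established at the end of the insert phase and then maintained as an invariant throughout the delete phase, which is only legitimate because the phases are separated. XGES-0 has no such separation, so I must argue that I-mapness at termination follows from insert-optimality \emph{alone}, independently of any deletions that were interleaved, and that delete-optimality can then be applied to that very I-map without re-running an insert phase. Checking that the valid-operator sets XGES-0 maintains coincide with the operators Chickering's lemmas reason about (so that ``no score-increasing insert/delete exists'' really matches his statements), and verifying termination so that $\hat M$ is well defined, are the remaining routine obligations.
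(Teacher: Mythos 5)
Your proposal is correct and follows essentially the same route as the paper's proof: establish termination, then apply Chickering's Lemma~9 (insert-local-optimality implies the I-map property \texttt{P1}) and Lemma~10 (\texttt{P1} plus delete-local-optimality implies $\hat M = M^*$) at the single terminal MEC. The ``composition'' obstacle you flag is already resolved by the way the paper restates these lemmas (\Cref{thm:chickering9,thm:chickering10b}) as properties of a given MEC and its operator neighborhoods $\mathcal{I}(\hat M)$, $\mathcal{D}(\hat M)$ rather than of the search trajectory, which is exactly the observation you make.
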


The proof leverages the same theorems as those used to prove GES's correctness. It is
provided in \Cref{appendix:sec:theoretical_guarantees_xges0}.

In \Cref{fig:simulation_main}, we find empirically that XGES-0 (red) obtains MECs
with better scores and closer to $G^*$ than GES. Early deletions effectively reduce
encounters with local maxima.

\begin{remark}
    Alongside GES, \citet{chickering2002optimal} proposed a variant called OPS that
    also considered insertions and deletions simultaneously. But OPS did not prioritize
    deletions over insertions, resulting in no improvements to GES in practice. 
    We provide more details in \Cref{appendix:sec:empirical:ops}.
\end{remark}

\subsection{The Heuristic XGES}

Building upon XGES-0, we introduce the heuristic XGES. XGES complements XGES-0. It
repeatedly uses XGES-0, each time deleting an edge that causes a local maximum.

\parhead{Heuristic XGES.} XGES begins by applying XGES-0 until no operations can
increase the score. Then, it enumerates all valid deletions, all of which will
decrease the score. For each deletion, XGES copies the MEC, applies the deletion, and resumes XGES-0 on the
copy, but without ever reinserting the edge removed by the deletion.
If the final score is worse than the original MEC, the copy is discarded, and the search
continues with the next deletion. If the final score is better, the copy becomes the new
MEC. XGES then restarts with the new MEC and all its new deletions. XGES stops once all
deletions of a MEC have been tried. We provide the pseudocode of XGES in \Cref{alg:xges}
in \Cref{appendix:sec:theoretical_guarantees_xges}.

\parhead{Intuition.} The XGES heuristic aims to remove incorrect edges that were
inserted early in the search and might be causing local maxima, preventing their
deletion. By forcefully deleting these edges, XGES can get around the local maximum and
discover better graphs.

\begin{restatable}[]{theorem}{thXges}
    For any score $S$, XGES returns a MEC $\hat M$ with a higher or equal score than
    XGES-0. If $S$ is locally consistent, then $\hat M$ contains the true graph $G^*$. 
    \label{thm:xges}
\end{restatable}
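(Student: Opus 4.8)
The plan is to split the statement into its two assertions and handle them in sequence, since the second follows from the first combined with the already-proved \Cref{thm:xges0}.

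\textbf{Part 1: XGES does not decrease the score relative to XGES-0.} This is the easier half and I would argue it directly from the algorithm's construction. XGES begins by running XGES-0, obtaining a MEC $M_0$ with score $S(M_0;\data)$, which matches the XGES-0 output. The key observation is that XGES only ever replaces the current MEC with a copy whose \emph{final} score is strictly better than the current one: each candidate is produced by forcibly deleting an edge and resuming XGES-0 on the copy, and by the heuristic's stated rule the copy is adopted only if its final score exceeds the current MEC's score, otherwise it is discarded. Hence the sequence of accepted MECs $M_0, M_1, M_2, \dots$ has strictly increasing scores, so the returned $\hat M$ satisfies $S(\hat M;\data) \ge S(M_0;\data)$. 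I would also note that the process terminates: since the score strictly increases at every acceptance and the space of MECs on $d$ nodes is finite, only finitely many replacements can occur, and each inner XGES-0 run terminates by the argument underlying \Cref{thm:xges0}.

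\textbf{Part 2: under local consistency, $\hat M$ contains $G^*$.} Here I would invoke the correctness machinery already cited for GES and XGES-0. The output $\hat M$ of XGES is, by construction, the result of a \emph{completed} run of XGES-0 (the last accepted copy was produced by resuming XGES-0 to convergence, and if no copy is accepted then $\hat M = M_0$ is itself an XGES-0 output). A completed XGES-0 run is one in which no insertion, deletion, or reversal increases the score. The crux is that \Cref{thm:xges0} shows any such terminal state of XGES-0 contains $G^*$ when $S$ is locally consistent. The only subtlety is that the final inner run of XGES-0 operates under the constraint that the forcibly deleted edge may never be reinserted; I must therefore check that $\hat M$ is still a genuine local optimum of the \emph{unconstrained} operator set. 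I would resolve this by appealing to Part 1: since $S(\hat M;\data) \ge S(M_0;\data)$ and $M_0$ already contains $G^*$ by \Cref{thm:xges0}, and since a locally consistent score's global maximizer is exactly the MEC of $G^*$ (the uniqueness property, \citep[Proposition 8]{chickering2002optimal}, quoted in the correctness discussion of GES), any MEC scoring at least as high as the $G^*$-containing $M_0$ must itself contain $G^*$.

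The main obstacle I anticipate is reconciling the ``no reinsertion'' constraint in the inner XGES-0 runs with the claim that the output is a true fixed point of XGES-0. The cleanest route, as sketched above, is to avoid re-deriving a fixed-point/local-optimality argument for the constrained search and instead lean entirely on the score comparison: Part 1 guarantees $S(\hat M) \ge S(M_0)$, and the already-established facts that $M_0 \supseteq G^*$ (from \Cref{thm:xges0}) together with the uniqueness of the locally consistent optimum force $\hat M \supseteq G^*$. If one instead wants a self-contained local-optimality argument, the harder step is showing the constraint is never binding at optimum — i.e., that an edge whose forced deletion led to improvement is not one the unconstrained XGES-0 would want back — which is exactly why the score-dominance route is preferable.
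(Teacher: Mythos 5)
Your proposal is correct and takes essentially the same route as the paper: Part 1 follows directly from the acceptance rule (a copy is adopted only if it strictly improves the score), and Part 2 rests on \Cref{thm:xges0} giving that the initial XGES-0 output already equals $M^*$, combined with the uniqueness of the global maximizer of a locally consistent score. The paper phrases the conclusion as ``no replacement is ever executed, since $M^*$ already attains the maximum score,'' while you phrase it as ``any MEC scoring at least $S(M^*)$ must be $M^*$'' --- the same argument; your observation that the no-reinsertion constraint makes a fixed-point argument delicate, and that the score-dominance route sidesteps it, is exactly the right instinct.
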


The proof follows from the design of XGES and by \Cref{thm:xges0}. 

\Cref{fig:simulation_main} illustrate the performance of XGES (purple), showing that it
significantly improves over all other GES variants.  XGES enables
non-trivial causal discovery in denser graphs ($\rho\geq2$) with many variables
($d\geq50$). However, XGES is computationally more expensive than
XGES-0. To alleviate this, we develop an efficient implementation for it.


\section{Efficient Algorithm}\label{sec:implementation} GES, if naively implemented, is
slow. In this section, we develop new ways of implementing its details to significantly speed it up. As we will see in the empirical studies, these details are crucial to scaling up XGES to large, dense graphs. We now review how GES manipulates MECs in
practice and then show how to make it more efficient.

\subsection{Manipulating MECs with CPDAGs}\label{sec:manipulating_mecs}
MECs are sets of DAGs whose size can grow exponentially with the number of nodes $d$
\citep{he2015counting}. To manipulate them practically, GES builds on the
following theorem.

\begin{theorem}[\citep{verma1991equivalence}]
    Two DAGs are Markov equivalent if and only if they have the same skeletons and the
    same v-structures. 
    \label{thm:mec}
 \end{theorem}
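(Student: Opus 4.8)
The plan is to route everything through d-separation. Two DAGs entail the same conditional independencies exactly when they encode the same d-separation relations (soundness and completeness of d-separation for the class of distributions Markov to a DAG), so Markov equivalence reduces to equality of d-separation. It then suffices to prove that equal d-separation relations are equivalent to sharing a skeleton together with a set of v-structures (unshielded colliders $i \to k \leftarrow j$ with $i,j$ non-adjacent). I would prove the two implications in turn.

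For the direction ``same d-separation $\Rightarrow$ same skeleton and v-structures'', I would first recover the skeleton from separation alone: two nodes $i,j$ are adjacent iff no set $S \subseteq V\setminus\{i,j\}$ d-separates them (when non-adjacent, $\Pa_i$ or $\Pa_j$ separates them), so equal d-separation forces equal adjacency and hence equal skeletons. Given the common skeleton, I would then characterize each unshielded triple $(i,k,j)$ by a separation test: the triple is a v-structure iff no d-separating set of $i,j$ contains $k$. Indeed, conditioning on a collider $k$ activates the path, so $k$ can lie in no separating set; for a non-collider, conditioning on $k$ blocks the path and one exhibits a separating set containing $k$. Since the test refers only to the shared d-separation relation, the v-structures coincide.

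For the converse, I would argue by contradiction: suppose $X,Y$ are d-separated by $S$ in $G_1$ but d-connected in $G_2$, and take an active trail $T$ of minimum length in $G_2$. The aim is to produce a corresponding active trail in $G_1$ (whose underlying path exists since the skeletons agree), contradicting separation in $G_1$. The key claim is that along a minimal active trail every collider $a \to b \leftarrow c$ is unshielded --- otherwise the chord $a - c$ would let one reroute to a strictly shorter active trail --- so it is a v-structure, hence also a collider in $G_1$; and every non-collider triple on $T$ is likewise unshielded, so if $G_1$ oriented it as a collider it would be a v-structure of $G_1$ absent from $G_2$, a contradiction. With all collider and non-collider statuses preserved and $S$ fixed, $T$ stays active in $G_1$; symmetry completes the argument.

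The main obstacle is this converse, and precisely the point that v-structures constrain only unshielded colliders: $G_1$ and $G_2$ may orient shielded triples differently, and one must show such differences never matter. The minimal-trail / no-chord lemma is what closes this gap, and making it rigorous requires a case analysis verifying that a chord between non-consecutive trail nodes always yields a strictly shorter active trail, while tracking the descendant-in-$S$ activation condition at colliders. If that trail surgery proves unwieldy, an alternative route is to invoke Chickering's theorem that DAGs with the same skeleton and v-structures are connected by a sequence of covered-edge reversals, each of which manifestly preserves d-separation and hence Markov equivalence.
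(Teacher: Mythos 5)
The paper does not prove this statement: it is imported verbatim from Verma and Pearl and used as a black box, so there is no in-paper argument to compare against; what follows is an assessment of your attempt on its own terms. Your reduction of Markov equivalence to equality of d-separation relations, and your forward direction, are sound and standard: the skeleton is recovered because non-adjacent $i,j$ are separated by $\Pa_i$ or $\Pa_j$ (whichever of the two is not an ancestor of the other) while adjacent nodes are separated by nothing, and an unshielded triple $(i,k,j)$ is a collider exactly when $k$ lies in no separating set of $i,j$ (for a non-collider, $k$ must lie in \emph{every} separating set, else the two-edge path through $k$ is active, which in particular gives you the separating set containing $k$ that you need).

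The gap is in the converse. Your key claim is that on a minimum-length active trail \emph{every} triple, collider or not, is unshielded. This is true for colliders (the chord shortcut works, using that the deleted collider is a descendant of the node that may become a new collider, so the activation condition survives), but it is false for non-colliders. Take $X \to a$, $b \to a$, $b \to c$, $a \to c$, $c \to Y$ with $S = \{a\}$: the unique (hence minimal) active trail from $X$ to $Y$ is $X \to a \leftarrow b \to c \to Y$, and the non-collider triple $(a,b,c)$ on it is shielded by the chord $a \to c$. The shortcut through that chord fails precisely because it would convert the active collider at $a$ into a non-collider while $a \in S$. Since shielded triples are not v-structures, $G_1$ is in general free to orient such a triple as a collider (e.g., the two orientations of a complete triangle with the middle node as non-collider versus collider are equivalent DAGs); your transferred trail then acquires a collider in $G_1$ at a node known only to satisfy $b \notin S$, and its activity is no longer guaranteed. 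This is exactly the hard part of the theorem, and the sketch does not close it. Your fallback --- Chickering's result that DAGs with equal skeletons and v-structures are connected by covered-edge reversals, each of which preserves the factorization and hence d-separation --- is the standard correct route, but it is itself a nontrivial theorem that must be proved or cited, so as written the converse direction is not established.
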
 

 The \textit{skeleton} of a graph is the undirected graph obtained
 by removing the direction of all edges; a \textit{v-structure} is a triple of nodes
such that $x \rightarrow y \leftarrow z$ with no edge between $x$ and
 $z$.\looseness=-1

\Cref{thm:mec} shows that all the graphs of a MEC share the same skeleton and differ
only on edges that can be reversed without changing the set of v-structures. So within a
MEC, some edges are consistently oriented in one direction, while others may have
different orientations between graphs. They are respectively called \textit{compelled}
and \textit{reversible} edges.

\parhead{CPDAGs.}
Each MEC can be represented by a \textit{partially directed acyclic
graph} (PDAG). A PDAG is a graph with both directed and undirected edges and no cycles of
directed edges. The \textit{canonical PDAG of a MEC} contains all the compelled edges as directed
edges and all the reversible edges as undirected edges (see \Cref{fig:mec}). A PDAG that is the canonical
representation of a MEC is called a \textit{completed PDAG} (CPDAG).
A PDAG $P$ that is not a CPDAG but has the same skeleton and v-structures as another CPDAG $P'$
can be transformed into $P'$ with a method called 
\textit{completing} the PDAG \citep{meek1995causal,chickering2002learning}.

We will use the following terminology when discussing a PDAG. For node $x$: its \textit{neighbors}
$\Ne(x)$ are its neighbors from undirected edges, its \textit{children} $\Ch(x)$ are its
children from directed edges, its \textit{parents} $\Pa(x)$ are its parents from
directed edges, and its \textit{adjacent} nodes $\Ad(x)$ are any of all three. 
A \textit{semi-directed path} from $x$ to $y$ is a path from $x$ to $y$ with edges that
are either undirected or directed toward the direction of $y$. A \textit{clique} is a
set of all adjacent nodes.

\parhead{Operators on CPDAGs.}
GES associates each operation on a MEC $M$  with an operator acting on its CPDAG $P$,
such that an operation changing $M$ into $M'$ is associated with an operator changing
$P$ into $P'$, the CPDAG of $M'$. 

For insertions, the operators used by GES are of the form \text{Insert}$(x,y,T)$ where
$x,y \in V$ and $T \subset V$. The action of Insert$(x,y,T)$ on $P$ is to insert the
edge $x \rightarrow y$, orient any undirected edges $t - y$ as $t \rightarrow y$ for $t
\in T$ and finally complete the resulting PDAG into a CPDAG. 

Given a MEC $M$ and its CPDAG $P$, \citet{chickering2002optimal} shows that there is a
bijection between (a) the set of possible insertions on $M$ and (b) the set of operators
Insert$(x,y,T)$ satisfying the following validity conditions relative to $P$:
\begin{align}
    \bm{I1.}\quad&x \not\in \mathrm{Ad}(y). \label{eq:valid1} \\
    \bm{I2.}\quad&T \subset \mathrm{Ne}(y) \setminus \mathrm{Ad}(x). \label{eq:valid2} \\
    \bm{I3.}\quad&(\mathrm{Ne}(y) \cap \mathrm{Ad}(x)) \cup T \text{ is a clique.} \label{eq:valid3} \\
    \bm{I4.}\quad&\text{All semi-directed paths from } y \text{ to } x \text{ have a node in }\nonumber \\
    &(\mathrm{Ne}(y) \cap \mathrm{Ad}(x)) \cup T. \label{eq:valid4} 
\end{align}
Insert operators satisfying these conditions, with $\Ad, \Ne,$ $\Pa,$ clique and paths
computed in $P$, are called \textit{valid} for $P$. To navigate from one MEC to another
with an insertion, GES applies the corresponding valid Insert operator from one CPDAG to
another.

\parhead{Score of Operators.}
The increase in score after an Insert operation can be efficiently computed when the
score is BIC. Indeed, the BIC for a graph $G$ equivalently rewrites as: 
\begin{equation}
    \textstyle
    S(G; \data) = \sum_{j=1}^d s(j, \Pa_j^G ; \data), \label{eq:bic-decomp}
\end{equation}
where $s(j, \Pa_j^G ; \data)$ is called the \textit{local score} of $j$ and equals:
\begin{equation}
    \sum\limits_{i=1}^n \log p_{\hat
\theta}(x^i_j | x^i_{\Pa_j^G}) - \frac{\alpha}{2}\log n \cdot |\Pa_j^G|.
\end{equation}
A score decomposing as \Cref{eq:bic-decomp} is called \textit{decomposable}.

With a decomposable score, the increase in score for an operator Insert($x,y,T$) applied
to $P$ is:
\begin{multline}
    \delta = s(y, (\Ne(y) \cap \Ad(x)) \cup T \cup \Pa(y) \cup \{x\}) \\
    - s(y, (\Ne(y) \cap \Ad(x)) \cup \Pa(y )),
    \label{eq:insert-score}
\end{multline}
where each term $\Ad, \Ne, \Pa$ is computed relative to $P$. For convenience, we say
that $\delta$ is the \textit{score} of the operator.

Similar derivations are made for Delete and Reversal in \citet{chickering2002optimal}
 and \citet{hauser2012characterization} (reversal is called turning). We review them in
 \Cref{appendix:sec:ges_parametrization}.

\parhead{GES with CPDAGs.}
In sum, GES implements \Cref{alg:ges-vanilla} using CPDAGs and operators. It begins with
the empty CPDAG, identifies all the Insert (or Delete, Reversal) that are valid for the
current CPDAG, computes their scores, applies the best one if it has a positive score,
and repeats.

XGES could proceed similarly. However, whether for GES or XGES, constructing the list of
valid operators and scoring them at each step is computationally expensive. We now turn
to new ways to more efficiently implement these operations.

\subsection{Efficient Algorithmic Formulation}
\label{sec:efficient_algorithmic_formulation}
When applying an operator on $P$ to form $P'$, the validity conditions of the other
operators (\Cref{eq:valid1,eq:valid2,eq:valid3,eq:valid4}) can become valid or invalid.
Similarly, the score of the other operators in \Cref{eq:insert-score} can
change. Yet, as noticed in \citet{ramsey2017million}, only a few edges changed from $P$ to $P'$. As a result, most
other operators that were computed for $P$ but not applied remain valid operators for
$P'$. Similarly, the scores of most operators remain identical.

Each step of XGES involves the following sub-steps:
\begin{enumerate}
    \item Start with a CPDAG $P$ and a list of candidate operators $\mathcal{C}$,
    where $\mathcal{C}$ is guaranteed to include all the valid operators for $P$, and
    their scores.
    \item Choose the best operator $O^*$ from
    $\mathcal{C}$ using XGES's heuristic (deletion before reversal, before insertion).
    \item Verify that $O^*$ is valid for $P$, otherwise re-run the heuristic on
    $\mathcal{C} \setminus \{O^*\}$ until a valid operator is found.
    \item Apply $O^*$ to $P$ to form $P'$ and add to $\mathcal{C}$ all the operators that
    became valid for $P'$, with their scores. Return to 1, with $P \leftarrow P'$, as we
    have just guaranteed that $\mathcal{C}$ includes all the valid operators for $P$.
\end{enumerate}

The operators that became invalid for $P'$ are not removed from $\mathcal{C}$. It is more efficient to leave them in the list and only check the validity of an operator in step 3 just before applying it (and discarding it if invalid). Indeed, if we recheck the validity of all operators after each operation, a single operator will be rechecked at each step until it is applied, instead of being checked only once before being applied.

No steps were included to recompute the scores of any operators in
$\mathcal{C}$. We explain how we can avoid it next.

\subsubsection{Updating the Score of Operators.}
\label{sec:updating_score}
To avoid recomputing the scores of operators at each step, we
change the parametrization of the
operators to make their scores independent of the CPDAG they are applied to. 

We parametrize each Insert by an additional set $E \subset V$ and an extra validity
condition that completes \Cref{eq:valid1,eq:valid2,eq:valid3,eq:valid4}: 
\begin{flalign}
    ~~~\bm{I5.}\quad & E = (\Ne(y) \cap \Ad(x)) \cup T \cup \Pa(y).&&
\end{flalign} 
The score of Insert($x,y,T,E$) from \Cref{eq:insert-score} becomes $s(y, E \cup \{x\}) -
s(y, E )$, which only depends on the Insert parameters. We reparametrize Delete and
Reversal operators similarly in \Cref{appendix:sec:xges_parametrization}.

With \citet{chickering2002optimal}'s parametrization, the score of an Insert would
change if $(\Ne(y) \cap \Ad(x)) \cup T \cup \Pa(y)$ changes. Now, with $E$ as a fixed
parameter of the operator, it is the status of condition \textbf{I5} that would change.
We turned a change in score into a change in validity.

We now turn to efficiently update the valid operators.

\begin{table}[t]
    \centering
    \begin{tabular}{ L{1.7cm}C{3.4cm}C{1.8cm} }
    \toprule
    Pre-update & $a \quad b$ & $a
    \rightarrow b$ \\
    Post-update& $a - b$ &  $a- b$  \\
    \midrule
    Necessary conditions
    &\begin{itemize}[leftmargin=*, itemsep=-1pt]
        \vspace*{-4mm}
        \item[] $y \in \{a,b\}$ 
        \item[or] $y \in \Ne(a)\cap \Ne(b)$
        \item[or] $(x = a) \wedge (y \in \Ne(b))$
        \item[or] $(x = b) \wedge (y \in \Ne(a))$
    \end{itemize}
    & \begin{itemize}[leftmargin=*]
        \item[] $y \in \{a,b\}$
    \end{itemize}
    \\[-3mm]
    \bottomrule
    \end{tabular}
    \caption{Necessary conditions for an Insert($x,y,T,E$) to become valid after the
    $(a,b)$ update. Excerpt of \Cref{tab:operator_updates} from
    \Cref{appendix:sec:efficient_algorithmic_formulation} with only two types of
    updates. }
    \label{tab:operator_updates_example}
\end{table}

\subsubsection{Updating the Validity of Operators.}
After updating a CPDAG $P$ into $P'$, our goal is to efficiently add to $\mathcal{C}$
the operators that became valid for $P'$.

To do so, we decompose the update from $P$ to $P'$ into a succession of single edge
updates $P_1, \hdots P_k$, with $P_1=P$, $P_k = P'$ and where $P_i$ and $P_{i+1}$ only
differ on the orientation or presence of a single edge, e.g. $a \rightarrow b$ vs $a -
b$. We then have the following theorem.

\begin{theorem}
    Write $P_1, \hdots P_k $ a sequence of single edge updates that transforms $P$ into
    $P'$. Take an operator $O$ that is invalid for $P$ and becomes valid for $P'$ and
    write $\{c_1, \ldots, c_m \}$ its validity conditions, e.g. $\bm{I1}$ to $\bm{I5}$
    for an Insert. Then there exists $i^* \in \{1,k-1\}$ and one validity condition
    $c_{j^*}$ such that $c_{j^*}$ is false for $P_{i^*}$, true for $P_{i^*+1}$, and all
    other conditions $c_{j} \neq c_{j^*}$ are true for $P_{i^*+1}$.
    \label{thm:update_validity}
\end{theorem}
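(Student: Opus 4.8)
The plan is to recognize that, once the right objects are in place, this statement is a discrete intermediate‑value argument on a single boolean predicate. The key preliminary observation is that each individual validity condition $c_j$ is a purely \emph{structural} predicate — conditions like $\bm{I1}$--$\bm{I5}$ only mention $\Ad$, $\Ne$, $\Pa$, cliques, and semi‑directed paths — so $c_j$ can be evaluated on \emph{any} of the intermediate graphs $P_1, \dots, P_k$, even those that are PDAGs but not CPDAGs. Fixing the operator $O$ (and hence its parameters $x,y,T,E$) throughout, I would define for each index $i$ the truth value $v(i) = \bigwedge_{j=1}^m c_j(P_i) \in \{\mathrm{false},\mathrm{true}\}$, i.e. the predicate ``$O$ is valid for $P_i$''. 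Nothing in the argument will use the specific form of the $c_j$; only that validity is their conjunction and that each is a well‑defined predicate on a graph, which is why the same proof covers Insert, Delete, and Reversal operators with their respective condition sets.

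Next I would record the two boundary facts supplied by the hypotheses: since $O$ is invalid for $P = P_1$, at least one condition fails there, so $v(1) = \mathrm{false}$; since $O$ is valid for $P' = P_k$, every condition holds there, so $v(k) = \mathrm{true}$. The core step is then a ``last false index'' argument. Let $i^*$ be the largest index in $\{1,\dots,k\}$ with $v(i^*) = \mathrm{false}$; this set is nonempty because it contains $1$, and it excludes $k$ because $v(k) = \mathrm{true}$, so $i^* \in \{1,\dots,k-1\}$ and, by maximality, $v(i^*+1) = \mathrm{true}$. From $v(i^*+1) = \mathrm{true}$ I conclude that all conditions $c_j$ are true for $P_{i^*+1}$, which directly yields the final clause of the statement. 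From $v(i^*) = \mathrm{false}$ I pick any condition that fails at $P_{i^*}$ and name it $c_{j^*}$; it is then false for $P_{i^*}$ and, by the preceding sentence, true for $P_{i^*+1}$, while every other condition is also true for $P_{i^*+1}$. This is exactly the claim.

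The point I would flag as the only subtlety — rather than a genuine obstacle — is that validity need not be monotone along the sequence: as single edges are flipped, $O$ may toggle between valid and invalid several times before ending valid at $P_k$. This is precisely why I would isolate the \emph{last} false index instead of arguing about ``the first step at which all conditions hold''; the maximality is what simultaneously guarantees that the transition step exists and that $v$ has switched to true immediately after it. Finally, I would emphasize for the downstream algorithmic use that \Cref{thm:update_validity} asserts only the \emph{existence} of such a transition step $i^*$ together with a single ``triggering'' condition $c_{j^*}$; the heavier combinatorial work — characterizing, for a given single‑edge update $(a,b)$ and a given condition $c_{j^*}$, exactly which operators can have $c_{j^*}$ flip from false to true (the content of \Cref{tab:operator_updates_example}) — is what makes the detection step efficient, but it is logically separate from the existence statement proved here.
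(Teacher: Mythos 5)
Your proof is correct and is essentially the paper's own argument: the paper also steps back from $P_k$ (where all conditions hold) toward $P_1$ until some condition first fails, which is exactly your ``last false index'' $i^*$, and then reads off $c_{j^*}$ at that transition. Your write-up merely makes explicit the non-monotonicity caveat and the boundary facts that the paper leaves implicit.
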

\begin{proof}
    All $c_j$ are true for $P'$ i.e. $P_k$. So let us step back from $P'$ to $P$ until
    one of the conditions $c_{j^*}$ becomes false for some $P_{i^*}$. Such an $i^{*}$
    must exist since some condition is false for $P$ i.e. $P_1$. $P_{i^*}$ and $c_{j^*}$
    satisfy the theorem.
\end{proof}

With \Cref{thm:update_validity}, we can efficiently update $\mathcal{C}$ if we can
identify which operators are susceptible to having one of their conditions
become true after single-edge updates.

In \Cref{appendix:sec:efficient_algorithmic_formulation} we study the necessary
conditions on the parameters of an operator to have one of its validity conditions
become true after a single-edge update. We report the necessary conditions for all
validity conditions of all operators against all types of edge updates in
\Cref{tab:operator_updates} in \Cref{appendix:sec:efficient_algorithmic_formulation}. We
provide an excerpt in \Cref{tab:operator_updates_example} with only two types of edge
updates, for the Insert operator only, and where we grouped the necessary conditions for
each validity condition into a single set of necessary conditions (with or).

For example, if edge $a \rightarrow b$ is changed into $a - b$,
\Cref{tab:operator_updates_example} shows that the only Insert($x,y,T,E$) that 
can become
valid are those with $y \in \{a,b\}$. 
If the edge $a - b$ is changed into $a \rightarrow
b$, then the necessary conditions for an Insert operator to become valid are more
involved but still efficient.\looseness=-1

In sum, we can efficiently update $\mathcal{C}$ after each CPDAG update using
\Cref{tab:operator_updates} in \Cref{appendix:sec:efficient_algorithmic_formulation}.

\subsubsection{XGES Implementation.}

We implement the efficient algorithmic formulation of XGES-0 and XGES  at \href{https://github.com/ANazaret/XGES}{https://github.com/ANazaret/XGES}
 We provide
code in C++ and Python.


\section{Empirical Studies}\label{sec:experiments} We compare the XGES heuristics to
different variants of GES. We find that
XGES recovers causal graphs with significantly better accuracy and up to 10 times faster.

\subsection{Evaluation Setup}
\label{sec:experiments:evaluation_setup}
\parhead{Data Simulation.}
We simulate CGMs and data for different numbers of variables $d$, edge density $\rho$
(average number of parents) and number of samples $n$. 
We first draw a random DAG $G^*$
from an Erdos-Renyi distribution. We then obtain $p^*$ by choosing each conditional
distribution $p^*(x_i \mid x_{\Pa^i_{G^*}})$ as a Gaussian $x_i \sim
\mathcal{N}(W_i^\top x_{\Pa^i_{G^*}}, \varepsilon_i)$ where $W_i, \varepsilon_i$ are
random selected. To ensure faithfulness, we sample $W_i$ away from $0$. More details are
in \Cref{appendix:sec:simulated_data}.

\parhead{Baseline Algorithms.}
We compare our algorithms against GES without reversals (GES), and with reversals
(GES-r, a.k.a GIES), using the C++ implementation in the R package \texttt{pcalg}
\citep{kalisch2012causal}. We also include fast-GES (fGES) from the Java software Tetrad
\citep{ramsey2017million}.
An additional baseline, OPS, is provided in \Cref{appendix:sec:empirical:ops}.

\parhead{Evaluation Metrics}
We evaluate the algorithms with the structural Hamming distance on MECs (SHD) between
the method's results $\hat M$ and the ground-truth MEC $M^*$ \citep{peters2014causal}. The SHD is the number
of different edges between the CPDAGs of $\hat M$ and $M^*$. 
We also consider causal discovery as a binary classification task, where $\hat M$ predicts the presence of edges in $M^*$. We report the F1 score, precision, and recall for this task. Error bars are computed over multiple random datasets (seeds) and reported as bootstrapped 95\% confidence intervals \citep{waskom2021seaborn}.

\begin{figure}[t]
    \centering
    \includegraphics[width=\linewidth]{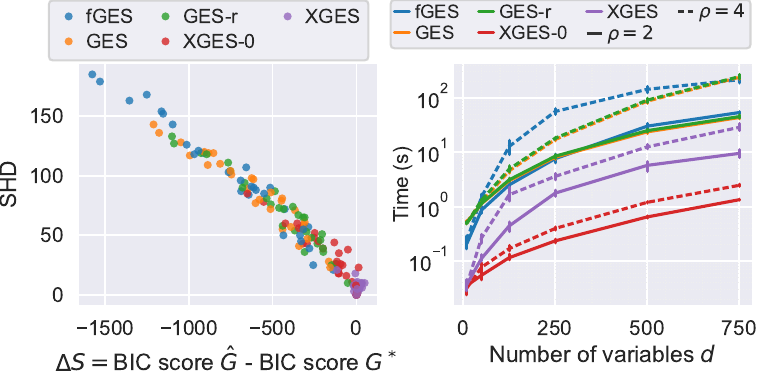}
    \caption{
    (left) The BIC scores of the graphs returned by each method are strongly correlated with the SHD to ground truth (shown for $d=50$, $\rho=3$, 30 seeds). XGES finds the highest scores and lowest SHDs. (right) Runtime of GES and XGES for a wide
    range of $d$. XGES-0 is up to 30 times faster than GES, and XGES up to 10 times
    faster. fGES may have overhead due to Java while other methods are in C++.}
    \label{fig:simulation_correlation_speed}
\end{figure}

\subsection{Results}
\parhead{General Performance.} 
In \Cref{fig:simulation_main}, we find that XGES-0 and XGES outperform all baselines.
The improvement is more significant for larger density $\rho$ and larger $d$. The
conclusions are identical with precision and recall in \Cref{fig:simulation_main_precision_recall} of \Cref{appendix:sec:empirical:f1_score},
which are both improved by XGES. We emphasize that even though XGES favors deleting
edges, the proportion of true edges recovered by XGES is higher than GES (the recall).
We also report the F1 metric in \Cref{appendix:sec:empirical:f1_score}. We note that the
performance of fGES is slightly worse than GES. We explain in
\Cref{appendix:subsec:fast_ges} that one of the optimizations of fGES removes
some valid insertions.

\parhead{Choice of Metrics.}
\Cref{fig:simulation_correlation_speed} (left) shows that the BIC scores of the graphs returned by each method are strongly correlated with the SHD to ground truth. This is a comforting observation: maximizing the BIC score on finite data is indeed a good proxy for minimizing SHD to ground-truth.

\parhead{Impact of the Edge Density $\rho$.}
In \Cref{fig:simulation_d_and_rho}, we see that when $\rho=1$ (a very sparse graph where
nodes have $\rho=1$ parent on average), then all methods perform similarly well. The
advantage of XGES over GES is visible as soon as $\rho=2$ and widens as $\rho$ increases, see also
\Cref{fig:simulation_main}.

\parhead{Robustness to the Sample Size $n$.}
In \Cref{fig:simulation_n_and_alpha}~(left), we vary the number of samples $n$ and fix
$d=50$ and $\rho=3$. XGES's performance improves with $n$, coherent with
\Cref{th:local_consistency}. In contrast, GES and its variants are hurt when $n$ increases
beyond $10^4$. But this is not incoherent with GES's correctness in the limit of
infinite data. Instead, this reveals that the finite sample behavior of GES is nontrivial and that GES may require very large $n$ -- beyond what is practical -- to perform well.\looseness=-1

In \Cref{fig:simulation_double_descent}, we study sample sizes up to $n=10^8$ on a small graph with $d=15$ and $\rho=2$. We find again that GES worsens around $10^4$ samples, but this time, it improves again after $10^5$ samples, thereby exhibiting a double descent behavior. We discuss it in more detail in \Cref{appendix:sec:empirical:double_descent}.

\begin{figure}
    \centering
    \includegraphics[width=\linewidth]{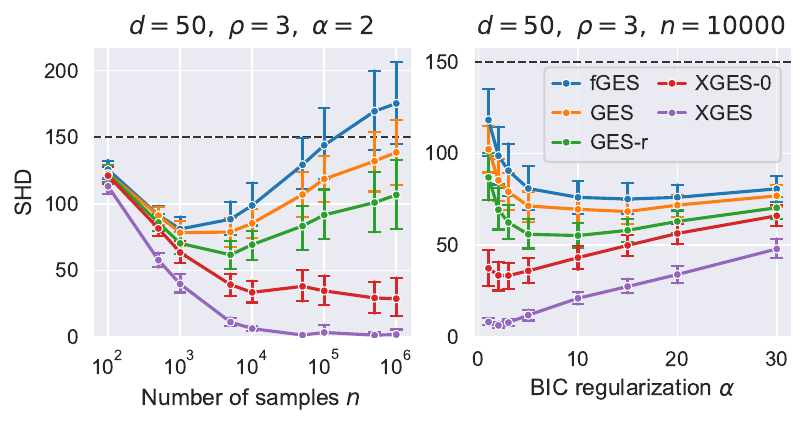}
    \caption{Performance of GES and XGES when varying (left) the number of samples $n$, and
    (right) the regularization strength $\alpha$. Increasing $n$ improves XGES while it hurts GES and its
    variants. Increasing
    $\alpha$ initially improves GES but eventually hurts all methods. The dashed lines indicate the number of edges of the true graph. Error bars over 30 seeds.\looseness=-1}
    \label{fig:simulation_n_and_alpha}
\end{figure}

\parhead{Robustness to the Regularization Strength $\alpha$.}
In \Cref{fig:simulation_n_and_alpha} (right), we vary
$\alpha$ and fix $d=50$, $\rho=3$ and $n=10000$. We find that increasing $\alpha$ helps
GES from $\alpha=1$ to $\alpha=10$ but then hurts it. No value of $\alpha$ enables GES to
catch up to XGES. Echoing \Cref{sec:ges_failure}, we conclude that the solution to GES's
over-inserting is not to change the score function, but to change the search strategy,
as XGES does.

\parhead{Robustness to the Data Simulation.}
We vary the procedure to sample the weights $W_i$ in two ways: changing the scale
and changing the shape of their distribution. We report the results in
\Cref{appendix:sec:empirical:impact_of_simulation} with \Cref{fig:simulation_epsilon,fig:simulation_main_negative}.
We find similar conclusions as in \Cref{fig:simulation_main}.

\parhead{Implementation Speed.}
We measure the runtime of the different methods for a wide range of $d$ and 
$\rho \in \{2,4\}$ in \Cref{fig:simulation_correlation_speed} (right). We find that XGES-0 is up to 30 times
faster than GES, and XGES up to 10 times faster. While fGES's slower runtime may be 
attributed to Java overhead, the other methods are implemented in C++.
Higher densities slow down all methods, with a stronger impact on GES, which is coherent
with GES over-inserting in denser graphs. 

We find the same conclusions by reporting the number of calls to the scoring function as another measure
of efficiency in \Cref{appendix:sec:empirical:number_of_calls}. Interestingly, even though XGES repeatedly applies XGES-0, it only makes around one order of magnitude more BIC score evaluations than XGES-0.

\section*{Conclusion and Future Work}
We introduced XGES, an algorithm that significantly improves on GES. With XGES, we can learn larger and denser graphs from data. XGES offers several avenues for future work. One direction is to study its finite sample guarantees. A second is to study its applicability to more complex model classes beyond linear models. Another is to relax its assumptions: e.g. unfaithful graphs, or scores that are not asymptotically locally consistent \citep{schultheiss2023pitfalls}. Finally, its efficient implementation could be used to analyze large real-world datasets.\looseness=-1

\begin{acknowledgements}
We thank the anonymous reviewers for their helpful comments.
A.N. was supported by funding from the Eric and Wendy Schmidt Center at the Broad Institute of MIT and Harvard, and the Africk Family Fund. 
D.B. was funded by NSF IIS-2127869, NSF DMS-2311108, NSF/DoD PHY-2229929, ONR N00014-17-1-2131, ONR N00014-15-1-2209, the Simons Foundation, and Open Philanthropy.
\end{acknowledgements}

\bibliography{main}

\newpage

\onecolumn

\title{Extremely Greedy Equivalence Search\\(Supplementary Material)}
\maketitle
\appendix

\section{Empirical Studies}

\subsection{Sample Size and Regularization Strength}
\label{appendix:sec:empirical:sample_size_and_regularization_strength}
We reproduce figure \Cref{fig:simulation_n_and_alpha} from the main text in
\Cref{fig:simulation_n_and_alpha_copy}, which shows the impact of the sample size $n$
and the regularization strength $\alpha$ on the performance of GES and XGES variants. We
complete it with \Cref{fig:simulation_n_and_alpha_edges} to show the corresponding
number of predicted edges. It indeed reveals that increasing $n$ causes GES and its
variants to over-insert.

\begin{figure}[h]
    \centering
    \begin{subfigure}{0.45\linewidth}
        \centering
        \includegraphics[width=\linewidth]{fig/simulation_n_and_alpha.pdf}
        \caption{Evolution of the performance of GES and XGES variants with the number of
        samples $n$.}
        \label{fig:simulation_n_and_alpha_copy}
    \end{subfigure}
    \hfill
    \begin{subfigure}{0.45\linewidth}
        \centering
        \includegraphics[width=\linewidth]{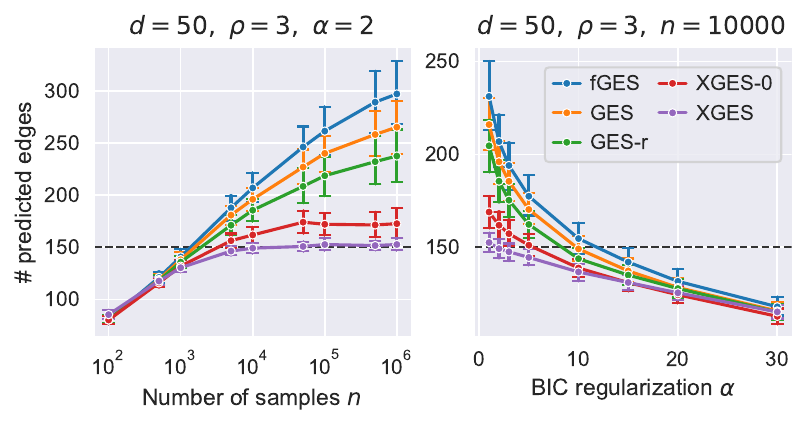}
        \caption{Evolution of the number of predicted edges with the number of samples $n$.}
        \label{fig:simulation_n_and_alpha_edges}
    \end{subfigure}
    \caption{Evolution of the performance of GES and XGES variants with (left) the
    number of samples $n$, and (right) the regularization strength $\alpha$. Increasing
    $n$ hurts GES and its variants (as it reduces the BIC regularization and
    over-insertion is exacerbated) while it helps XGES. Increasing the BIC
    regularization with $\alpha$ helps GES but without letting it catch up with XGES.
    Missing points indicate the method did not run.}
\end{figure}

\begin{figure}[h]
    \centering
    \begin{subfigure}{0.49\linewidth}
        \centering
        \includegraphics[width=\linewidth]{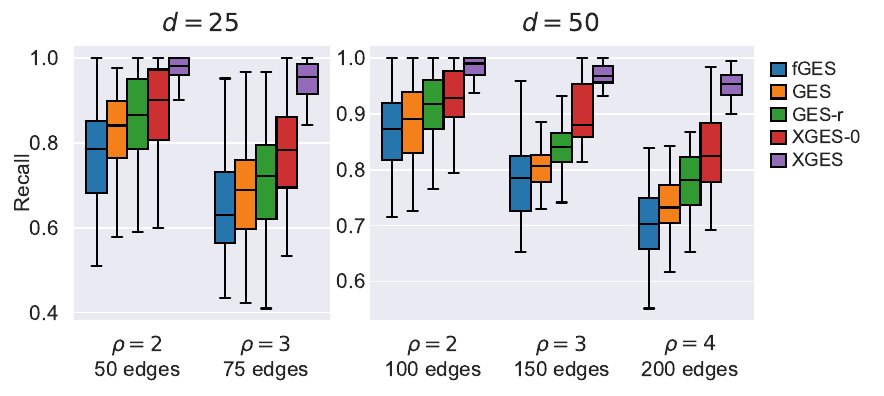}
        \caption{Recall of GES and XGES on simulated data. }
    \end{subfigure}
    \hfill
    \begin{subfigure}{0.49\linewidth}
        \centering
        \includegraphics[width=\linewidth]{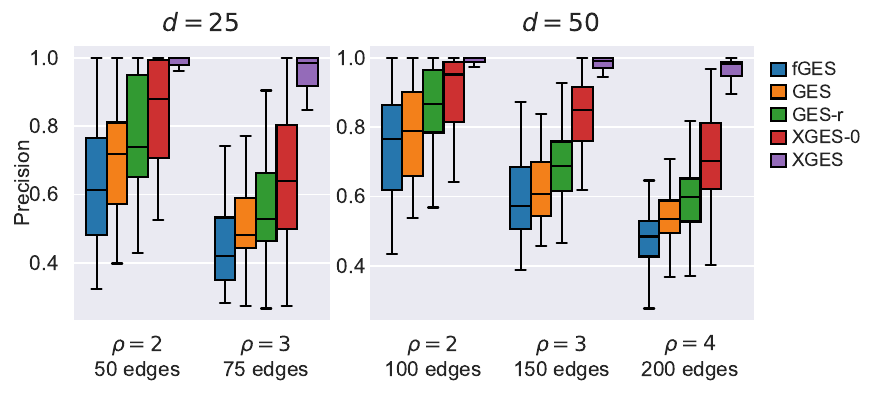}
        \caption{Precision of GES and XGES on simulated data.}
    \end{subfigure}
    \begin{subfigure}{0.49\linewidth}
        \centering
        \includegraphics[width=\linewidth]{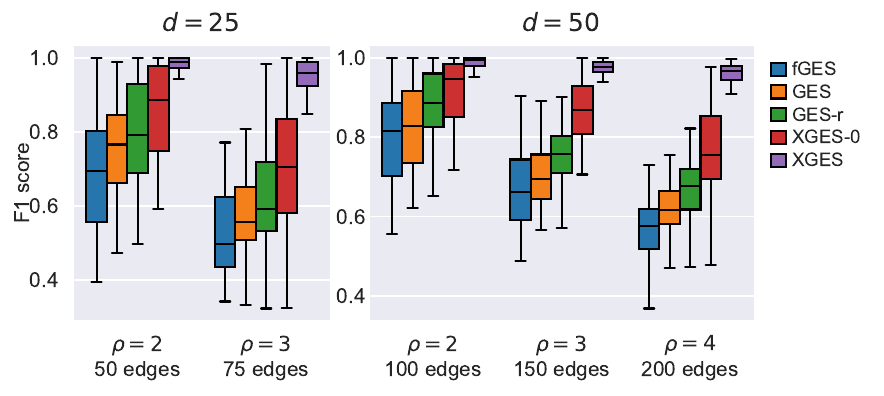}
        \caption{F1 of GES and XGES on simulated data.}
    \end{subfigure}
    \caption{Performance of GES and XGES on simulated data measured with precision, recall and F1. With $n=10000, \alpha=2$ and 30 seeds (same data as \cref{fig:simulation_main}). XGES outperforms other methods in the three metrics.}
    \label{fig:simulation_main_precision_recall}
\end{figure}

\begin{figure}
    \centering
    \includegraphics[width=0.7\linewidth]{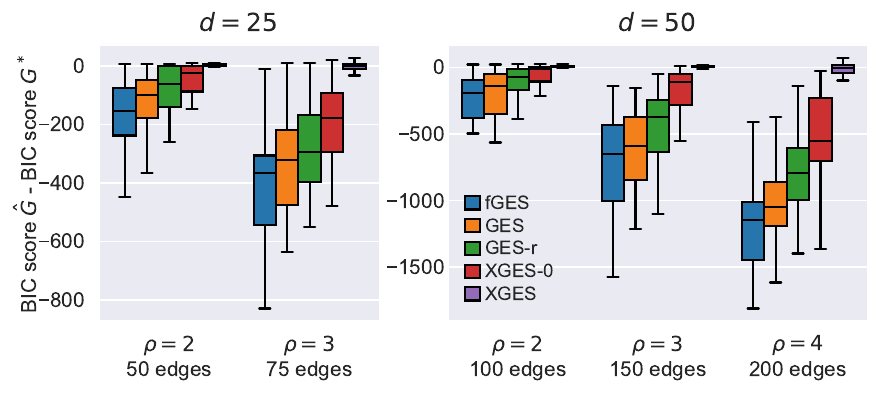}
    \caption{Difference in score between the true graph/MEC $G^*$ and the graph/MEC returned by GES and XGES on simulated data. XGES returns graphs with higher scores than GES.}
    \label{fig:simulation_main_score_diff}
\end{figure}

\subsection{Complementary metrics: precision, recall, F1 score, BIC score.}
\label{appendix:sec:empirical:f1_score}
We complement \Cref{fig:simulation_main} from the main text with
\Cref{fig:simulation_main_precision_recall} to show the F1 score of GES and XGES for the same simulated data, as well as the precision/recall breakdown. We also report the BIC score of the graphs returned by GES and XGES in \Cref{fig:simulation_main_score_diff}. 
We find similar conclusions as in the main text.

The methods return a MEC $\hat M$ to predict the true MEC $M^*$. The F1 score, precision
and recall are defined for binary classification problems. For each ordered pair of
nodes $(i,j)$ we say $M$ contains $(i,j)$ if $(i,j)$ is directed in $M$ from $i$ to $j$
or if $(i,j)$ is undirected in $M$ $(i,j)$ (but not if $(j,i)$ is directed in $M$ from $j$ to $i$). Then, the binary classification problem is to predict for each $(i,j)$ if
$M^*$ contains $(i,j)$ or not, predicted by whether $\hat M$ contains $(i,j)$ or not.

\clearpage
\newpage
\subsection{More variables and edge densities}
\label{appendix:sec:empirical:more_variables_and_edge_densities}
We show the performances of the methods on more combinations of $d$ and $\rho$ in
\Cref{fig:simulation_d_and_rho}. We fixed $n=10000, \alpha=2$.
We find similar trends as in \Cref{fig:simulation_main} and \Cref{fig:simulation_correlation_speed}.

For edge density $\rho=1$, we find no significant difference between GES and XGES. This
is expected as the true graph is really sparse and the methods are less likely to
encounter local optima.
\begin{figure}[h]
    \centering
    \includegraphics[width=0.98\linewidth]{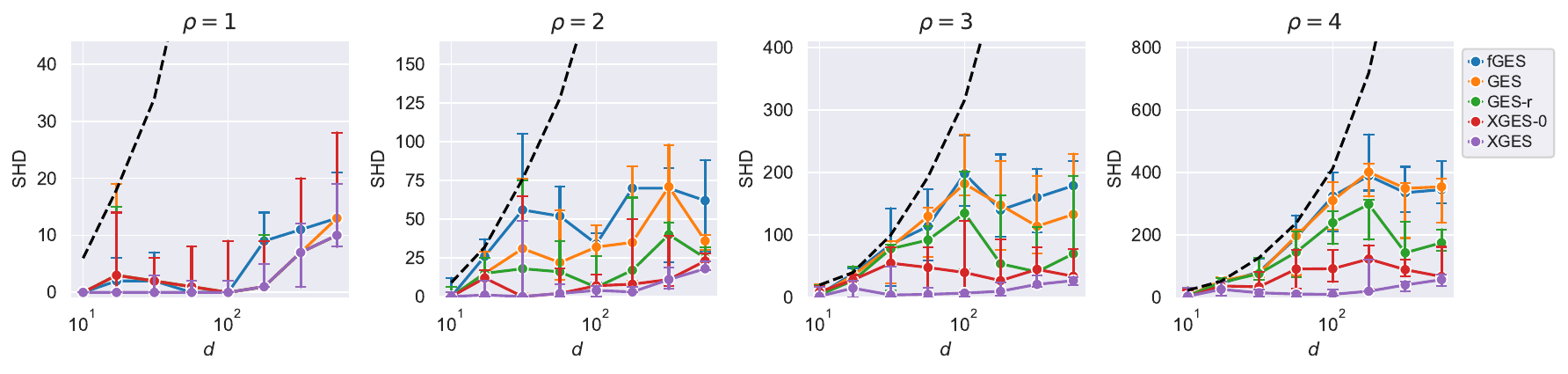}
    \caption{Performance of GES and XGES on more combinations of $d$ and $\rho$. We fixed
    $n=10000, \alpha=2$ and we report error bars and averages over 5 seeds. XGES consistently outperforms GES and its variants. The dashed lines indicate the number of true edges.}
    \label{fig:simulation_d_and_rho}
\end{figure}

\subsection{Number of calls to the scoring function}
\label{appendix:sec:empirical:number_of_calls}
We show the number of calls to the scoring function for the different methods in
\Cref{fig:simulation_number_of_calls}. We find that even though XGES repeatedly applies
XGES-0, it only makes around one order of magnitude more BIC score evaluations. fGES was
not included because we could not extract the number of calls from the Tetrad
implementation. 

\begin{figure}[h]
    \centering
    \includegraphics[width=0.4\linewidth]{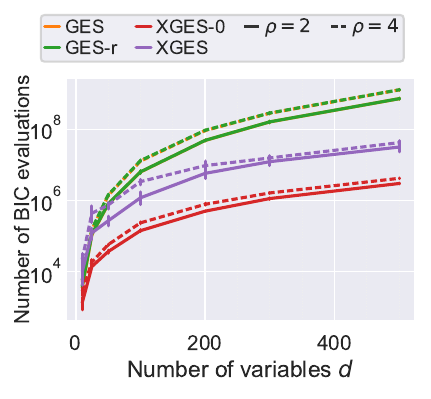}
    \caption{Number of calls to the scoring function for the different methods. We fixed
    $n=10000, \alpha=2$, and we report error bars and averages over 5 seeds.}
    \label{fig:simulation_number_of_calls}
\end{figure}

\subsection{The OPS variant}
\label{appendix:sec:empirical:ops}
\citet{chickering2002optimal} evaluated GES against a variant called OPS that also
    considered insertions and deletions simultaneously. But OPS did not prioritize
    deletions over insertions, resulting in limited changes to GES in their experiments.
    We show the performance of OPS in \Cref{fig:simulation_ops}. We
    corroborate the results of \citet{chickering2002optimal} that OPS has performances
   similar to GES.

 Mathematically, deleting an edge can only increase the BIC score by at most $\frac{\alpha}{2} \log n$ which is usually smaller than the increase from inserting an edge.
 Hence even if OPS considers deletions and insertions ``together'', we conjecture that most deletions are only considered at the end of the search because insertions have higher scores and are applied first. OPS then encounters the same local maxima as GES. 
 This highlights the importance of prioritizing deletions over insertions with XGES.\looseness=-1

 \begin{figure}
    \centering
    \includegraphics[width=0.6\linewidth]{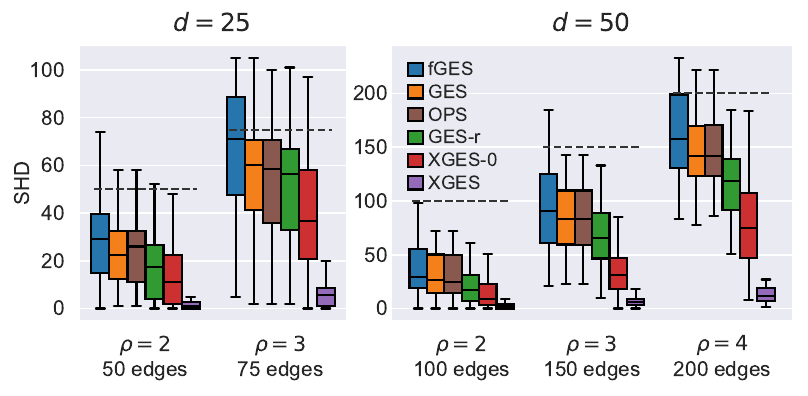}
    \caption{Same experiment as \Cref{fig:simulation_main} but with the addition of the OPS variant. OPS performs very similarly to GES, suggesting that XGES-0's heuristic favoring deletions over insertions in XGES-0 is important. }
    \label{fig:simulation_ops}
 \end{figure}

\subsection{The double descent of GES with the sample size}
\label{appendix:sec:empirical:double_descent}
We show the performance of GES and XGES on a small graph with $d=15$ and $\rho=2$ for
sample sizes up to $n=10^8$ in \Cref{fig:simulation_double_descent}. XGES monotonically and quickly improves to an SHD of 0. We find that GES
improves from $n=10^2$ to $n=10^4$ but worsens around $n=10^4$. It eventually improves
again after $10^6$. The error bands are bootstrapped 95\% confidence intervals over 30 
seeds. We chose the graph to be small so that we could observe the second descent of
GES with sample sizes up to $10^8$. However, we doubt that such large sample sizes
are practical. We further believe that the issue worsens with larger graphs. (\textit{Note: We reimplemented GES and GES-r to scale to $n=10^8$, we verified that we obtained the same results as the original implementations for $n$ up to $10^6$. See parameter \texttt{-b} in the XGES code.})

When the sample size increases, the strength of the BIC regularization relative to the likelihood decreases in $\frac{\log(n)}{n}$. We hypothesize that GES worsens because
that decrease might lead to over-inserting, and the encounter of local optima.

\begin{figure}[h]
    \centering
    \includegraphics[width=0.7\linewidth]{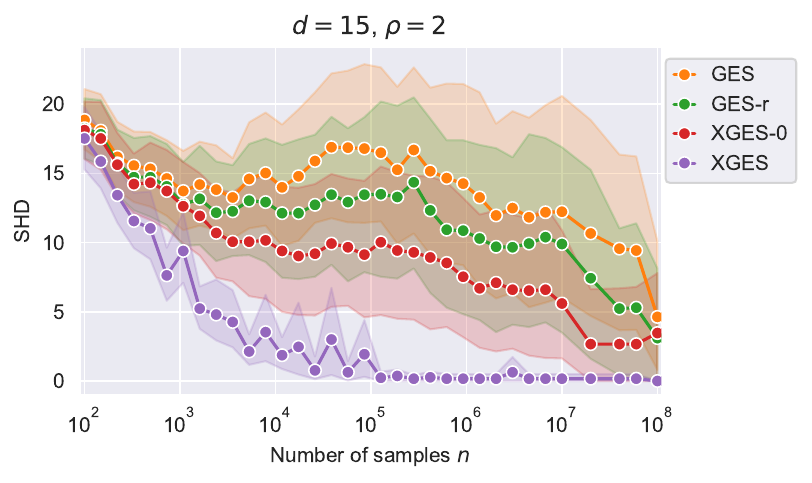}
    \caption{Performance of GES and XGES on a small graph with $d=15$ and $\rho=2$ for
    sample sizes up to $n=10^8$. We find that GES worsens around $n=10^4$ but improves
    after $n=10^6$. It exhibits a double descent behavior with the sample size.}
    \label{fig:simulation_double_descent}
\end{figure}

\subsection{The impact of the simulation}
\label{appendix:sec:empirical:impact_of_simulation}

We describe the simulation procedure and show the impact of changing it. 

\subsubsection{The simulation procedure}
\label{appendix:sec:simulated_data}

In the experiments, we use the following procedure to obtain simulated data.

\begin{enumerate}
    \item Select $d$ and $\rho$.
    \item Draw a DAG $G^*$ as follows:
        \begin{itemize}[leftmargin=*]
            \item $G^* \sim \text{Erdos-Renyi}(\#nodes=d, p=\frac{2\rho}{d-1})$.
            \item Orient each edge $(i,j)$ from $\min(i,j)$ to $\max(i,j)$, that forms a
            DAG.
            \item Draw a permutation $\sigma$ of $[d]$ and relabel each node $i \mapsto
            \sigma(i)$. 
        \end{itemize}
    \item Choose a distribution $p^*$ as follows for each $i
    \in [d]$:
    \begin{itemize}[leftmargin=*]
        \item Sample weights $W_i$ away from $0$ to ensure faithfulness.
        \begin{enumerate}
            \item In most simulations, we draw $W_i \in [1,3]^{\left|\Pa^i_{G^*}\right|}$ from
            $\mathcal{U}([1,3])$.
            \item In \Cref{fig:simulation_main_negative}, we change the simulation to sample weights $W_i \in
            [-3,-1] \cup [1,3]$, by additionaly drawing a Bernoulli variable $b_i \sim
            \mathcal{B}(0.5)$ and setting $W_i \leftarrow -W_i$ if $b_i=1$.
        \end{enumerate}

        \item $L_1$-normalize $W_i$ as $W_i \leftarrow W_i / \sum_j |W_{ij}|$.
        \item Draw the scale of the Gaussian $\varepsilon_i \sim \mathcal{U}(0, \varepsilon_{\text{max}})$. 
        \begin{enumerate}
            \item By default in most plot $\varepsilon_{\text{max}}=0.5$. 
            \item We vary it in \Cref{fig:simulation_epsilon}.
        \end{enumerate}

        \item Define $p^*(x_i \mid x_{\Pa^i_{G^*}}) = \mathcal{N}(W_i^\top
        x_{\Pa^i_{G^*}}, \varepsilon_i^2)$.
    \end{itemize}
    \item Draw $n$ i.i.d. samples from $p^*$.
\end{enumerate}

\subsubsection{Varying the scale of the Gaussian noise}

We show the impact of varying the scale of the Gaussian noise $\varepsilon_{\text{max}}$
in \Cref{fig:simulation_epsilon}. As expected, GES and XGES are almost not impacted by
the scale of the noise. This is expected, as the methods explicitly model the noise variable.

\begin{figure}[h]
    \centering
    \includegraphics[width=0.9\linewidth]{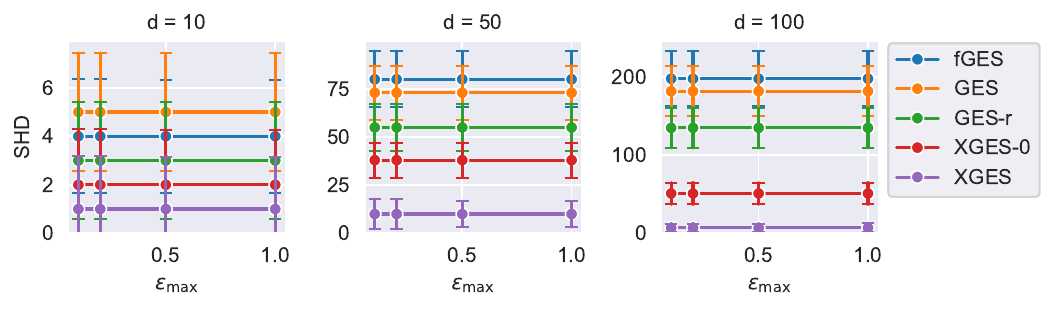}
    \caption{Performance of GES and XGES on different scales of the Gaussian noise. We
    fixed $n=10000, \alpha=2$.}
    \label{fig:simulation_epsilon}
\end{figure}

\clearpage
\subsubsection{Different sampling of weights}
We find similar conclusions when we change the sampling of weights from $W_i \in [1,3]$
to $W_i \in [-3,-1] \cup [1,3]$ . We show the results in
\Cref{fig:simulation_main_negative}.

\begin{figure}[h]
    \centering
    \includegraphics[width=0.5\linewidth]{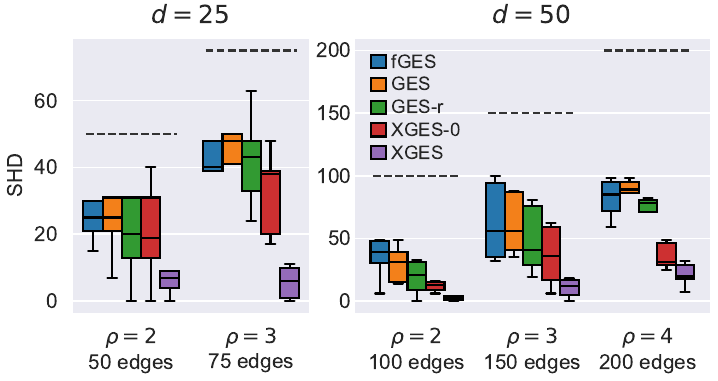}
    \caption{Performance of GES and XGES similar to \Cref{fig:simulation_main} but with
    a different sampling of weights $W$. $n=10000, \alpha=2$. }
    \label{fig:simulation_main_negative}
\end{figure}


\clearpage
\section{Theoretical Details}
We detail the theoretical guarantees of GES and XGES. We first recall the problem only
in terms of MECs and not CPDAGs. We review the theoretical guarantees of GES and prove
those of XGES. We then show how the MEC formulation can be translated into a CPDAG
formulation for Deletes and Reversals.

\parhead{Notations and Vocabulary.}
\begin{itemize}[noitemsep]
    \item $d$: number of variables.
    \item $G$: a DAG over $d$ variables.
    \item $P$: a PDAG over $d$ variables.
    \item $M$: a MEC over $d$ variables.
    \item $\mathcal{M}$: space of all MEC over $d$ variables.
    \item $S$: a locally consistent score that is score equivalent. 
    \item $S(G)$ or $S(M)$ or $S(P)$: the score of a DAG $G$, a MEC $M$, or a CPDAG $P$.
    We hide the dependence in $\data$ for simplicity.\looseness=-1
    \item $G + (x \rightarrow y)$: the DAG obtained by adding the edge $x \rightarrow y$ 
    to $G$. It is undefined if $x \rightarrow y$ is already in $G$ or if the resulting
    graph is not a DAG.
    \item $G - (x \rightarrow y)$: the DAG obtained by removing the edge $x \rightarrow
    y$. It is undefined if $x \rightarrow y$ is not in $G$.
    \item  $G \circlearrowleft (x \rightarrow y)$: the DAG obtained by reversing the
    edge $x \rightarrow y$. It is undefined if $x \rightarrow y$ is not in $G$ or if the
    resulting graph is not a DAG.
    \item An edge is \textit{compelled} in a MEC $M$ if it is always directed in the
    same direction in all DAGs in $M$. By definition, the compelled edges of $M$ are
    exactly the directed edges of $M$'s canonical PDAG. 
    \item An edge is \textit{reversible} in a MEC $M$ if it is directed in one direction
    in some DAGs in $M$ and the other direction in other DAGs in $M$. By definition, the
    reversible edges of $M$ are exactly the undirected edges of $M$'s canonical PDAG.
\end{itemize}

\subsection{Navigating the Space of MECs}
GES explores the space of MECs by iteratively going from one MEC to the other, each time
selecting one with a higher score. GES defines a set of possible candidates that can be
reached from a given MEC. This defines the search space of GES
\citep{chickering2002optimal}.

For a MEC $M$, define:
\begin{itemize}
    \item Insertions: $\mathcal{I}(M) = \{ M' \in \mathcal{M} \mid \exists G \in M, \exists
    (x,y) \in [d]^2, G + (x \rightarrow y) \in M'\}$.
    \item Deletions: $\mathcal{D}(M) = \{M' \in \mathcal{M} \mid \exists G \in M, \exists (x,y)
    \in [d]^2, G - (x \rightarrow y) \in M'\}$.
    \item Reversals : $\mathcal{R}_c(M) = \{ M' \in \mathcal{M} \mid \exists G \in M, \exists (x,y)
    \in [d]^2, (x,y) \text{ is compelled in }P, G \circlearrowleft (x \rightarrow y) \in M'\}$.
    \item Reversals: $\mathcal{R}_r(M) = \{ M' \in \mathcal{M} \mid \exists G \in M, \exists (x,y)
    \in [d]^2, (x,y) \text{ is reversible in }P, G \circlearrowleft (x \rightarrow y) \in M'\}$.
\end{itemize}

The original GES algorithm first navigates through MECs only using $\mathcal{I}$, and
then only using $\mathcal{D}$. GIES proposes to add a last step and navigate through MECs using only
$\mathcal{R}_c \cup \mathcal{R}_r$ (after $\mathcal{I}$ and $\mathcal{D}$)
\citep{hauser2012characterization}.

In contrast, XGES navigates through MECs using simultaneously $\mathcal{I}$,
$\mathcal{D}$ and $\mathcal{R}_c$. The XGES heuristics favor using $\mathcal{D}$, then
$\mathcal{R}_c$, and finally $\mathcal{I}$ (see \Cref{alg:xges-0}).

\begin{remark}
    XGES could also use $\mathcal{R}_r$, we leave this for future work.
\end{remark}

\subsection{Theoretical Guarantees of GES}
\label{appendix:sec:theoretical_guarantees_ges}

We reformulate the results in \citep{chickering2002optimal}. Given a distribution $p^*$
faithful to a graph $G^*$ with MEC $M^*$. GES is \textit{correct} if it returns $M^*$,
the MEC of $G^*$. It is the MEC formed by all the DAGs faithful to $p^*$. 

\begin{theorem}[\protect{\citep[Lemma 9]{chickering2002optimal}}] \label{thm:chickering9}
    If no candidate MECs
    in $\mathcal{I}(M)$ can increase the score $S$, then $M$ has a special structure:
    all the independencies in $M$ are also independencies in $p^*$. We write it:
    $$\max_{M' \in \mathcal{I}(M)} S(M') \leq S(M) \Rightarrow \texttt{P1}(M; p^*),$$ where
    $\texttt{P1}(M; p^*)$ is the proposition that all the independencies in $M$ are also
    independencies in $p^*$.
\end{theorem}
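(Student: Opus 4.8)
The plan is to establish the contrapositive: assuming $\neg\texttt{P1}(M;p^*)$, i.e.\ that some independency encoded by $M$ fails in $p^*$, I would exhibit a single MEC in $\mathcal{I}(M)$ whose score strictly exceeds $S(M)$, contradicting $\max_{M' \in \mathcal{I}(M)} S(M') \le S(M)$. First I would translate both sides into graphical language. Fixing any DAG $G \in M$ (all DAGs of a MEC encode the same independencies), $\texttt{P1}(M;p^*)$ is exactly the statement that $G$ is an independence map of $p^*$; by the standard equivalence of the global and local Markov conditions (which holds for every distribution), this is equivalent to the local Markov condition, namely $X_y \indep_{p^*} X_{\mathrm{nd}_G(y)\setminus \Pa_y^G} \mid X_{\Pa_y^G}$ for every node $y$, where $\mathrm{nd}_G(y)$ denotes the non-descendants of $y$ in $G$. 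Hence $\neg\texttt{P1}$ means this condition fails at some node $y$, so the set $A := \mathrm{nd}_G(y)\setminus \Pa_y^G$ satisfies $X_y \not\indep_{p^*} X_A \mid X_{\Pa_y^G}$.

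Next I would localize this joint dependence to a single node using faithfulness. Since $p^*$ is faithful to $G^*$, conditional independence in $p^*$ coincides with d-separation in $G^*$; the dependence $X_y \not\indep_{p^*} X_A \mid X_{\Pa_y^G}$ thus means $y$ is not d-separated from $A$ given $\Pa_y^G$ in $G^*$, i.e.\ an active path joins $y$ to some individual node $x \in A$, and faithfulness translates this back to $X_x \not\indep_{p^*} X_y \mid X_{\Pa_y^G}$. By construction $x$ is a non-descendant of $y$ in $G$ and is non-adjacent to $y$, being neither a parent (excluded from $A$) nor a child (a child would be a descendant).

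Finally I would realize this dependence as a scoring insertion. Because $x$ is a non-descendant of $y$ and non-adjacent to it, $G' := G + (x \to y)$ is a well-defined DAG, and it belongs to some MEC $M' \in \mathcal{I}(M)$. Applying local consistency of $S$ to the pair $(G,G')$ with conditioning set $\Pa_y^G$, the dependence $X_x \not\indep_{p^*} X_y \mid X_{\Pa_y^G}$ gives $S(G') > S(G)$, and score equivalence yields $S(M') = S(G') > S(G) = S(M)$. Therefore $\max_{M' \in \mathcal{I}(M)} S(M') > S(M)$, which is the negation of the hypothesis.

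I expect the second step to be the crux. Passing from a joint dependence on the set $A$ to a dependence on a single variable is false for arbitrary distributions: an XOR-type construction (the empty graph with $X_c = X_a \oplus X_b$) is not an independence map yet has every single-edge insertion strictly decrease the score. Faithfulness is what rescues the argument here, since it supplies the composition property of d-separation that lets a joint dependence be pinned to one variable; the remaining steps are routine manipulations of the definitions of local consistency, score equivalence, and $\mathcal{I}(M)$.
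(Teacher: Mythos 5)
Your proof is correct: the paper does not prove this statement itself but defers to \citet[Lemma 9]{chickering2002optimal}, and your argument --- taking the contrapositive via the equivalence of the local and global Markov conditions, using faithfulness of $p^*$ to $G^*$ to localize the joint dependence $X_y \not\hspace*{-0.5mm}\indep_{p^*} X_A \mid X_{\Pa_y^G}$ to a single non-adjacent non-descendant $x$, and then invoking local consistency plus score equivalence to exhibit a score-increasing member of $\mathcal{I}(M)$ --- is essentially the argument of that cited lemma. You are also right that the localization step is the crux: it is exactly where faithfulness (supplying composition) is needed, as your XOR example shows.
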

Note that $p^*$ might have more independencies than $M$, i.e. $p^*$ is not necessarily
faithful to $M$. This is because $M$ might have superfluous edges. For example, the MEC
of the complete DAGs satisfies $\texttt{P1}(M; p^*)$.

\begin{theorem}[\protect{\citep[Lemma 10]{chickering2002optimal}}]
    \label{thm:chickering10a} If all the independencies in $M$ are also independencies
    in $p^*$, then the same is true for all the MECs in $\mathcal{D}(M)$ that have a
    higher score than $M$. We write it:
    $$\texttt{P1}(M; p^*) \Rightarrow (\forall M' \in \mathcal{D}(M), S(M') \geq S(M)
    \Rightarrow \texttt{P1}(M'; p^*)).$$
\end{theorem}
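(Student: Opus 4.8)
The plan is to pass to representative DAGs and then exhibit a factorization of $p^*$ over the smaller graph, from which the I-map property — and hence $\texttt{P1}(M'; p^*)$ — will follow. Since $M' \in \mathcal{D}(M)$, there is a DAG $G \in M$ and an edge $x \rightarrow y$ with $G' := G - (x\rightarrow y) \in M'$. Because $S$ is score equivalent, $S(M) = S(G)$ and $S(M') = S(G')$, so the hypothesis $S(M') \geq S(M)$ is equivalent to $S(G') \geq S(G)$. The entire argument will hinge on the single conditional independence that deleting $x \rightarrow y$ introduces at node $y$.

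First I would pin down that new independence. In $G$ there is an edge $x \rightarrow y$, so $G$ is acyclic and has no directed path from $y$ to $x$; since removing an edge can only destroy paths, never create them, in $G'$ the node $x$ is still a nondescendant of $y$, and it is moreover nonadjacent to $y$ (a DAG has at most one edge between two nodes). The local directed Markov property for $G'$ then yields the d-separation $X_x \indep X_y \mid X_{\Pa_y^{G'}}$, where $\Pa_y^{G'} = \Pa_y^G \setminus \{x\}$. Next, viewing $G$ as $G' + (x\rightarrow y)$, I apply the first clause of local consistency with $i = x$, $j = y$: its contrapositive states that $S(G') \geq S(G)$ forces $X_x \indep_{p^*} X_y \mid X_{\Pa_y^{G'}}$. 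So the score hypothesis buys exactly one true conditional independence in $p^*$.

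The final step promotes this single independence to the full claim $\texttt{P1}(M'; p^*)$. By hypothesis $\texttt{P1}(M; p^*)$ holds, so $G$ is an I-map of $p^*$ and $p^*$ factorizes as $\prod_j p^*(X_j \mid X_{\Pa_j^G})$. Since $G'$ differs from $G$ only in the parent set of $y$, every factor with $j \neq y$ is unchanged, and the conditional independence just obtained lets me rewrite the $y$-factor $p^*(X_y \mid X_{\Pa_y^{G'}}, X_x)$ as $p^*(X_y \mid X_{\Pa_y^{G'}})$. This exhibits $p^*$ as $\prod_j p^*(X_j \mid X_{\Pa_j^{G'}})$, i.e.\ a factorization over $G'$; hence $G'$ is an I-map of $p^*$, every independence of $M'$ holds in $p^*$ (as $G' \in M'$), and $\texttt{P1}(M'; p^*)$ follows.

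I expect the main obstacle to be conceptual rather than computational: recognizing that a one-edge deletion introduces essentially a \emph{single} new independence, and that combining this one fact with the pre-existing factorization over $G$ is enough to upgrade the whole I-map property. The factorization rewrite does the heavy lifting and cleanly sidesteps having to track each d-separation of $M'$ individually; the only standard fact invoked beyond the excerpt is the equivalence between the DAG factorization and the I-map (global Markov) property.
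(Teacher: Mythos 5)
Your proof is correct. Note that the paper does not prove this statement itself: it imports it as Lemma~10 of \citet{chickering2002optimal}, so there is no in-paper argument to compare against. Your reconstruction is the standard one and is faithful to the cited result: score equivalence reduces the hypothesis to $S(G') \geq S(G)$ for representative DAGs with $G = G' + (x \rightarrow y)$; the contrapositive of the first local-consistency clause then yields the single new conditional independence $X_x \indep_{p^*} X_y \mid X_{\Pa_y^{G'}}$; and absorbing that independence into the factorization of $p^*$ over $G$ (available because $\texttt{P1}(M;p^*)$ makes $G$ an I-map) produces a factorization over $G'$, hence the I-map property for $G'$ and $\texttt{P1}(M';p^*)$. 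Two minor remarks: the opening d-separation observation (that $x$ is a nonadjacent nondescendant of $y$ in $G'$) is not actually needed, since the factorization rewrite carries the whole argument; and you correctly flag that the only external ingredient is the equivalence between the DAG factorization and the global Markov property, which is standard.
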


\begin{theorem}[\protect{\citep[Lemma 10]{chickering2002optimal}}] \label{thm:chickering10b}
    If all the
    independencies in $M$ are also independencies in $p^*$, and if no candidate MECs in
    $\mathcal{D}(M)$ can increase the score $S$, then $M = M^*$. We write
    it:
    $$\texttt{P1}(M; p^*) \wedge \left[\left(\max_{M' \in \mathcal{D}(M)} S(M')\right) \leq S(M)\right]
    \Rightarrow \texttt{P2}(M; p^*),$$ where $\texttt{P2}(M; p^*)$ is the proposition
    that $M = M^*$ or equivalently that $p^*$ is faithful to all the DAGs in $M$
\end{theorem}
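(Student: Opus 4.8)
The plan is to prove the contrapositive edge-by-edge: use local consistency to translate the hypothesis ``no deletion in $\mathcal{D}(M)$ increases the score'' into a purely structural statement about conditional independence in $p^*$, and then invoke faithfulness to conclude $M = M^*$. First I would record what local consistency says about a deletion. Let $G$ be a DAG in $M$ containing an edge $i \to j$, and set $G' = G - (i \to j)$, so that $G = G' + (i \to j)$ and $\Pa_j^{G'} = \Pa_j^{G}\setminus\{i\}$. Applying the definition of local consistency to the pair $(G', G)$ yields the dichotomy: if $X_i \indep_{p^*} X_j \mid X_{\Pa_j^{G}\setminus\{i\}}$ then $S(G') > S(G)$, while if $X_i \not\indep_{p^*} X_j \mid X_{\Pa_j^{G}\setminus\{i\}}$ then $S(G') < S(G)$. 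Since $S$ is score equivalent, $S(G') = S(M')$ for the MEC $M' \in \mathcal{D}(M)$ reached by this deletion and $S(G) = S(M)$; hence a deletion in $\mathcal{D}(M)$ strictly increases the score exactly when its edge carries a spurious dependency, i.e. when the displayed conditional independence holds in $p^*$.

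Next I would feed in the hypotheses. Assuming $\max_{M' \in \mathcal{D}(M)} S(M') \le S(M)$, no deletion strictly increases the score, so by the dichotomy, for every DAG $G \in M$ and every edge $i \to j$ of $G$ we must have $X_i \not\indep_{p^*} X_j \mid X_{\Pa_j^{G}\setminus\{i\}}$. Combined with $\texttt{P1}(M;p^*)$, which says each $G \in M$ is an I-map of $p^*$, this is precisely the statement that no edge of $G$ can be removed while preserving the I-map property: every DAG in $M$ is a \emph{minimal} I-map of $p^*$.

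Finally I would close with faithfulness. Because $p^*$ is faithful to $G^*$, every minimal I-map of $p^*$ is Markov equivalent to $G^*$ (for a faithful distribution the perfect map is, up to equivalence, the unique minimal I-map). Hence each $G \in M$ is Markov equivalent to $G^*$, so $M = M^*$, which is exactly $\texttt{P2}(M;p^*)$.

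I expect the main obstacle to be the middle step. Two facts there need care: first, the equivalence ``edge $i \to j$ is removable from an I-map while staying an I-map $\iff X_i \indep_{p^*} X_j \mid X_{\Pa_j^{G}\setminus\{i\}}$,'' whose forward direction is the local Markov property of $G'$ and whose reverse direction uses that $i$ is a non-descendant of $j$ in $G'$; and second, the faithful-distribution fact that minimal I-maps coincide with the perfect-map Markov equivalence class, which I would cite rather than reprove. The score-equivalence bookkeeping needed to lift the DAG-level local-consistency dichotomy to the level of $\mathcal{D}(M)$ is routine, but it must be stated explicitly so that the argmax over $\mathcal{D}(M)$ is genuinely controlled by the edge-level independence conditions.
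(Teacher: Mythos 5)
The paper does not actually reprove this statement; it imports it verbatim as \citet[Lemma 10]{chickering2002optimal}, so the comparison is really against Chickering's original argument. Your first two steps are fine: translating ``no deletion in $\mathcal{D}(M)$ increases the score'' into ``for every $G\in M$ and every edge $i\to j$ of $G$, $X_i\not\indep_{p^*}X_j\mid X_{\Pa_j^G\setminus\{i\}}$'' via local consistency and score equivalence is exactly right, and your equivalence between single-edge removability and that conditional independence (forward via the local Markov property of $G'$, backward via contraction) is sound. The gap is in the last step. The fact you invoke --- ``for a faithful distribution the perfect map is, up to equivalence, the unique minimal I-map'' --- is false. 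Take $p^*$ faithful to the chain $A\to B\to C$ and build the minimal I-map for the ordering $A,C,B$: you get the complete DAG with edges $A\to C$, $A\to B$, $C\to B$. No single edge of it can be removed while preserving the I-map property (none of $A\indep C$, $A\indep B\mid C$, $C\indep B\mid A$ holds for the chain), so it is a minimal I-map of $p^*$, yet it is not Markov equivalent to $G^*$. Minimal I-maps are ordering-dependent and in general not unique up to equivalence, even under faithfulness.

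What saves the theorem --- and what your argument does not exploit --- is that the hypothesis quantifies over \emph{all} DAGs in $M$: $\mathcal{D}(M)$ collects the deletions from every representative, so you actually know that \emph{every} $G\in M$ is a minimal I-map, not merely one of them. (In the chain example, the MEC of the complete DAG also contains the representative ordered $A,B,C$, from which $A\to C$ \emph{is} removable, so that MEC does admit a score-increasing deletion and is correctly excluded.) Turning ``every representative of $M$ is a minimal I-map'' plus faithfulness into $M=M^*$ is the genuinely hard content of Chickering's Lemma 10, and his proof goes through Theorem 4 of that paper (the transformational characterization of independence maps, i.e., Meek's conjecture in this setting): since $\texttt{P1}$ says each $G\in M$ is an independence map of $G^*$, there is a sequence of covered edge reversals and edge additions carrying $G^*$ to $G$; if $M\neq M^*$ the sequence contains at least one addition, and undoing the last one exhibits a representative $G''\in M$ and an edge whose deletion yields an I-map of $p^*$, hence, by the second clause of local consistency, a strict score increase within $\mathcal{D}(M)$ --- a contradiction. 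Some result of this transformational type appears unavoidable; the minimal-I-map shortcut does not close the argument.
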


With \Cref{thm:chickering9,thm:chickering10a,thm:chickering10b}, it follows that GES is
correct: it will reach a MEC $M$ at the end of phase 1 such that $\mathtt{P1}(M; p^*)$
is true. From there, all MECs visited in phase 2 will also have $\mathtt{P1}(M; p^*)$
true, and GES will stop on a MEC $M$ such that $\mathtt{P2}(M; p^*)$ is true, i.e. $M =
M^*$.

\subsection{Theoretical Guarantees of XGES-0 and XGES}
\subsubsection{Theoretical Guarantees of XGES-0}
\label{appendix:sec:theoretical_guarantees_xges0}
We now prove that XGES-0 is correct. 

\thXgesZero*
\begin{proof}We prove that XGES-0 terminates and is correct.
    \begin{itemize}
        \item \textbf{Termination.} The algorithm terminates because the search space is finite
        and the score is non-decreasing at each step.
        \item \textbf{Correctness.} Let $\hat M$ be the MEC returned by XGES-0. Since XGES-0
        terminated, it means that no candidate MECs in $\mathcal{I}(\hat M)$ can increase the
        score $S$. By \Cref{thm:chickering9}, $\texttt{P1}(\hat M; p^*)$ is true. It also
        means that no candidate MECs in $\mathcal{D}(\hat M)$ can increase the score $S$. By
        \Cref{thm:chickering10b}, $\texttt{P2}(\hat M; p^*)$ is true. Hence, XGES-0 is
        correct.
    \end{itemize}
\end{proof}

\clearpage
\subsubsection{Theoretical Guarantees of XGES}
\label{appendix:sec:theoretical_guarantees_xges}
We provide the pseudocode of XGES in \Cref{alg:xges}.

\begin{algorithm}[h]
    \LinesNumbered
    \DontPrintSemicolon
    \KwData{Data $\data \in \mathbb{R}^{n\times d}$, score function $S$}
    \KwDefine{$\delta_{\data,M}(O) = S(\text{Apply}(O,M) ; \data) - S(M; \data)$}  
    \KwResult{MEC of $G^*$}
    $M \leftarrow $ XGES-0$(X, S)$\;
    $\mathcal{D} \leftarrow$ all deletes valid for $M$\;
    \While{$|\mathcal{D}| > 0$}{
        $D^* \leftarrow \argmax_{D \in \mathcal{D}} \{\delta_{\data,M}(D)\}$\;
        $M' \leftarrow$ Apply($D^*, M$)\;  
        $\widetilde{\mathcal{I}} \leftarrow$ all insertions, from any MEC to any MEC, that reinsert the edge deleted by
        $D^*$ \;
        
        $M' \leftarrow$ XGES-0\textsuperscript{*}$(X, S, M', \widetilde{\mathcal{I}})$\;  
        
        \tcc{XGES-0\textsuperscript{*} is a modified version of XGES from
        \Cref{alg:xges-0} that accepts an initial graph $M'$, and a set of forbidden
        insertions $\widetilde{\mathcal{I}}$}  
        \uIf{$S(M'; \data) > S(M; \data)$}{ $M \leftarrow M'$\; $\mathcal{D} \leftarrow$ all
        deletes valid for $M$\; }  
        \Else{ $\mathcal{D} \leftarrow \mathcal{D} \backslash \{D^*\}$\; } }  
        \Return $M$\;
    \caption{XGES}
    \label{alg:xges}
\end{algorithm}

\thXges*

\begin{proof}

    With a locally consistent score, XGES-0 is correct, so $M$ in line 1 is already the 
    MEC $M^*$ of $G^*$. Such a $M^*$ has the maximum score, so lines 9 and 10 are never
    executed. Hence, XGES is correct.

    For completeness, we provide proof of XGES's termination even when the score is not
    locally consistent. 

    \textbf{Termination in practice.} Every time XGES replaces $M$ by $M'$, the score of
        $M$ strictly increases. Since the search space is finite, XGES cannot infinitely
        replace $M$ by $M'$. But then, the other possibility is to remove $D^*$ from
        $\mathcal{D}$, which is a finite set. Hence, XGES terminates.

\end{proof}

\clearpage
\subsection{Navigating the Space of MECs with CPDAGs}
\label{appendix:sec:navigating_space_cpags}
We review how to translate the MEC formulation into a CPDAG formulation for the
practical implementations of GES and XGES. We recall that each MEC $M$ can be uniquely
represented by a CPDAG $P$.

\subsubsection{Original Parametrization}
\label{appendix:sec:ges_parametrization}
In \Cref{sec:manipulating_mecs} we reviewed that given a MEC $M$ represented by the CPDAG $P$,
each $M' \in \mathcal{I}(M)$ can be uniquely associated to an operator Insert$(x,y,T)$,
where $T$ is a set of nodes, such that applying Insert$(x,y,T)$ to $P$ yields a PDAG
$P'$ that represents $M'$ (up to completing it into a CPDAG). The operator
Insert$(x,y,T)$ modifies $P$ by adding the edge $x \rightarrow y$ and orienting all
edges $t - y$ as $t \rightarrow y$ for all $t \in T$. Not all Insert$(x,y,T)$ operators
can be applied to $P$ and yield a PDAG $P'$ that represents a MEC $M' \in
\mathcal{I}(M)$. All the Insert$(x,y,T)$ operators that correspond to a MEC $M' \in
\mathcal{I}(M)$ are called valid operators. There is a bijection between
$\mathcal{I}(M)$ and the set of valid Insert$(x,y,T)$, and there exist conditions that
can be checked on $P$ to determine whether Insert$(x,y,T)$ is valid or not. 

The same holds for $\mathcal{D}(M)$, $\mathcal{R}_c(M)$ and $\mathcal{R}_r(M)$. We
summarize the operators, their validity conditions, their score and their actions on a CPDAG in \Cref{tab:ges_operators_original},
adapted from \citet{chickering2002optimal} for insertion and deletion, and from
\citet{hauser2012characterization} for reversals.

\begin{table}[h!]
    \resizebox{\linewidth}{!}{
    \begin{tabular}{ C{1.3cm}C{5.5cm}C{5.8cm}C{5.5cm} }
        \toprule
        Operator & Insert$(x,y,T)$ & Delete$(x,y, H)$ & Reversal$(x,y,T)$ \\
        \midrule
        Conditions & 
        \begin{itemize}
            \item $x \not\in \Ad(y)$
            \item $T \subset \Ne(y) \backslash \Ad(x)$
            \item $[\Ne(y) \cap \Ad(x)] \cup T$ is a clique
            \item All semi-directed paths from $y$ to $x$ are blocked by $[\Ne(y) \cap
            \Ad(x)] \cup T$  
        \end{itemize} &
        \begin{itemize}
            \item $x \in \Ch(x) \cup \Ne(x)$
            \item $H \subset \Ne(y) \cap \Ad(x)$
            \item $[\Ne(y) \cap \Ad(x)] \backslash H$ is a clique
        \end{itemize} &
        \begin{itemize}
            \item $y \in \Pa(x)$
            \item $T \subset \Ne(y) \backslash \Ad(x)$
            \item $[\Ne(y) \cap \Ad(x)] \cup T$ is a clique
            \item All semi-directed paths from $y$ to $x$ other than ($y,x$) are blocked
            by $[\Ne(y) \cap \Ad(x)] \cup T\cup\Ne(x)$  
        \end{itemize}\\
        
        Score Increase &  
        $\squeezespaces{0}s(y, [\Ne(y) \cap \Ad(x)] \cup T \cup \Pa(y) \cup \{x\})$ $ -
        s(y, [\Ne(y) \cap \Ad(x)] \cup T \cup \Pa(y))$  
        & $s(y, [\Ne(y) \cap \Ad(x)] \backslash H \cup \Pa(y) \backslash \{x\}) - s(y,
        [\Ne(y) \cap \Ad(x)] \backslash H\cup \Pa(y)\cup\{x\})$  
        & $\squeezespaces{0}s(y, [\Ne(y) \cap \Ad(x)] \cup T \cup \Pa(y) \cup \{x\})$ $ -
        s(y, [\Ne(y) \cap \Ad(x)] \cup T \cup \Pa(y))$ 
        $+s(x, \Pa(x) \backslash \{y\}) - s(x, \Pa(x))$\\
        \\
        Actions & \begin{itemize}
            \item Add $x \rightarrow y$ to $P$.
            \item For all $t \in T$, orient $t - y$ as $t \rightarrow y$.
        \end{itemize} & \begin{itemize}
            \item Remove $x \rightarrow y$ (or $x - y$) from $P$.
            \item Orient all edges $y - h$ as $h \rightarrow y$ for $h \in H$.
            \item Orient all edges $x - h$ as $h \rightarrow x$ for $h \in H$.
        \end{itemize} & \begin{itemize}
            \item Reverse $x \rightarrow y$ into $x \leftarrow y$.
            \item For all $t \in T$, orient $t - y$ as $t \rightarrow y$.
        \end{itemize} 
        \\
        \bottomrule
    \end{tabular}
    } \caption{Summary of the operators and their conditions as described in
    \citep{chickering2002optimal}. The conditions for Insert and Delete are described in
    \citet[Definition 12, Definition 13, Theorem 15, Theorem 17, Table
    1]{chickering2002optimal}, and the score increase in \citet[Corollary 16, Corollary
    18]{chickering2002optimal}. The conditions for Reversal are described in
    \citet[Proposition 34]{hauser2012characterization}, and the score increase in
    \citet[Corollary 36]{hauser2012characterization}.}
    \label{tab:ges_operators_original}
\end{table}

\clearpage
\subsubsection{XGES Parametrization}
\label{appendix:sec:xges_parametrization}
We proposed a slightly different parametrization of the operators with adapted
conditions. We recall from \Cref{sec:updating_score} that with the original
parametrization, the scores of the operators depend on which PDAG they are applied
to. With the goal of an efficient implementation that caches the score of valid
operators to avoid recomputation, it is important and convenient for each operator to
have a unique score, agnostic of the PDAG it is applied to.


We described the new parametrization for the Insert operator in \Cref{sec:updating_score}.
We describe the Delete and Reverse operator in \Cref{tab:ges_operators} hereafter.
For the Delete operator, we also replace the set $H$ by the set $C$, its complement in
$\Ne(y) \cap \Ad(x)$.

\begin{table*}[h!]
    \centering
    \resizebox{0.95\linewidth}{!}{
\begin{tabular}{ C{0.33\linewidth} C{0.32\linewidth} C{0.33\linewidth} }
\toprule
Insert($x, y, T$; $E$) & Delete($x, y, C; E$) &  Reverse($x, y, T, E, F$)\\
\midrule
\vspace{-0.4cm}
   \begin{itemize}
       \item[\textbf{I1}:] $y \not\in \Ad(x)$
       \item[\textbf{I2}:] $T \subset \Ne(y) \backslash \Ad(x)$
       \item[\textbf{I3}:] $(\Ne(y) \cap \Ad(x)) \cup T$ is a clique
       \item[\textbf{I4}:] All semi-directed paths from $y$ to $x$ have a node in $(\Ne(y) \cap
       \Ad(x)) \cup T$
       \item[\textbf{I5}:] $E = (\Ne(y) \cap \Ad(x)) \cup T \cup \Pa(y)$ 
   \end{itemize}
   & 
\vspace{-0.4cm}
   \begin{itemize}
       \item[\textbf{D1}:] $y \in \Ch(x) \cup \Ne(x)$
       \item[\textbf{D2}:] $C \subset \Ne(y) \cap \Ad(x)$
       \item[\textbf{D3}:] $C$ is a clique
       \item[\textbf{D4}:] $E = C \cup \Pa(y)$
   \end{itemize}
   & \vspace{-0.4cm}\begin{itemize}
        \item[\textbf{R1}:] $y \in \Pa(x)$ 
        \item[\textbf{R2}:] $T \subset \Ne(y) \backslash \Ad(x)$ 
        \item[\textbf{R3}:] $(\Ne(y) \cap \Ad(x)) \cup T$ is a clique 
        \item[\textbf{R4}:] All semi-directed paths from $y$ to $x$ not using edge $y \rightarrow x$
       have a node in $(\Ne(y) \cap \Ad(x)) \cup T \cup\Ne(x)$ 
       \item[\textbf{R5}:] $E = (\Ne(y) \cap \Ad(x)) \cup T \cup \Pa(y)$ 
       \item[\textbf{R6}:] $F = \Pa(x)$
    \end{itemize} \\
      $\delta = s(y, E \cup \{x\}) - s(y, E )$
    & $\delta=s(y, E \cup \{x\}) - s(y, E \backslash \{x\})$
    & $\delta= s(y, E \cup \{x\}) - s(y, E ) + s(x, F \backslash \{y\}) - s(x, F)$\\
\bottomrule
\end{tabular}}  
\caption{Parametrization of operators by XGES with their validity conditions and score. 
This parametrization renders the score of each operator invariant to the CPDAG it is 
applied to.}
\label{tab:ges_operators}
\end{table*}

\clearpage

\subsection{Efficient Algorithmic Formulation}
\label{appendix:sec:efficient_algorithmic_formulation}

We study how each validity condition described in \Cref{tab:ges_operators} can become 
true after each type of edge update in a PDAG. We only need to consider single-edge updates because 
of \Cref{thm:update_validity}.

\parhead{Edge Updates} There are seven types of edge updates: one of $a\quad b$, $a -
b$, or $a \rightarrow b$ becomes another one (6 = 3*2); and the reversal $a \rightarrow
b$ into $a \leftarrow b$ (which happens only after applying a reversal operator).

For each of these edge updates, we study how they affect the validity conditions of each operator. We 
summarize the results in \Cref{tab:operator_updates} and provide the detailed proofs in the following sections.
We further aggregate the results from \Cref{tab:operator_updates} into \Cref{tab:operator_updates2}.

\begin{table*}[h]
    \centering
    \resizebox{\textwidth}{!}{
    \begin{tabular}{ C{1cm}C{2.5cm}C{2.5cm}C{2.5cm}C{2.5cm}C{2.5cm}C{2.5cm}C{2.5cm} }
    \toprule
     & $a \quad b$ & $a \quad b$ & $a - b$ & $a - b$ & $a \rightarrow b$ & $a
     \rightarrow b$ & $a \rightarrow b$ \\
     & $a - b$ & $a \rightarrow b$ & $a \quad b$ & $a \rightarrow b$ & $a \quad b$ & $a
    - b$ & $a \leftarrow b$ \\
    \midrule
    \textbf{I1} & $\varnothing$ & $\varnothing$ & $\{a, b \} = \{x,y\}$  & $\varnothing$
    & $\{a, b \} = \{x,y\} $ & $\varnothing$ & $\varnothing$ \\[4mm]
    \textbf{I2} & $\curlystack{\bar{a} = y \\ \bar{b} \not\in \Ad(x)}$ & $\varnothing$ &
     $\curlystack{\bar{a}=x \\ \bar{b} \in \Ne(y)} $ & $\varnothing$ &
     $\curlystack{\bar{a}=x \\ \bar{b} \in \Ne(y)} $ & $\curlystack{\bar{a} = y \\
     \bar{b} \not\in \Ad(x)}$  & $\varnothing$\\[4mm]
    \textbf{I3}\textsuperscript{I2} & $\{a, b\} \subset \Ne(y)$ & $\{a, b\} \subset \Ne(y)$ &
    $\curlystack{\bar{a} = y \\ \bar{b} \in \Ad(x)}$ or $\curlystack{\bar{a} = x \\
    \bar{b} \in \Ne(y)}$ & $\curlystack{\bar{a} = y \\ \bar{b} \in \Ad(x) }$ &
    $\curlystack{\bar{a} = x \\ \bar{b} \in \Ne(y) }$ & $\varnothing$ &
    $\varnothing$\\[4mm]
    \textbf{I4} & $\curlystack{\bar{a} = y \\ \bar{b} \in \Ad(x)}$ or
    $\curlystack{\bar{a} = x \\ \bar{b} \in \Ne(y)}$ & $\curlystack{\bar{a} = x \\
    \bar{b} \in \Ne(y)}$ & \texttt{SD}($x,y; \bar{a},\bar{b}$) & \texttt{SD}($x,y; b,a$)
    & \texttt{SD}($x,y; a,b$) & $\curlystack{\bar{a} = y \\ \bar{b} \in \Ad(x)}$ &
    \texttt{SD}($x,y; a,b$) \\[4mm]
    \textbf{I5} & $\curlystack{\bar{a} = y \\ \bar{b} \in \Ad(x)}$ or
    $\curlystack{\bar{a} = x \\ \bar{b} \in \Ne(y)}$ & $b = y$ or $\curlystack{\bar{a} =
    x \\ \bar{b} \in \Ne(y)}$ & $\curlystack{\bar{a} = y \\ \bar{b} \in \Ad(x)}$ or
    $\curlystack{\bar{a} = x \\ \bar{b} \in \Ne(y)}$ & $b = y$ or $\curlystack{\bar{a} =
    y \\ \bar{b} \in \Ad(x)}$ & $b = y$ or $\curlystack{\bar{a} = x \\ \bar{b} \in
    \Ne(y)}$ & $b = y$ or $\curlystack{\bar{a} = y \\ \bar{b} \in \Ad(x)}$ & $\bar{a}=y$
    \\
    \midrule
    \textbf{D1} & $\{a, b \} = \{x,y\} $ & $(a,b) = (x,y)$ & $\varnothing$ &
    $\varnothing$ & $\varnothing$ & $(a,b) = (y,x)$ & $(a,b) = (y,x)$\\
    \textbf{D2} & $\curlystack{\bar{a}=y \\ \bar{b} \in \Ad(x)}$ or
    $\curlystack{\bar{a}=x \\ \bar{b} \in \Ne(y)}$ & $\curlystack{\bar{a}=x \\ \bar{b}
    \in \Ne(y)}$ & $\varnothing$ & $\varnothing$ & $\varnothing$ &
    $\curlystack{\bar{a}=y \\ \bar{b} \in \Ad(x)}$ & $\varnothing$\\
    \textbf{D3}\textsuperscript{D2} & $\{a,b\} \subset \Ne^u(y) \cap \Ad^u(x)$ & $\{a,b\} \subset \Ne^u(y)
    \cap \Ad^u(x)$ & $\varnothing$ & $\varnothing$ & $\varnothing$ & $\varnothing$ &
    $\varnothing$\\
    \textbf{D4} & $\varnothing$ & $b=y$ & $\varnothing$ & $b=y$ & $b=y$ & $b=y$ &
    $\bar{a} = y$\\
    \midrule
    \textbf{R1} & $\varnothing$ & $(a,b) = (y,x)$ & $\varnothing$ & $(a,b) = (y,x) $ &
    $\varnothing$ & $\varnothing$ & $(a,b) = (x,y)$\\
    \textbf{R2} & See \textbf{I2} & See \textbf{I2} & See \textbf{I2} & See \textbf{I2} &
    See \textbf{I2} & See \textbf{I2} & See \textbf{I2}\\
    \textbf{R3}\textsuperscript{R2} & See \textbf{I3} & See \textbf{I3} & See \textbf{I3} &
    See \textbf{I3} & See \textbf{I3} & See \textbf{I3} & See \textbf{I3}\\
    \textbf{R4} & See \textbf{I4} or $\bar{a}=x$ & See \textbf{I4} & See \textbf{I4} 
    & See \textbf{I4} & See \textbf{I4} & See \textbf{I4} or $x=\bar{a}$ & See \textbf{I4} \\
    \textbf{R5} & See \textbf{I5} & See \textbf{I5} & See \textbf{I5} & See \textbf{I5}
    & See \textbf{I5} & See \textbf{I5} & See \textbf{I5} \\
    \textbf{R6} & $\varnothing$ & $b=x$ & $\varnothing$ & $b=x$ & $b=x$ & $b=x$ &
    $\bar{a} = x$\\
    \bottomrule
    \end{tabular}
    } \caption{For each type of edge update involving an edge $(a,b)$, we list necessary
    conditions for each validity conditions of operators Insert$(x,y,T,E)$,
    Delete$(x,y,C,E)$, and Reverse$(x,y,T,E,F)$ to become valid. The notation 
    Condition($\bar a, \bar b$) is a shorthand for $\text{Condition}(a,b) \vee \text{Condition}(b,a)$.    
    The notation \texttt{SD}($x,y; a,b$) is a shorthand for the necessary condition:
    $(a,b)$, in that order, is on a semi-directed path from $y$ to $x$. All operators
    $\Pa, \Ne, \Ad$ and \texttt{SD}($x,y; a,b$) are computed with respect to the PDAG
    before the edge update. All operators $\Pa^u, \Ne^u, \Ad^u$ are computed with
    respect to the PDAG after the edge update.}
    \label{tab:operator_updates}
\end{table*}

\Cref{tab:operator_updates2} rewrites \Cref{tab:operator_updates} with the statements to
be conditions centered around $x$ and $y$, and aggregate all the necessary conditions
together. Whenever a single edge $(a,b)$ is updated, only the Insert operators satisfying
the condition \textbf{I}-any can become valid and need to be checked. The same holds for
Delete with \textbf{D}-any, and Reverse with \textbf{R}-any.
\begin{table*}[h!]
    \centering
    \resizebox{\textwidth}{!}{
    \begin{tabular}{ cC{4.2cm}C{3.5cm}C{4.2cm}C{2.2cm}C{2.5cm}C{2.4cm}C{2.2cm} }
    \toprule
     & $a \quad b$ & $a \quad b$ & $a - b$ & $a - b$ & $a \rightarrow b$ & $a
     \rightarrow b$ & $a \rightarrow b$ \\
     & $a - b$ & $a \rightarrow b$ & $a \quad b$ & $a \rightarrow b$ & $a \quad b$ & $a
    - b$ & $a \leftarrow b$ \\
    \midrule
    \textbf{I1} & $\varnothing$ & $\varnothing$ & $\{x,y\} = \{a, b \}$  & $\varnothing$
    & $\{x,y\} = \{a, b \} $ & $\varnothing$ & $\varnothing$ \\[2mm]
    \textbf{I2} & $y = \bar{a}$ & $\varnothing$ & $\curlystack{x= \bar{a} \\ y \in
     \Ne(\bar{b})} $ & $\varnothing$ & $\curlystack{x= \bar{a} \\ y \in \Ne(\bar{b})} $
     & $y = \bar{a}$  & $\varnothing$\\[5mm]
     \textbf{I3}\textsuperscript{I2} & $y \in \Ne(a)\cap \Ne(b)$ & $y \in \Ne(a)\cap \Ne(b)$ &
     $\curlystack{y = \bar{a} \\ x \in \Ad(\bar{b})}$ or $\curlystack{x = \bar{a} \\ y
     \in \Ne(\bar{b})}$ & $\curlystack{y = \bar{a} \\ x \in \Ad(\bar{b}) }$ &
     $\curlystack{x = \bar{a} \\ y \in \Ne(\bar{b}) }$ & $\varnothing$ &
     $\varnothing$\\[2mm]
    \textbf{I4} & $\curlystack{y = \bar{a} \\ x \in \Ad(\bar{b})}$ or $\curlystack{x =
    \bar{a} \\ y \in \Ad(\bar{b})}$ & $\curlystack{x = \bar{a} \\
    y \in \Ne(\bar{b})}$ & \texttt{SD}($x,y; \bar{a},\bar{b}$) & \texttt{SD}($x,y; b,a$)
    & \texttt{SD}($x,y; a,b$) & $\curlystack{y = \bar{a} \\ x \in \Ad(\bar{b})}$ &
    \texttt{SD}($x,y; a,b$) \\[5mm]
    \textbf{I5} & $\curlystack{y = \bar{a} \\ x \in \Ad(\bar{b})}$ or $\curlystack{x =
    \bar{a} \\ y \in \Ne(\bar{b})}$ & $y = b$ or $\curlystack{x = \bar{a} \\ y \in
    \Ne(\bar{b})}$ & $\curlystack{y = \bar{a} \\ x \in \Ad(\bar{b})}$ or $\curlystack{x
    = \bar{a} \\ y \in \Ne(\bar{b})}$ & $y = b$ or $\curlystack{y = \bar{a} \\ x \in
    \Ad(\bar{b})}$ & $y = b$ or $\curlystack{x = \bar{a} \\ y \in \Ne(\bar{b})}$ & $y =
    b$ or $\curlystack{y = \bar{a} \\ x \in \Ad(\bar{b})}$ & $y=\bar{a}$ \\
    \midrule
    \textbf{I}-any & 
    \begin{itemize}[leftmargin=*]
        \item $y \in \{a,b\}$ 
        \item $y \in \Ne(a)\cap \Ne(b)$
        \item $x = a$ and $y \in \Ne(b)$
        \item $x = b$ and $y \in \Ne(a)$
    \end{itemize}
    & \begin{itemize}[leftmargin=*]
        \item $y = b$
        \item $y \in \Ne(a)\cap \Ne(b)$
        \item $x = a$ and $y \in \Ne(b)$
        \item $x = b$ and $y \in \Ne(a)$
    \end{itemize}
    & \begin{itemize}[leftmargin=*]
        \item $x = a$ and $y \in \Ne(b) \cup \{b\}$
        \item $x = b$ and $y \in \Ne(a) \cup \{a\}$
        \item $y = a$ and $x \in \Ad(b)$
        \item $y = b$ and $x \in \Ad(a)$
        \item \texttt{SD}($x,y; a,b$)
        \item \texttt{SD}($x,y; b,a$)
    \end{itemize}
    & \begin{itemize}[leftmargin=*]
        \item $y = a$ and $x \in \Ad(b)$
        \item $y = b$
        \item \texttt{SD}($x,y; b,a$)
    \end{itemize}
    & \begin{itemize}[leftmargin=*]
        \item $y=b$
        \item $x = a$ and $y \in \Ne(b) \cup \{b\}$
        \item $x = b$ and $y \in \Ne(a) \cup \{a\}$
        \item \texttt{SD}($x,y; a,b$)
    \end{itemize}
    & \begin{itemize}[leftmargin=*]
        \item $y \in \{a,b\}$
    \end{itemize}
    & \begin{itemize}[leftmargin=*]
        \item $y \in \{a,b\}$
        \item \texttt{SD}($x,y; a,b$)
    \end{itemize}
    \\
    \midrule
    \textbf{D1} & $\{x,y\} = \{a, b \} $ & $(x,y) = (a,b)$ & $\varnothing$ &
    $\varnothing$ & $\varnothing$ & $(x,y) = (b,a )$ & $(x,y) = (b,a)$\\
    \textbf{D2} & $\curlystack{y = \bar{a} \\ x \in \Ad(\bar{b})}$ or
    $\curlystack{x = \bar{a} \\ y \in \Ne(\bar{b})}$ & $\curlystack{x = \bar{a} \\ 
    y \in \Ne(\bar{b})}$ & $\varnothing$ & $\varnothing$ & $\varnothing$ &
    $\curlystack{y = \bar{a} \\ x \in \Ad(\bar{b})}$ & $\varnothing$\\
    \textbf{D3}\textsuperscript{D2} & $\curlystack{x \in \Ad^u(a) \cap \Ad^u(b) \\ y
        \in \Ne^u(a) \cap \Ne^u(b)}$ & $\curlystack{ x \in \Ad^u(a) \cap \Ad^u(b) \\ y
        \in \Ne^u(a) \cap \Ne^u(b)}$ & $\varnothing$ & $\varnothing$ & $\varnothing$ &
        $\varnothing$ & $\varnothing$\\
    \textbf{D4} & $\varnothing$ & $y=b$ & $\varnothing$ & $y=b$ & $y=b$ & $y=b$ &
    $y\in\{a,b\}$\\
    \midrule
    \textbf{D}-any &
    \begin{itemize}[leftmargin=*]
        \item $y \in \{a,b\}$
        \item $x \in \{a,b\}$
        \item $x \in \Ad(a) \cap \Ad(b)$ and $y \in \Ne(a) \cap \Ne(b)$
    \end{itemize}
    & \begin{itemize}[leftmargin=*]
        \item $y = b$
        \item $x \in \{a,b\}$
        \item $x \in \Ad(a) \cap \Ad(b)$ and $y \in \Ne(a) \cap \Ne(b)$
    \end{itemize}
    & $\varnothing$
    & \begin{itemize}[leftmargin=*]
        \item $y = b$
    \end{itemize}
    & \begin{itemize}[leftmargin=*]
        \item $y = b$
    \end{itemize}
    & \begin{itemize}[leftmargin=*]
        \item $y \in \{a, b\}$ 
    \end{itemize}
    & \begin{itemize}[leftmargin=*]
        \item $y \in \{a,b\}$
    \end{itemize}\\
    \midrule
    \textbf{R}-any & 
    \begin{itemize}[leftmargin=*]
        \item $y \in \{a,b\}$ 
        \item $y \in \Ne(a)\cap \Ne(b)$
        \item $x \in \{a,b\}$
    \end{itemize}
    & \begin{itemize}[leftmargin=*]
        \item $y = b$
        \item $y \in \Ne(a)\cap \Ne(b)$
        \item $x = a$ and $y \in \Ne(b)$
        \item $x = b$
    \end{itemize}
    & \begin{itemize}[leftmargin=*]
        \item $x = a$ and $y \in \Ne(b) \cup \{b\}$
        \item $x = b$ and $y \in \Ne(a) \cup \{a\}$
        \item $y = a$ and $x \in \Ad(b)$
        \item $y = b$ and $x \in \Ad(a)$
        \item \texttt{SD}($x,y; a,b$)
        \item \texttt{SD}($x,y; b,a$)
    \end{itemize}
    & \begin{itemize}[leftmargin=*]
        \item $y = a$ and $x \in \Ad(b)$
        \item $y = b$
        \item \texttt{SD}($x,y; b,a$)
        \item $x = b$
    \end{itemize}
    & \begin{itemize}[leftmargin=*]
        \item $y=b$
        \item $x = a$ and $y \in \Ne(b) \cup \{b\}$
        \item $x = b$
        \item \texttt{SD}($x,y; a,b$)
    \end{itemize}
    & \begin{itemize}[leftmargin=*]
        \item $y \in \{a,b\}$
        \item $x=b$
    \end{itemize}
    & \begin{itemize}[leftmargin=*]
        \item $y \in \{a,b\}$
        \item $x \in \{a,b\}$
        \item \texttt{SD}($x,y; a,b$)
    \end{itemize}
    \\
    \bottomrule
    \end{tabular}
    } 
    \caption{For each type of edge update involving an edge $(a,b)$, we list necessary
    conditions for each validity conditions of operators Insert$(x,y,T,E)$,
    Delete$(x,y,C,E)$, and Reverse$(x,y,T,E,F)$ to become valid.    
    The notation \texttt{SD}($x,y; a,b$) is a shorthand for the necessary condition:
    $(a,b)$, in that order, is on a semi-directed path from $y$ to $x$. All operators
    $\Pa, \Ne, \Ad$ and \texttt{SD}($x,y; a,b$) are computed with respect to the PDAG
    before the edge update. The rows \textbf{I}-any, \textbf{D}-any, and \textbf{R}-any
    aggregate the necessary conditions for each validity condition and express them in 
    a disjunctive form: at least one of the conditions must be true for the operator to
    become valid.}
    \label{tab:operator_updates2}
\end{table*}

\newtheorem{opup}{Operator Update}
\newtheorem{lemma}{Lemma}

\subsubsection{I1}

\begin{opup}[Updates on I1]
    Assume $\mathbf{I1}(x,y, T; E)$ is false and becomes true after an update involving
    $(a,b)$. Then,
    \begin{itemize}
        \item the update cannot be $U1: (a \quad b) \rightsquigarrow (a - b)$,
        \item the update cannot be $U2: (a \quad b) \rightsquigarrow (a \rightarrow b)$,
        \item if the update is $U3: (a - b) \rightsquigarrow (a \quad b)$ then $ \{a,b\}
        =\{x,y\}$,
        \item the update cannot be $U4: (a - b) \rightsquigarrow (a \rightarrow b)$,
        \item if the update is $U5: (a \rightarrow b) \rightsquigarrow (a \quad b)$ then
        $\{a,b\} =\{x,y\}$,
        \item the update cannot be $U6: (a \rightarrow b) \rightsquigarrow (a - b)$,
        \item the update cannot be $U7: (a \rightarrow b) \rightsquigarrow (a \leftarrow
        b)$.
    \end{itemize}
\end{opup}

\begin{proof}
    We recall that $\mathbf{I1}(x,y, T; E)$ is $y \not\in \Ad(x)$. So the assumptions
    are $y \in \Ad(x)$ and $y \not\in \Ad^u(x)$. But U1, U2, U4, U6, and U7 do not
    remove any elements from any $\Ad(x')$ set. So none of them can render
    $\mathbf{I1}(x,y, T; E)$ true. 

    U3 and U5 can only remove elements from $\Ad(a)$ or $\Ad(b)$, and do so only by
    removing $b$ or $a$, respectively. So $\{x,y\} = \{a,b\}$.
\end{proof}

\subsubsection{I2}
We start with a general lemma for I2.

\begin{lemma}[I2 to become true]\label{lemma:I2} Assume $\mathbf{I2}(x,y, T; E)$ is
    false and becomes true after an edge update. Then (i) $\Ne^u(y)$ gained an element,
    or (ii) $\Ad^u(x)$ lost an element that was in $\Ne(y)$.
\end{lemma}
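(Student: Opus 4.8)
The plan is to reduce the failure of the set-inclusion condition $\mathbf{I2}$ to a single witnessing element and then track that element across the update. Recall that $\mathbf{I2}(x,y,T;E)$ is the statement $T \subseteq \Ne(y)\setminus\Ad(x)$. Since it is false before the update, there must exist some $t \in T$ with $t \notin \Ne(y)\setminus\Ad(x)$; unpacking the set difference, this means $t \notin \Ne(y)$ or $t \in \Ad(x)$. After the update $\mathbf{I2}$ holds, so in particular this same $t$ now satisfies $t \in \Ne^u(y)$ and $t \notin \Ad^u(x)$. The whole argument then rests on a two-way case split on which disjunct was responsible for the pre-update failure.

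First I would treat the case $t \notin \Ne(y)$. Together with $t \in \Ne^u(y)$, this exhibits an element present in $\Ne^u(y)$ but absent from $\Ne(y)$, so $\Ne^u(y)$ gained an element and conclusion (i) holds. Otherwise $t \in \Ne(y)$, and since $t$ witnessed the failure, the remaining disjunct $t \in \Ad(x)$ must hold. Combining $t \in \Ad(x)$ with the post-update fact $t \notin \Ad^u(x)$ shows that $\Ad^u(x)$ lost the element $t$; and because $t \in \Ne(y)$, this lost element was indeed in $\Ne(y)$, which is exactly conclusion (ii). This closes both cases.

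The argument is essentially a bookkeeping exercise, so the only real subtlety — the ``main obstacle,'' such as it is — is to organize the case split so the two non-exclusive failure disjuncts are handled cleanly. I therefore branch on $t \notin \Ne(y)$ versus $t \in \Ne(y)$ (rather than on the two disjuncts directly): this forces the $\Ad$-membership to supply the failure in the second branch and guarantees the lost element lies in $\Ne(y)$, as required by (ii). I would also note in passing that $\mathbf{I2}$ is vacuously true when $T = \varnothing$, so its failure automatically supplies a nonempty $T$ and hence a witness $t$, which legitimizes the opening step.
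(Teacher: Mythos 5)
Your proof is correct and follows essentially the same route as the paper's: both extract a witness $t \in T$ that violates $T \subseteq \Ne(y)\setminus\Ad(x)$ before the update but satisfies it afterward, then split on $t \in \Ne(y)$ versus $t \notin \Ne(y)$ to land on conclusions (ii) and (i) respectively. The remark about the vacuous case $T = \varnothing$ is a harmless addition; no gaps.
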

\begin{proof}
    We recall that $\mathbf{I2}(x,y, T; E)$ is $T \subset \Ne(y) \backslash \Ad(x)$. If
     $\mathbf{I2}(x, y, T; E)$ changes from false to true then there exists $t \in T$
     that was not in $\Ne(y) \backslash \Ad(x)$ and is now in $\Ne^u(y) \backslash
     \Ad^u(x)$, which writes $$(t\not \in \Ne(y) \vee t \in \Ad(x)) \wedge t \in
     \Ne^u(y) \wedge t \not\in \Ad^u(x).$$
     \begin{itemize}
        \item If $t \in \Ne(y)$ then we must have $t \in \Ad(x)$ and $t \not\in
        \Ad^u(x)$. So $\Ad^u(x)$ lost $t$, which was in $\Ne(y) \cap \Ad(x)$.
        \item If $t \not\in \Ne(y)$, then $\Ne^u(y)$ gained $t$. 
     \end{itemize}
    In conclusion, either $\Ne^u(y)$ gained an element, or $\Ad^u(x)$ lost an element
     that was is in $\Ne(y)$. 

\end{proof}

\begin{opup}[Updates on I2]
    Assume $\mathbf{I2}(x,y, T; E)$ is false and becomes true after an update involving
    $(a,b)$. Then,
    \begin{itemize}
        \item if the update is $U1: (a \quad b) \rightsquigarrow (a - b)$ then $y \in
        \{a, b\}$,
        \item the update cannot be $U2: (a \quad b) \rightsquigarrow (a \rightarrow b)$,
        \item if the update is $U3: (a - b) \rightsquigarrow (a \quad b)$ then
            $\curlystack{ a=x \\ b \in \Ne(y)}$ or $\curlystack{b=x \\ a \in \Ne(y)}$,
        \item the update cannot be $U4: (a - b) \rightsquigarrow (a \rightarrow b)$,
        \item if the update is $U5: (a \rightarrow b) \rightsquigarrow (a \quad b)$ then
            $\curlystack{ a=x \\ b \in \Ne(y)}$ or $\curlystack{b=x \\ a \in \Ne(y)}$,
        \item if the update is $U6: (a \rightarrow b) \rightsquigarrow (a - b)$ then $y
        \in \{a, b\}$,
        \item the update cannot be $U7: (a \rightarrow b) \rightsquigarrow (a \leftarrow
        b)$.
    \end{itemize}
\end{opup}

\begin{proof}
    According to \Cref{lemma:I2}, either $\Ne^u(y)$ gained an element or $\Ad^u(x)$ lost
    an element (that was in $\Ne(y)$).

    We now study the necessary conditions for each update, if it was applied and made
    $\mathbf{I2}(x,y, T; E)$ become true.
    \begin{itemize}
        \item The updates U1 and U6 can only add elements to $\Ne^u(a)$ or $\Ne^u(b)$
        and not remove any element to any $\Ad^u(x')$. So $\Ne^u(y)$ gained an element
        and $y \in \{a, b\}$.
        \item The updates U2, U4, and U7 do not add any elements to any $\Ne^u(y')$, and
        do not remove any elements to any $\Ad^u(x')$. So none of them can make
        $\mathbf{I2}(x,y, T; E)$ become true.
        \item The updates U3 and U5 can only remove elements from $\Ad^u(a)$ or
        $\Ad^u(b)$ and not add any element to any $\Ne^u(y')$. So $\Ad^u(x)$ lost an
        element and $x \in \{a, b\}$. If $x = a$ (resp. $x = b$) then the lost element
        must be $b$ (resp. $a$) and so $b \in \Ne(y)$ (resp. $a \in \Ne(y)$).
    \end{itemize}
\end{proof}

\subsubsection{I3}
We start with a general lemma for I3.
\begin{lemma}[I3 to become true]\label{lemma:I3} Assume $\mathbf{I3}(x,y, T; E)$ is
    false and becomes true after an edge update about $(a,b)$. Further, assume that
    $\mathbf{I2}(x,y, T; E)$ is true after the update (regardless of its status before
    the update). Then either (i) $\{a,b\} \subset \Ne(y)$ and the update rendered $a,b$
    adjacent, or (ii) $\Ne^u(y)$ lost an element that was in $\Ne(y) \cap \Ad(x)$, or
    (iii) $\Ad^u(x)$ lost an element that was in $\Ne(y) \cap \Ad(x)$.
\end{lemma}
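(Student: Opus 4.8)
The plan is to track the single set whose clique-ness $\mathbf{I3}$ tests. Write $N=\Ne(y)$, $A=\Ad(x)$ for the pre-update adjacency sets and $N^u=\Ne^u(y)$, $A^u=\Ad^u(x)$ for the post-update ones, so that the tested set is $K=(N\cap A)\cup T$ before the update and $K^u=(N^u\cap A^u)\cup T$ after it (recall $T$ and $E$ are fixed operator parameters; only the graph moves). Since $\mathbf{I3}$ is false before the update, $K$ is not a clique, so I can fix a pair of distinct nodes $u,v\in K$ that are \emph{non-adjacent} in the pre-update PDAG; since $\mathbf{I3}$ is true afterward, $K^u$ is a clique, hence every pair of distinct nodes of $K^u$ is adjacent. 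The whole argument is then a clean dichotomy on whether this witnessing pair survives into $K^u$.

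First I would treat the case where at least one witness, say $u$, satisfies $u\notin K^u$. Because $T\subseteq K^u$ holds unconditionally (it is one of the two sets in the union defining $K^u$), we get $u\notin T$, and therefore $u\in N\cap A$. Combining $u\notin K^u$ with $u\notin T$ forces $u\notin N^u\cap A^u$, i.e. $u\notin N^u$ or $u\notin A^u$. In the first case $N^u$ has lost the element $u\in N\cap A$, which is conclusion (ii); in the second case $A^u$ has lost $u\in N\cap A$, which is conclusion (iii). This half needs nothing beyond $T\subseteq K^u$.

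It remains to treat the case where both $u,v\in K^u$. Since $K^u$ is a clique and $u\neq v$, the nodes $u,v$ are adjacent after the update although non-adjacent before it. A single-edge update only alters the adjacency of the pair $\{a,b\}$, and turning a non-adjacency into an adjacency forces the update to be an edge \emph{insertion}; hence $\{u,v\}=\{a,b\}$ and the update renders $a,b$ adjacent. To land in conclusion (i), I must upgrade the membership "$u,v\in K$" to "$\{a,b\}\subset\Ne(y)$". The key observation is $y\notin K$: indeed $y\notin N$ since a node is not its own neighbor (so $y\notin N\cap A$), and $y\notin T$ because the post-update $\mathbf{I2}$ hypothesis gives $T\subseteq N^u\setminus A^u$ while $y\notin N^u$. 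Since $u,v\in K$ but $y\notin K$, we get $y\notin\{a,b\}$, so the insertion touches no edge incident to $y$ and $N^u=N$. Finally, each witness $w\in\{u,v\}$ lies either in $N\cap A\subseteq N$, or in $T$, and in the latter case post-update $\mathbf{I2}$ places $w\in N^u\setminus A^u\subseteq N^u=N$; either way $w\in N=\Ne(y)$, giving $\{a,b\}\subset\Ne(y)$, which together with the newly created adjacency is conclusion (i).

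The one delicate point, and the step I would verify most carefully, is precisely this last reduction when a witness sits in $T$ rather than in $N\cap A$. There both the post-update $\mathbf{I2}$ hypothesis and the fact that the insertion leaves $\Ne(y)$ untouched (via $y\notin\{a,b\}$) are essential to pull $w$ back into $\Ne(y)$; dropping the $\mathbf{I2}$ assumption would break the argument, which explains why it is carried explicitly in the statement (and flagged by the superscript on $\mathbf{I3}$ in the tables). Everything else is bookkeeping on the seven edge-update types, only needed to confirm that "non-adjacent becomes adjacent" selects exactly the two insertion updates.
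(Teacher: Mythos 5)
Your proof is correct and follows essentially the same route as the paper's: both fix a non-adjacent witness pair in the pre-update set $(\Ne(y)\cap\Ad(x))\cup T$ and split on whether that pair survives into the post-update set, using the post-update $\mathbf{I2}$ hypothesis in exactly the same two places (to pull witnesses in $T$ back into $\Ne(y)$, and to exclude $y$ from $\{a,b\}$ so that $\Ne^u(y)=\Ne(y)$). The only difference is cosmetic ordering in case (i), where the paper derives $\{a,b\}\subset\Ne^u(y)$ first and then argues $a,b\neq y$, while you establish $y\notin K$ first.
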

\begin{proof}
    We recall that $\mathbf{I3}(x,y, T; E)$ is $[\Ne(y) \cap \Ad(x)] \cup T$ is a
    clique, and that $\mathbf{I2}(x,y, T; E)$ is $T \subset \Ne(y) \backslash \Ad(x)$.
    So the assumptions are $[\Ne(y) \cap \Ad(x)] \cup T$ is not a clique (in the
    pre-update PDAG) meanwhile $[\Ne^u(y) \cap \Ad^u(x)] \cup T$ is a clique (in the
    post-update PDAG), and $T \subset \Ne^u(y) \backslash \Ad^u(x)$.

    Since $[\Ne(y) \cap \Ad(x)] \cup T$ is not a clique, it must contain two nodes $c,d$
    that are not connected in the pre-update PDAG. 

    We distinguish two cases: 
    \begin{itemize}
        \item If $\{c,d\} \subset [\Ne^u(y) \cap \Ad^u(x)] \cup T$, then the update must
        have connected $c$ and $d$. So $c$ and $d$ are $a$ and $b$. Also since $T
        \subset \Ne^u(y) \backslash \Ad^u(x)$, then $\{a,b\} \subset [\Ne^u(y) \cap
        \Ad^u(x)] \cup T \subset \Ne^u(y)$. Finally, an update can only change one edge
        at a time, so $\Ne^u(y) = \Ne(y)$ (since $a$ and $b$ are not $y$ as $y$ cannot
        be a neighbor of itself). Hence, $\{a,b\} \subset \Ne(y)$ and $a$ and $b$ became
        adjacent.
        
        \item Else, $c$ or $d$ has been removed from $[\Ne^u(y) \cap \Ad^u(x)] \cup T$
        during the update. Without loss of generality, assume $c$ was removed. Since $T$
        does not change, then $c$ was removed from $[\Ne(y) \cap \Ad(x)]$. So $\Ne^u(y)$
        or $\Ad^u(x)$ lost an element that was in $\Ne(y) \cap \Ad(x)$.
    \end{itemize}
    
    Hence, [$\{a,b\} \subset \Ne(y)$ and $a$ and $b$ became adjacent], or $\Ne(y)$ lost
     an element that was in $\Ne(y) \cap \Ad(x)$ or $\Ad(x)$ lost an element that was in
     $\Ne(y) \cap \Ad(x)$.
\end{proof}

\begin{opup}[Updates on I3]
    Assume $\mathbf{I3}(x,y, T; E)$ is false and becomes true after an update involving
    $(a,b)$. Further, assume that
    $\mathbf{I2}(x,y, T; E)$ is true after the update. Then,
    \begin{itemize}
        \item if the update is $U1: (a \quad b) \rightsquigarrow (a - b)$ then $\{a,b\}
        \subset \Ne(y)$,
        \item if the update is $U2: (a \quad b) \rightsquigarrow (a \rightarrow b)$ then
        $\{a,b\} \subset \Ne(y)$,
        \item if the update is $U3: (a - b) \rightsquigarrow (a \quad b)$ then
        $\curlystack{ a \in  \{x,y\} \\ 
            b \in \Ne(y) \cap \Ad(x) }$ or $\curlystack{ b \in  \{x,y\} \\ 
            a \in \Ne(y) \cap \Ad(x) }$,
        \item if the update is $U4: (a - b) \rightsquigarrow (a \rightarrow b)$ then
        $\curlystack{ a =y \\ 
            b \in \Ne(y) \cap \Ad(x) }$ or $\curlystack{ b = y \\ 
            a \in \Ne(y) \cap \Ad(x) }$,
        \item if the update is $U5: (a \rightarrow b) \rightsquigarrow (a \quad b)$ then
        $\curlystack{ a = x \\ 
            b \in \Ne(y) \cap \Ad(x) }$ or $\curlystack{ b = x \\ 
            a \in \Ne(y) \cap \Ad(x) }$,
        \item if the update is $U6: (a \rightarrow b) \rightsquigarrow (a - b)$ then it
        is impossible,
        \item if the update is $U7: (a \rightarrow b) \rightsquigarrow (a \leftarrow b)$
        then it is impossible.
    \end{itemize}
    
\end{opup}

\begin{proof}
    According to \Cref{lemma:I3}, either $\{a,b\} \subset \Ne(y)$ and the update
    rendered $a,b$ adjacent, or $\Ne^u(y)$ lost an element that was in $\Ne(y) \cap
    \Ad(x)$, or $\Ad^u(x)$ lost an element that was in $\Ne(y) \cap \Ad(x)$.

    We now study the necessary conditions for each update, if it was applied and made
    $\mathbf{I3}(x,y, T; E)$ become true.
    \begin{itemize}
        \item The updates U1 and U2 can only add elements to sets like $\Ne^u(y)$ or
        $\Ad^u(x)$, so the only possibility is that $\{a,b\} \subset \Ne(y)$.
        \item The update U3 does not render any edge adjacent. So by \Cref{lemma:I3},
        $\Ne(y)$ or $\Ad(x)$ lost an element $c$ that was in $\Ne(y) \cap \Ad(x)$. 
        \begin{itemize}
            \item If it is $\Ne(y)$ that lost $c$, then we have $\{a,b\} = \{y,c\}$.
            Without loss of generality, $a = y$ and $b = c$, so $b \in \Ne(y) \cap
            \Ad(x)$. 
            \item If it is $\Ad(x)$ that lost $c$, then we have $\{a,b\} = \{x,c\}$.
            Without loss of generality, $a = x$ and $b = c$, so $b \in \Ne(y) \cap
            \Ad(x)$.
        \end{itemize} 
            So gathering all cases and with generality:
            $$ \curlystack{ a \in  \{x,y\} \\ 
                b \in \Ne(y) \cap \Ad(x) } \text{ or } \curlystack{ b \in  \{x,y\} \\ 
                a \in \Ne(y) \cap \Ad(x) }$$
        \item The update U4 does not render any edge adjacent, does not remove any
        element from any $\Ad(x')$, but removes elements from $\Ne^u(a)$ or $\Ne^u(b)$
        (resp.$b$ or $a$). So by \Cref{lemma:I3}, $a=y$ (resp. $b=y$) and $b \in \Ne(y)
        \cap \Ad(x)$ (resp. $a \in \Ne(y) \cap \Ad(x)$).
        \item The update U5 does not render any edge adjacent, does not remove any
        element from any $\Ne(y')$, but removes elements from $\Ad^u(a)$ or $\Ad^u(b)$
        (resp.$b$ or $a$). So by \Cref{lemma:I3}, $a=x$ (resp. $b=x$) and $b \in \Ne(y)
        \cap \Ad(x)$ (resp. $a \in \Ne(y) \cap \Ad(x)$).
        \item The updates U6 and U7 cannot remove any element from any $\Ne(y')$ or
        $\Ad(x')$, and do not make $a$ and $b$ adjacent (they were already adjacent). So
        by \Cref{lemma:I3}, they cannot make $\mathbf{I3}(x,y, T; E)$ become true.
\end{itemize}
\end{proof}

\subsubsection{I4}
We start with a general lemma for I4.
\begin{lemma}[I4 to become true]\label{lemma:I4} Assume $\mathbf{I4}(x,y, T; E)$ is
    false and becomes true after an edge update about $(a,b)$. If the update does not
    reverse a directed edge, does not direct an undirected edge, and does not delete an
    edge, then the update must have added an element to $[\Ne(y) \cap \Ad(x)]$.
    
    Otherwise, the update invalidated an edge on a semi-directed path from $y$ to $x$
    (where invalidated means that the edge $(a,b)$ cannot be traversed from $a$ to $b$
    anymore with the semi-directed rules: either $a$ and $b$ are not adjacent anymore,
    or the edge is now $a \leftarrow b$.).
\end{lemma}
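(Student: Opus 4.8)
The plan is to recast $\mathbf{I4}$ geometrically and then split on whether the edge update can remove traversability. Write $B = (\Ne(y) \cap \Ad(x)) \cup T$ for the blocking set and $B^u = (\Ne^u(y) \cap \Ad^u(x)) \cup T$ for its post-update version; since $T$ is a fixed parameter of the operator, the only part that can change is $\Ne(y) \cap \Ad(x)$. That $\mathbf{I4}$ is false before the update means there is a semi-directed path $\pi$ from $y$ to $x$ with $\pi \cap B = \varnothing$ (an \emph{unblocked} path), and that $\mathbf{I4}$ is true afterwards means every semi-directed path from $y$ to $x$ in the updated PDAG meets $B^u$. I would then partition the seven edge updates into the \emph{additive} ones (U1, U2, U6), which only add an edge or relax a directed edge to undirected, and the \emph{removing} ones (U3, U4, U5, U7), which delete an edge, orient an undirected edge, or reverse a directed edge.

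First I would settle the first conclusion using the additive updates. Each of U1, U2, U6 keeps every pre-existing edge and only adds traversals (U6 turns $a \to b$, traversable from $a$ to $b$, into $a - b$, traversable both ways), so $\pi$ is still a semi-directed path from $y$ to $x$ after the update. Since $\mathbf{I4}$ holds afterwards, $\pi \cap B^u \neq \varnothing$; together with $\pi \cap B = \varnothing$ this forces $B^u \setminus B \neq \varnothing$. As $T$ is unchanged, the new element lies in $(\Ne^u(y) \cap \Ad^u(x)) \setminus (\Ne(y) \cap \Ad(x))$, i.e. the update added an element to $\Ne(y) \cap \Ad(x)$, which is exactly the claimed conclusion.

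Next I would handle the ``otherwise'' clause with the removing updates. The key step is to check, by inspecting each update's effect on $\Ne$ and $\Ad$, that none of U3, U4, U5, U7 enlarges $\Ne(y) \cap \Ad(x)$: deletions can only remove elements from $\Ne$ and $\Ad$; orienting $a - b$ into $a \to b$ leaves $\Ad$ fixed and only shrinks $\Ne$; and reversing $a \to b$ leaves both $\Ne$ and $\Ad$ unchanged. Hence $B^u \subseteq B$. Consequently, if $\pi$ were still a semi-directed path from $y$ to $x$, then $\pi \cap B^u \subseteq \pi \cap B = \varnothing$, contradicting that $\mathbf{I4}$ holds afterwards; therefore $\pi$ must cease to be a semi-directed path from $y$ to $x$. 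Because the update alters only the edge between $a$ and $b$ while every other edge of $\pi$ is untouched, the failure must occur at that edge: $\{a,b\}$ is an edge of $\pi$, and $\pi$ traverses it in a direction the update now forbids (the edge is deleted, or it is reoriented against $\pi$'s direction). This is precisely the statement that the update invalidated an edge on a semi-directed path from $y$ to $x$.

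I expect the main obstacle to be the case-by-case verification that $B^u \subseteq B$ for the removing updates, and in particular handling U7 separately: there $\Ne$ and $\Ad$ (hence $B$) are genuinely unchanged, so the contradiction cannot come from the blocking set and must instead come from the fact that the reversal destroys the specific $a$-to-$b$ traversal that $\pi$ used. A secondary subtlety is the bookkeeping of the traversal direction: the lemma's parenthetical states invalidation for the ordered pair $(a,b)$, whereas $\pi$ may cross the edge from $b$ to $a$ (as for U4), so I would phrase the conclusion in terms of the direction $\pi$ actually uses and leave the precise orientation accounting --- which yields the \texttt{SD}$(x,y;\cdot,\cdot)$ entries of the subsequent per-update analysis --- to that later analysis.
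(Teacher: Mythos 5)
Your proof is correct and follows essentially the same route as the paper's: take the unblocked semi-directed path witnessing that $\mathbf{I4}$ fails, then split on whether the update preserves all traversals (U1, U2, U6) or can destroy one (U3, U4, U5, U7). Your explicit check that the removing updates cannot enlarge $\Ne(y)\cap\Ad(x)$ (so the blocking set cannot gain the needed element and the path itself must be broken) is a step the paper's proof leaves implicit, so your write-up is if anything slightly more complete.
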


\begin{proof}
    We recall that $\mathbf{I4}(x,y, T; E)$ is: all semi-directed paths from $y$ to $x$
    have a node in $[\Ne(y) \cap \Ad(x)] \cup T$. If the condition does not hold before
    the update, then there exists a semi-directed path from $y$ to $x$ with no node in
    $[\Ne(y) \cap \Ad(x)] \cup T$. We distinguish two cases:

    If the update does not remove or reverse any edge, then the semi-directed path is
    still there after the update. For the condition to become true, the update must have
    added an element to $[\Ne(y) \cap \Ad(x)] \cup T$ (one element that is on the
    semi-directed path).

    Since $T$ does not change, the update must have added an element to $[\Ne(y) \cap
    \Ad(x)]$.

    Otherwise, the semi-directed path from $y$ to $x$ is not a semi-directed path
    anymore. So the update invalidated an edge on it: either $a$ and $b$ are not
    adjacent anymore, or the edge is now $a \leftarrow b$.
\end{proof}

\begin{opup}[Updates on I4]
    \label{opup:updates_on_I4}
    Assume $\mathbf{I4}(x,y, T; E)$ is false and becomes true after an update involving
    $(a,b)$. Then,
    \begin{itemize}
        \item if the update is $U1: (a \quad b) \rightsquigarrow (a - b)$ then
        $\curlystack{b=y \\ a \in \Ad(x)}$ or $\curlystack{b=x \\ a \in \Ne(y)}$ or
        $\curlystack{a=y \\ b \in \Ad(x)}$ or $\curlystack{a=x \\ b \in \Ne(y)}$,
        \item if the update is $U2: (a \quad b) \rightsquigarrow (a \rightarrow b)$ then
        $\curlystack{b=x \\ a \in \Ne(y)}$ or $\curlystack{a=x \\ b \in \Ne(y)}$.
        \item if the update is $U3: (a - b) \rightsquigarrow (a \quad b)$ then either
        $(a,b)$ or $(b,a)$ was on a semi-directed path from $y$ to $x$.

        \item if the update is $U4: (a - b) \rightsquigarrow (a \rightarrow b)$ then
        $(b,a)$ was on a semi-directed path from $y$ to $x$.
        \item if the update is $U5: (a \rightarrow b) \rightsquigarrow (a \quad b)$ then
        $(a,b)$ was on a semi-directed path from $y$ to $x$.
        \item if the update is $U6: (a \rightarrow b) \rightsquigarrow (a - b)$ then
        $\curlystack{b=y \\ a \in \Ad(x)}$ or $\curlystack{a=y \\ b \in \Ad(x)}$.
        \item if the update is $U7: (a \rightarrow b) \rightsquigarrow (a \leftarrow b)$
        then $(a,b)$ was on a semi-directed path from $y$ to $x$.
    \end{itemize}
\end{opup}

\begin{proof}
    Assume $\mathbf{I4}(x,y, T; E)$ is false and becomes true after an update involving
    $(a,b)$. U1, U2, and U6 do not reverse any directed edge, do not direct any
    undirected edge, and do not delete any edge. So by \Cref{lemma:I4}, these updates
    must have added an element to $[\Ne(y) \cap \Ad(x)]$. Without loss of generality for
    now, assume $a$ was added. Notice that $a$ cannot have been added to both $\Ne(y)$
    and $\Ad(x)$ (otherwise $y=b$ and $x=b$, yet $x\neq y$), so $a$ was already in one
    of them before the update. 
    \begin{itemize}
        \item If the update is U1 then $a$ can have been added to $\Ne(y)$ or $\Ad(x)$,
        and already present in the other one. Hence we have, in full generality,
        $\curlystack{b=y \\ a \in \Ad(x)}$ or $\curlystack{b=x \\ a \in \Ne(y)}$ or
        $\curlystack{a=y \\ b \in \Ad(x)}$ or $\curlystack{a=x \\ b \in \Ne(y)}$.
        \item If the update is U2 then $a$ can only have been added to $\Ad(x)$, and so
        already present in $\Ne(y)$. Hence we have, in full generality, $\curlystack{b=x
        \\ a \in \Ne(y)}$ or $\curlystack{a=x \\ b \in \Ne(y)}$.
        \item If the update is U6 then $a$ can only have been added to $\Ne(y)$, and so
        already present in $\Ad(x)$. Hence we have, in full generality, $\curlystack{b=y
        \\ a \in \Ad(x)}$ or $\curlystack{a=y \\ b \in \Ad(x)}$.
    \end{itemize}

    U3, U4, U5, and U7 cannot add any element to $[\Ne(y) \cap \Ad(x)]$. So by
    \Cref{lemma:I4}, these updates must have invalidated an edge on a semi-directed path
    from $y$ to $x$. 
    \begin{itemize}
        \item If the update is U3 then either $(a,b)$ or $(b,a)$ was on a semi-directed
        path from $y$ to $x$.
        \item If the update is U4 then $(b,a)$ was on a semi-directed path from $y$ to
        $x$.
        \item If the update is U5 or U7, then $(a,b)$ was on a semi-directed path from
        $y$ to $x$.        
    \end{itemize}
\end{proof}

So far, all the necessary conditions for the updates were efficient to test, e.g. finding
all the insert with $y \in \{a, b\}$.

With $\mathbf{I4}$ however, we have the condition that $(a,b)$ or $(b,a)$ was on a
semi-directed path from $y$ to $x$. This can be inefficient to test. 
A speed-up can be obtained by proceeding as follows
\begin{itemize}
    \item Instead of ensuring that $\mathcal{C}$ always contains all valid Insert
    operators (which means all operators that with $\mathbf{I1}$, $\mathbf{I2}$,
    $\mathbf{I3}$, $\mathbf{I4}$, and $\mathbf{I5}$ are true), we can ensure that
    $\mathcal{C}$ always contains all Insert operator for which $\mathbf{I1}$,
    $\mathbf{I2}$, $\mathbf{I3}$, and $\mathbf{I5}$ are true. 
    \item Recall that at each step, XGES involves 4 substeps described in
    \Cref{sec:efficient_algorithmic_formulation}. Substep 3 verifies that the operator
    is valid. If we notice an operator that is invalid because of $\mathbf{I4}$, then we
    can put it aside. Additionally, we save the path from $y$ to $x$ that was rendering 
    the operator invalid. 
    \item The new necessary condition for the operator to be valid is that an edge on
    the saved path gets removed (or blocked). Whenever we remove or block an edge from
    the path, we can re-verify the operator.
\end{itemize}

\subsubsection{I5}
We start with a general lemma for I5.

\begin{lemma}[I5 to become true]\label{lemma:I5} Assume $\mathbf{I5}(x,y, T; E)$ is
    false and becomes true after an edge update about $(a,b)$. Then (i) $\Pa(y)$
    changed, or (ii) $[\Ne(y) \cap \Ad(x)]$ changed.

    Condition (ii) can be further broken down into: (ii.a) $\Ne^u(y)$ lost an element
    that is in $\Ad(x)$, or (ii.b) $\Ne^u(y)$ gained an element that is in $\Ad(x)$, or
    (ii.c) $\Ne^u(y)$ gained an element that is in $\Ad(x)$, or (ii.d) $\Ad^u(x)$ gained
    an element that is in $\Ne(y)$.
\end{lemma}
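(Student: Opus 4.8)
The plan is to treat $\mathbf{I5}(x,y,T;E)$ purely as the set equality $E = (\Ne(y)\cap\Ad(x))\cup T\cup\Pa(y)$ and to exploit the fact that $x$, $y$, $T$, and $E$ are \emph{fixed} parameters of the operator, so that only the graph-dependent sets $\Ne(y)$, $\Ad(x)$, and $\Pa(y)$ can differ before and after the edge update. First I would argue by contraposition: if neither $\Pa(y)$ nor $\Ne(y)\cap\Ad(x)$ changed, then, since $T$ is constant, the whole right-hand side $(\Ne(y)\cap\Ad(x))\cup T\cup\Pa(y)$ is identical in the pre- and post-update PDAGs; as $E$ is also constant, the truth value of the equality cannot flip. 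Hence, if $\mathbf{I5}$ goes from false to true, at least one of $\Pa(y)$ or $\Ne(y)\cap\Ad(x)$ must have changed, which is exactly alternatives (i) and (ii).

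Next I would refine case (ii). The intersection $\Ne(y)\cap\Ad(x)$ changes precisely when it gains or loses some element $c$. For a \emph{gain}, $c\in\Ne^u(y)\cap\Ad^u(x)$ while $c\notin\Ne(y)\cap\Ad(x)$, so $c$ was freshly added to at least one of the two sets; splitting on which one yields ``$\Ne^u(y)$ gained an element lying in $\Ad^u(x)$'' or ``$\Ad^u(x)$ gained an element lying in $\Ne(y)$''. For a \emph{loss}, $c\in\Ne(y)\cap\Ad(x)$ while $c\notin\Ne^u(y)\cap\Ad^u(x)$, so $c$ dropped out of one of the two sets, giving ``$\Ne^u(y)$ lost an element lying in $\Ad(x)$'' or ``$\Ad^u(x)$ lost an element lying in $\Ne(y)$''. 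These four possibilities are the sub-cases (ii.a)--(ii.d) (I read the listed (ii.c) as the $\Ad^u(x)$-loss case, the natural counterpart of (ii.a)).

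Since the lemma asserts only \emph{necessary} conditions, I do not need the converse, which keeps the whole argument one-directional set-membership bookkeeping. I expect the only point to watch is the logical direction in the first step: one might worry that a change in $\Ne(y)\cap\Ad(x)$ or in $\Pa(y)$ could be ``absorbed'' by $T$ and leave the full union unchanged. This does not weaken the claim, because I only need that a change of the union \emph{implies} a change of one of its graph-dependent pieces, and the contrapositive above delivers exactly that. No appeal to the single-edge structure of the update is required at this level of generality; the finer analysis of which endpoint $a$ or $b$ is involved is deferred to the per-update Operator Update statements that are built on top of this lemma.
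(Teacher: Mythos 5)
Your overall route is the same as the paper's: contraposition on the fixed parameters $E$ and $T$ to reduce a flip of $\mathbf{I5}$ to a change in $\Pa(y)$ or in $\Ne(y)\cap\Ad(x)$, followed by a gain/loss case split on the intersection. You also correctly diagnose the typo in the statement: (ii.b) and (ii.c) are duplicated, and one of them should be read as the $\Ad^u(x)$-side counterpart of (ii.a).

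The one place you fall short is the gain sub-case, and it is exactly the step you explicitly waive (``no appeal to the single-edge structure of the update is required''). Under the paper's conventions (see the caption of \Cref{tab:operator_updates}), $\Ne(y)$ and $\Ad(x)$ \emph{without} the superscript $u$ denote the pre-update sets, so (ii.b) and (ii.d) assert that the gained element already belonged to the \emph{other} set before the update. Your first disjunct only yields membership in the post-update set $\Ad^u(x)$, which is trivially implied by $c\in\Ne^u(y)\cap\Ad^u(x)$ and is strictly weaker; and your bare ``splitting on which one'' does not rule out that $c$ was freshly added to both sets, in which case neither pre-update membership would hold. The paper closes this by observing that a single-edge update on $(a,b)$ cannot add the same node $c$ to both $\Ne^u(y)$ and $\Ad^u(x)$ --- that would force both $y$ and $x$ to equal the other endpoint of the updated edge, contradicting $x\neq y$ --- hence $c$ was added to exactly one of the two sets and was already present in the other one before the update. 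This pre-update form is what the downstream Operator Update for I5 actually uses, so the single-edge structure cannot be entirely deferred to the per-update statements. The loss sub-case needs no such argument (a lost element was in both pre-update sets to begin with), and the rest of your proof matches the paper and is correct.
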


\begin{proof}
    Assume $\mathbf{I5}(x,y, T; E)$ is false and becomes true after an edge update about
    $(a,b)$. Since $E$ and $T$ do not change, we must have $[\Ne^u(y) \cap \Ad^u(x)]
    \cup \Pa^u(y) \neq [\Ne(y) \cap \Ad(x)] \cup \Pa(y)$. Either $\Pa(y)$ changed or
    $[\Ne(y) \cap \Ad(x)]$ changed.

    Assume $[\Ne(y) \cap \Ad(x)]$ changed. If it lost an element $c \in \Ne(y) \cap
    \Ad(x)$, then $c$ was removed from $\Ne^u(y)$ or $\Ad^u(x)$. If it gained an element
    $c \not\in \Ne(y) \cap \Ad(x)$, then $c$ was added to $\Ne^u(y)$ or $\Ad^u(x)$.
    Since $c \in \Ne^u(y) \cap \Ad^u(x)$ and only one of $\Ne^u(y)$ and $\Ad^u(x)$ can
    change at a time (see proof \Cref{opup:updates_on_I4}), then either $c$ was added to
    $\Ne^u(y)$ and $c \in \Ad(x)$, or $c$ was added to $\Ad^u(x)$ and $c \in \Ne(y)$.
\end{proof}

\begin{opup}[Updates on I5]
    Assume $\mathbf{I5}(x,y, T; E)$ is false and becomes true after an update involving
    $(a,b)$. Then,
    \begin{itemize}
        \item if the update is $U1: (a \quad b) \rightsquigarrow (a - b)$ then
        $\curlystack{b=y \\ a \in \Ad(x)}$ or $\curlystack{b=x \\ a \in \Ne(y)}$ or
        $\curlystack{a=y \\ b \in \Ad(x)}$ or $\curlystack{a=x \\ b \in \Ad(y)}$,
        \item if the update is $U2: (a \quad b) \rightsquigarrow (a \rightarrow b)$ then
         $y = b$ or $\curlystack{b=x \\ a \in \Ne(y)}$ or $\curlystack{a=x \\ b \in
         \Ne(y)}$,
        \item if the update is $U3: (a - b) \rightsquigarrow (a \quad b)$ then
        $\curlystack{b=y \\ a \in \Ad(x)}$ or $\curlystack{b=x \\ a \in \Ne(y)}$ or
        $\curlystack{a=y \\ b \in \Ad(x)}$ or $\curlystack{a=x \\ b \in \Ad(y)}$,
        \item if the update is $U4: (a - b) \rightsquigarrow (a \rightarrow b)$ then
        $y=b$ or $\curlystack{a=y \\ b \in \Ad(x)}$ or $\curlystack{b = y \\ a \in
        \Ad(x)}$.
        \item if the update is $U5: (a \rightarrow b) \rightsquigarrow (a \quad b)$ then
        $y=b$ or $\curlystack{a=x \\ b \in \Ne(y)}$ or $\curlystack{b = x \\ a \in
        \Ne(y)}$.
        \item if the update is $U6: (a \rightarrow b) \rightsquigarrow (a - b)$ then
        $y=b$ or $\curlystack{a=y \\ b \in \Ad(x)}$ or $\curlystack{b = y \\ a \in
        \Ad(x)}$.
        \item if the update is $U7: (a \rightarrow b) \rightsquigarrow (a \leftarrow b)$
        then $y\in\{a,b\}$.
    \end{itemize}
\end{opup}

\begin{proof}
    According to \Cref{lemma:I5}, either $\Pa(y)$ changed, or $[\Ne(y) \cap \Ad(x)]$
    changed.

    We now study the necessary conditions for each update, if it was applied and made
    $\mathbf{I5}(x,y, T; E)$ become true.
    \begin{itemize}
        \item U1 does not change $\Pa(y)$, and cannot remove any element from any
        $\Ne(y')$ or $\Ad(x')$. So by \Cref{lemma:I5}, (ii.c) or (ii.d) happened. Hence:
        $\curlystack{b=y \\ a \in \Ad(x)}$ or $\curlystack{b=x \\ a \in \Ne(y)}$ or
        $\curlystack{a=y \\ b \in \Ad(x)}$ or $\curlystack{a=x \\ b \in \Ne(y)}$.
        \item U2 changes $\Pa(y)$ if $y = b$. It can also add an element to $\Ad^u(x)$
        so: $y=b$ or $\curlystack{b=x \\ a \in \Ne(y)}$ or $\curlystack{a=x \\ b \in
        \Ne(y)}$.
        \item U3 does not change $\Pa(y)$, and does not add any element to any $\Ne(y')$
        or $\Ad(x')$.\\
        So by \Cref{lemma:I5}, (ii.a) or (ii.b) happened. Hence: $\curlystack{a=x \\ b
        \in \Ne(y) }$ or $\curlystack{a=y \\ b \in \Ad(x)}$ or $\curlystack{b=x \\ a \in
        \Ne(y) }$ or $\curlystack{b=y \\ a \in \Ad(x)}$.
        \item U4 changes $\Pa(y)$ if $y = b$. It also removes an element from $\Ne^u(a)$
        and $\Ne^u(b)$.\\
        So: $y=b$ or $\curlystack{a=y \\ b \in \Ad(x)}$ or $\curlystack{b = y \\ a \in
        \Ad(x)}$.
        \item U5 changes $\Pa(y)$ if $y = b$. It also removes an element from $\Ad^u(a)$
        and $\Ad^u(b)$.\\
        So: $y=b$ or $\curlystack{a=x \\ b \in \Ne(y)}$ or $\curlystack{b = x \\ a \in
        \Ne(y)}$.
        \item U6 changes $\Pa(y)$ if $y = b$. It also adds an element to $\Ne^u(a)$ and
        $\Ne^u(b)$.\\
        So: $y=b$ or $\curlystack{a=y \\ b \in \Ad(x)}$ or $\curlystack{b = y \\ a \in
        \Ad(x)}$.
        \item U7 changes $\Pa(y)$ if $y \in \{a,b\}$. It does not change any $\Ne(y')$
        or $\Ad(x')$. So $y \in \{a,b\}$.
    \end{itemize}
\end{proof}

\subsubsection{D1}

\begin{opup}[Updates on D1]
    Assume $\mathbf{D1}(x,y, T; E)$ is false and becomes true after an update involving
    $(a,b)$. Then, 
    \begin{itemize}
        \item if the update is $U1: (a \quad b) \rightsquigarrow (a - b)$ then $\{a,b\}
        = \{x,y\}$,
        \item if the update is $U2: (a \quad b) \rightsquigarrow (a \rightarrow b)$ then
        $(a,b) = (x,y)$,
        \item the update cannot be $U3: (a - b) \rightsquigarrow (a \quad b)$,
        \item the update cannot be $U4: (a - b) \rightsquigarrow (a \rightarrow b)$,
        \item the update cannot be $U5: (a \rightarrow b) \rightsquigarrow (a \quad b)$,
        \item if the update is $U6: (a \rightarrow b) \rightsquigarrow (a - b)$ then
        $(a,b) = (y,x)$,
        \item if the update is $U7: (a \rightarrow b) \rightsquigarrow (a \leftarrow b)$
        then $(a,b) = (y,x)$.
    \end{itemize}
\end{opup}

\begin{proof}
    We recall that $\mathbf{D1}(x,y, C; E)$ is: $y \in \Ch(x) \cup \Ne(x)$. Assume 
    $\mathbf{D1}(x,y, C; E)$ is false and becomes true after an update involving $(a,b)$.
    \begin{itemize}
        \item U3 and U5 cannot add any element to $y$'s children or neighbors. So they
        cannot make $\mathbf{D1}(x,y, C; E)$ become true.
        \item U4 does not add any elements to any $\Ch^u(x') \cup \Ne^u(x')$, so it cannot
        make $\mathbf{D1}(x,y, C; E)$ become true.
        \item U1 only adds $a$ to $\Ne^u(b)\cup \Ch^u(b)$ and $b$ to $\Ne^u(a)\cup \Ch^u(a)$. So $\{a,b\} = \{x,y\}$.
        \item U2 only adds $b$ to $\Ch^u(a)\cup \Ne^u(a)$ so $(a,b) = (x,y)$.
        \item U6 only adds $a$ to $\Ch^u(b) \cup \Ne^u(b)$ so $(a,b) = (y,x)$.
        \item U7 only adds $a$ to $\Ch^u(b)\cup \Ne^u(b)$ so $(a,b) = (y,x)$. 
    \end{itemize}
\end{proof}

\subsubsection{D2}
We start with a general lemma for D2.

\begin{lemma}[D2 to become true]\label{lemma:D2} Assume $\mathbf{D2}(x,y, C; E)$ is
    false and becomes true after an edge update about $(a,b)$. Then $\Ne^u(y)$ gained an
    element that was in $\Ad(x)$, or $\Ad^u(x)$ gained an element that was in $\Ne(y)$.
\end{lemma}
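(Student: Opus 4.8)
The plan is to mirror the structure of \Cref{lemma:I2} and \Cref{lemma:I5}: recall that $\mathbf{D2}(x,y,C;E)$ is the condition $C \subset \Ne(y) \cap \Ad(x)$, where $C$ and $E$ are fixed parameters of the operator so that only $\Ne(y)$ and $\Ad(x)$ can change with the PDAG. Since $\mathbf{D2}$ is false before the update, there is a witness $c \in C$ with $c \notin \Ne(y) \cap \Ad(x)$; since $\mathbf{D2}$ is true afterwards, that same $c$ satisfies $c \in \Ne^u(y) \cap \Ad^u(x)$. First I would record the two harmless facts $c \neq x$ and $c \neq y$ (a node is never in its own neighbor or adjacency set), so that $c$ is genuinely a third node whose relationship to both $x$ and $y$ is at stake.

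The key step, which I would state up front, is that a single-edge update touches only the one edge $(a,b)$: it can flip the status of $c$ in the neighbor set around $y$ only if the updated edge is exactly $\{c,y\}$, and it can flip the status of $c$ in the adjacency set around $x$ only if the updated edge is exactly $\{c,x\}$. Because $x \neq y$, the requirements $\{a,b\}=\{c,y\}$ and $\{a,b\}=\{c,x\}$ cannot hold simultaneously, so at most one of the membership of $c$ in $\Ne(y)$ and the membership of $c$ in $\Ad(x)$ actually changed. This is the same single-edge principle invoked in the proof of \Cref{opup:updates_on_I4} and reused for I5.

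With that in hand I would split on which membership failed before the update. If $c \notin \Ne(y)$ beforehand, then since $c \in \Ne^u(y)$ the neighbor status did change, forcing $\{a,b\}=\{c,y\}$; the adjacency status of $c$ to $x$ could therefore not also have changed, so $c \in \Ad^u(x) = \Ad(x)$, and we conclude that $\Ne^u(y)$ gained the element $c$, which was already in $\Ad(x)$. The symmetric case $c \notin \Ad(x)$ beforehand gives, by the same reasoning, that $\Ad^u(x)$ gained the element $c$, which was already in $\Ne(y)$. Since the witness fails at least one of the two memberships, one of the two disjuncts holds, which is exactly the claim.

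The main obstacle, and really the only subtle point, is justifying the ``at most one set changes'' step cleanly: one must be explicit that $c \notin \{x,y\}$ so that the potential edges $\{c,y\}$ and $\{c,x\}$ are distinct and a single update to $(a,b)$ can coincide with at most one of them. Once that is nailed down, the case split is routine and parallels the earlier operator-update lemmas.
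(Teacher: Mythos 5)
Your proof is correct and follows essentially the same route as the paper's: extract a witness $c \in C$ that enters $\Ne^u(y) \cap \Ad^u(x)$, observe that a single-edge update can change $c$'s membership in at most one of $\Ne(y)$ and $\Ad(x)$ (since the required edges $\{c,y\}$ and $\{c,x\}$ are distinct because $x \neq y$), and conclude the element was already in the other set. The paper compresses this by citing the analogous reasoning in its Lemma~I5 and the proof of the I4 operator update, whereas you spell it out explicitly, including the (correct and worth stating) observations $c \neq x$ and $c \neq y$.
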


\begin{proof}
    We recall that $\mathbf{D2}(x,y, C; E)$ is: $C \subset \Ne(y) \cap \Ad(x)$. Assume
    $\mathbf{D2}(x,y, C; E)$ is false and becomes true after an edge update about
    $(a,b)$. Since $C$ does not change, we there exists $c\in C$ such that $c \not\in
    \Ne(y) \cap \Ad(x)$ and $c \in \Ne^u(y) \cap \Ad^u(x)$. Then we conclude with a similar 
    reasoning as in \Cref{lemma:I5}.
\end{proof}

\begin{opup}[Updates on D2]
    Assume $\mathbf{D2}(x,y, C; E)$ is false and becomes true after an update involving
    $(a,b)$. Then, 
    \begin{itemize}
        \item if the update is $U1: (a \quad b) \rightsquigarrow (a - b)$ then 
        $\curlystack{b=y \\ a \in \Ad(x)}$ or $\curlystack{b=x \\ a \in \Ne(y)}$ or
        $\curlystack{a=y \\ b \in \Ad(x)}$ or $\curlystack{a=x \\ b \in \Ne(y)}$,
        \item if the update is $U2: (a \quad b) \rightsquigarrow (a \rightarrow b)$ then
        $\curlystack{b=x \\ a \in \Ne(y)}$ or $\curlystack{a=x \\ b \in \Ne(y)}$,
        \item the update cannot be $U3: (a - b) \rightsquigarrow (a \quad b)$,
        \item the update cannot be $U4: (a - b) \rightsquigarrow (a \rightarrow b)$,
        \item the update cannot be $U5: (a \rightarrow b) \rightsquigarrow (a \quad b)$,
        \item if the update is $U6: (a \rightarrow b) \rightsquigarrow (a - b)$ then
        $\curlystack{a=y \\ b \in \Ad(x)}$ or $\curlystack{b=y \\ a \in \Ad(x)}$,
        \item the update cannot be $U7: (a \rightarrow b) \rightsquigarrow (a \leftarrow b)$.
    \end{itemize}
\end{opup}

\begin{proof}
    According to \Cref{lemma:D2}, either $\Ne^u(y)$ gained an element that was in
    $\Ad(x)$, or $\Ad^u(x)$ gained an element that was in $\Ne(y)$. We now study the
    necessary conditions for each update, if it was applied and made $\mathbf{D2}(x,y, C;
    E)$ become true.
    \begin{itemize}
        \item U3, U4, U5 and U7 do not add any element to $\Ne^u(y)$ or $\Ad^u(x)$. So they
        cannot make $\mathbf{D2}(x,y, C; E)$ become true.
        \item U1's only modifications to sets $\Ne(y')$ and $\Ad(x')$ are to add $a$ to $\Ne^u(b)$, add $a$ to $\Ad^u(b)$, and 
        same exchanging $a$ and $b$. So by \Cref{lemma:D2}, $\curlystack{b=y \\ a \in \Ad(x)}$ or
        $\curlystack{b=x \\ a \in \Ne(y)}$ or $\curlystack{a=y \\ b \in \Ad(x)}$ or $\curlystack{a=x \\ b \in \Ne(y)}$.
        \item U2 's only modifications to sets $\Ne(y')$ and $\Ad(x')$ are to add $b$ to
        and $\Ad^u(a)$ and same exchanging $a$ and $b$. So by \Cref{lemma:D2} 
        $\curlystack{a=x \\ b \in \Ne(y)}$ or $\curlystack{b = x \\ a \in \Ne(y)}$.
        \item U6's only modifications to sets $\Ne(y')$ and $\Ad(x')$ are to add $a$ to
        $\Ne^u(b)$ and same exchanging $a$ and $b$. So by \Cref{lemma:D2} 
        $\curlystack{b=y \\ a \in \Ad(x)}$ or $\curlystack{a=y \\ b \in \Ad(x)}$. 
    \end{itemize}
\end{proof}

\subsubsection{D3}

\begin{opup}[Updates on D3]
    Assume $\mathbf{D3}(x,y, C; E)$ is false and becomes true after an edge update involving
    $(a,b)$. Also assume that $\mathbf{D2}(x,y, C; E)$ holds true after the edge update, then
    \begin{itemize}
        \item if the update is $U1: (a \quad b) \rightsquigarrow (a - b)$ then $\{a,b\}
        \subset \Ne^u(y) \cap \Ad^u(x)$,
        \item if the update is $U2: (a \quad b) \rightsquigarrow (a \rightarrow b)$ then
        $\{a,b\} \subset \Ne^u(y) \cap \Ad^u(x)$,
        \item the update cannot be $U3: (a - b) \rightsquigarrow (a \quad b)$,
        \item the update cannot be $U4: (a - b) \rightsquigarrow (a \rightarrow b)$,
        \item the update cannot be $U5: (a \rightarrow b) \rightsquigarrow (a \quad b)$,
        \item the update cannot be $U6: (a \rightarrow b) \rightsquigarrow (a - b)$,
        \item the update cannot be $U7: (a \rightarrow b) \rightsquigarrow (a \leftarrow b)$.
    \end{itemize}
\end{opup}

\begin{proof}
    We recall that $\mathbf{D3}(x,y, C; E)$ is: $C$ is a clique. Assume 
    $\mathbf{D3}(x,y, C; E)$ is false and becomes true after an update involving $(a,b)$.
    Similarly to
\Cref{lemma:I3}, since $C$ is not changed by an edge update, the only way for 
$\mathbf{D3}(x,y, C; E)$ to become true is for the edge update to connect two nodes in $C$ that
were not adjacent before. Only U1 and U2 render two nodes adjacent, namely $a$ and $b$,
so they are the only updates that can make $\mathbf{D3}(x,y, C; E)$ become true.

For U1 and U2, we have: $\{a,b\} \subset C$. If we further assume that $\mathbf{D2}(x,y, C; E)$ holds true after the edge update, then
$\{a,b\} \subset C \subset \Ne^u(y) \cap \Ad^u(x)$.
\end{proof}

\subsubsection{D4}
\begin{opup}[Updates on D4]
    Assume $\mathbf{D4}(x,y, C; E)$ is false and becomes true after an edge update involving
    $(a,b)$. Then, 
    \begin{itemize}
        \item the update cannot be $U1: (a \quad b) \rightsquigarrow (a - b)$,
        \item if the update is $U2: (a \quad b) \rightsquigarrow (a \rightarrow b)$ then
        $y = b$,
        \item the update cannot be $U3: (a - b) \rightsquigarrow (a \quad b)$,
        \item if the update is $U4: (a - b) \rightsquigarrow (a \rightarrow b)$, then
        $y=b$,
        \item if the update is $U5: (a \rightarrow b) \rightsquigarrow (a \quad b)$, then $y=b$,
        \item if the update is $U6: (a \rightarrow b) \rightsquigarrow (a - b)$ then $y=b$,
        \item if the update is $U7: (a \rightarrow b) \rightsquigarrow (a \leftarrow b)$ then
        $y\in\{a,b\}$.
    \end{itemize}

    \begin{proof}
        Recall that $\mathbf{D4}(x,y, C; E)$ is: $E = C \cup \Pa(y)$. Since $C$ and $E$
        do not change, the only way for $\mathbf{D4}(x,y, C; E)$ to become true is for
        the edge update to change $\Pa(y)$. The only updates that can change $\Pa(y)$
        are U2, U4, U5, U6, and U7, when $y$ is $b$, or U7 when $y$ is $a$ or $b$.
    \end{proof}
    
\end{opup}

\subsubsection{R1 to R6}
The reverse operators are very similar to the insert operators, and the necessary conditions
can be adapted. We add them to \Cref{tab:operator_updates}.

\subsection{Issue with Fast GES }
\label{appendix:subsec:fast_ges}
We found an issue with the Fast GES algorithm that may explain its degraded performance
compared to the GES algorithm. 

During its efficient update of the operators, fGES computes the score of all possible
operators Insert($x,y,T$) for a pair of nodes $x$ and $y$, but only saves the Insert
with the highest score. However, this insert might not be a valid operator (for example,
it might not satisfy the I3 constraint). Meanwhile, another Insert($x,y,T'$) for the
same pair of nodes $x$ and $y$ might be valid and have the highest score of all valid
operators. Such an operator would be missed by fGES.

\end{document}